\newtheorem{remark}{Remark}
\newcommand{\Ex}{{\mathbb{E}}}
\newcommand{\R}{{\mathbb{R}}}
\newcommand{\inv}{\text{inv}}
\newcommand{\mean}{\mu}
\newcommand{\trueCost}{\bar{\mathcal C}}
\newcommand{\Reg}{\text{Regret}}
\newcommand{\mrpx}{{\mathcal M}(x,\s_1)}
\newcommand{\mrpxs}{{\mathcal M}(x,\s)}
\newcommand{\s}{{\bf{s}}}
\newcommand{\range}{[0, \rmax]}
\newcommand{\todoS}[1]{}
\newcommand{\st}{{\mathcal S}}
\newcommand{\mypara}[1]{\vspace*{0.1in}\noindent {\bf #1}}
\newcommand{\event}{{\mathcal E}}
\newcommand{\rmax}{{U}}
\newcommand{\Hval}{{576\max(h,p)(L+1)\rmax}}
\newcommand{\Kval}{{\log_{4/3}(T)}}
\newcommand{\Cv}{{\bf{C}}}
\newcommand{\ci}{[LB(\Cv_N^a), UB(\Cv^a_N)]}
\begin{document}

\title[Improved regret bounds for inventory management]{Learning in structured MDPs with convex cost functions: Improved regret bounds for inventory management}

\author{Shipra Agrawal and Randy Jia}

\begin{abstract}
We consider a stochastic inventory control problem under censored demands, lost sales, and positive lead times. This is a fundamental problem in inventory management, with significant literature establishing near-optimality of a simple class of policies called ``base-stock policies'' for the underlying Markov Decision Process (MDP), as well as convexity of long run average-cost under those policies. We consider the relatively less studied problem of designing a learning algorithm for this problem when the underlying demand distribution is unknown. The goal is to bound regret of the  algorithm when compared to the best base-stock policy. We utilize the convexity properties and a newly derived bound on bias of base-stock policies to establish a connection to stochastic convex bandit optimization. 

Our main contribution is a learning algorithm with a regret bound of $\tilde{O}(L\sqrt{T}+D)$ for the inventory control problem. Here $L$ is the fixed and known lead time, and $D$ is an unknown parameter of the demand distribution described roughly as the number of time steps needed to generate enough demand for depleting one unit of inventory. Notably, even though the state space of the underlying MDP is continuous and $L$-dimensional, our regret bounds depend linearly on $L$. Our results significantly improve the previously best known regret bounds for this problem where the dependence on $L$ was exponential and many further assumptions on demand distribution were required. The techniques presented here may be of independent interest for other settings that involve large structured MDPs but with convex cost functions.
\end{abstract}

\maketitle

\section{Introduction}
\label{sec:intro}
Many operations management problems involve making  decisions sequentially over time, where the outcome of a decision may depend on the current state of the system in addition to an uncertain demand or customer arrival process. This includes several online decision making problems in revenue and supply chain management. There, the sales revenue and supply costs incurred as a result of pricing and ordering decisions may depend on the current level of inventory in stock, back orders, outstanding orders etc.,  in addition to the uncertain demand and/or supply for the products. Markov Decision Process (MDP) is a useful framework for modeling these sequential decision making problems. In a typical formulation, the state of the MDP captures the current position of inventory. The reward (observed sales) depends on the the current state of the inventory in addition to the demand. The stochastic state transition and reward generation models capture the uncertainty in demand. 

A fundamental yet notoriously difficult problem in this area is the periodic inventory control problem under {\it positive lead times} and {\it lost sales} \cite{zipkin2000foundations, zipkin2008old}. 
In this problem, in each of the $T$ sequential decision making periods, the decision maker takes into account the current on-hand inventory and the pipeline of outstanding orders to decide the new order. There is a fixed delay (i.e., lead time) between placing an order and receiving it. A random demand is generated from a static distribution independently in every period. However, the demand information is {\it censored} in the sense that  the decision maker observes only the sales, i.e., the minimum of demand and on-hand inventory. Any unmet demand is lost, and incurs a penalty called lost sales penalty. 
Any leftover inventory at the end of a period incurs a holding cost. The aim is to minimize the aggregate long term inventory holding cost and lost sales penalty. 
There is a significant existing research that develops a Markov model (or semi-Markov model due to unobserved lost sales penalty) for this problem, and studies methods for computing optimal policies, assuming the demand distribution is {\it either known or can be efficiently simulated} (e.g., see survey in \cite{bijvank2011lost}). 
In particular, a simple class of policies called base-stock policies have been shown to be asymptotically \footnote{with increase in lost sales penalty} optimal for this problem \cite{huh2009asymptotic, bijvank2011lost}. 
Under a base-stock policy, the inventory position is always maintained at a target ``base-stock level.'' Notably, when using a base-stock policy, the infinite horizon average cost function for the inventory control MDP can be shown to be convex in the base-stock level \cite{janakiraman2004lost}. Therefore, under known demand model, convex optimization can be used to compute the optimal base-stock policy. 

In this paper, we considered the relatively less studied problem of periodic inventory control when the decision maker {\it does not know the demand distribution a priori}. The goal is to design a learning algorithm that can use the observed outcomes of past decisions to implicitly learn the unknown underlying MDP model and adaptively improve the decision making strategy over time, aka a reinforcement learning algorithm.  Following the near-optimality of base-stock policies, we aim to bound regret of the learning algorithm when compared to the best base-stock policy as benchmark.

The two main challenges in designing an efficient learning algorithm for the inventory control problem described above are presented by the {\it censored demand} and the {\it positive lead time}.  The censored demand assumption results in an exploration-exploitation tradeoff for the learning algorithm. 
Since the decision maker can only observe the sales, which is the minimum of demand and the on-hand inventory for a product, 
the quality of samples available for demand estimation of a product depend crucially on the past ordering decisions. For example, suppose that due to past ordering policies, a certain product was maintained at a low inventory level for most of the past sales periods, then the higher quantiles of the demand distribution for that product would be unobserved. Therefore, in order to ensure accurate demand learning, large inventory states need to be sufficiently explored. However, this exploration needs to be balanced with the holding cost incurred for any leftover inventory. There has been recent work on exploration-exploitation algorithms for regret minimization in finite MDPs with regret bounds that depend linearly or sublinearly on the size of the state space and action space (e.g., \cite{jaksch2010near, bartlett2009regal, agrawal2017optimistic}). However, the positive lead time in delivery of an order results in a much enlarged state space (exponential in lead time) for the inventory control problem, since the state needs to track all the outstanding orders in the pipeline. There is a further issue of discretization, since the  state space (inventory position) and action space (orders) is continuous. Discretizing over a grid would give a further enlarged state space and action space. As a result, none of the above-mentioned reinforcement learning techniques can be applied directly to obtain useful regret bounds for the inventory control problem considered here.

The main insight in this paper is that even though the state space is large, the convexity of the average cost function under the benchmark policies (here, base-stock policies) can be used to design an efficient learning algorithm for this MDP. 
We use the relation between  {\it bias} and {\it infinite horizon average cost} of a policy given by Bellman equations, to provide a connection between stochastic convex bandit optimization and the problem of learning and optimization in such MDPs. Specifically, we build upon the algorithm for stochastic convex optimization with bandit feedback from \cite{agarwal2011stochastic} to derive a simple algorithm that achieves an $\tilde{O}(L\sqrt{T} + D)$ regret bound for the inventory control problem. Here, $L$ is the fixed and known lead time. And, $D$ is a parameter of the demand distribution $F$, defined as the expected number of independent draws needed from distribution $F$ for the sum to exceed $1$. Importantly, although our regret bound depends on $D$, our algorithm does not need to know this parameter.

Our regret bound substantially improves the existing results for this problem by \cite{zhang2017closing, huh2009adaptive}, where the regret bounds grow exponentially with the lead time $L$ (roughly as $D^L \sqrt{T}$), and many further assumptions on demand distribution are required for the bounds to hold. A more detailed comparison to related work is provided later in the text. 
More importantly, we believe that our algorithm design and analysis techniques can be applied in an almost blackbox manner for minimizing regret in other problem settings involving MDPs that have convex cost function under benchmark policies. 
Such convexity results are available for many other operations management problems, for example, for several  formulations of admission control and server allocation problems in queuing \cite{weber1980note, lee1983note, shanthikumar1987optimal}.
Therefore, the techniques presented here may be of independent interest. 

\mypara{Organization.} The rest of the paper is organized as follows.  In the next two subsections, we provide the formal problem definition and describe our main results, along with a precise comparison of our regret bounds to closely related work.
In section \S\ref{sec:technical}, we use an MRP (Markov Reward Process) formulation to prove some key technical results, including convexity and bounded bias  of base-stock policies. These insights form the basis of algorithm design and regret analysis in sections \S\ref{sec:algo} and \S\ref{sec:reg} respectively. We conclude in section \S\ref{sec:conc}.

\subsection{Problem formulation}
\label{sec:probdef}
We consider a single product stochastic inventory control problem with lost sales and positive lead times. The problem setting considered here is similar to the setting considered in \citep{zhang2017closing, huh2009asymptotic}.  In this, an inventory manager makes sequential decisions in discrete time steps $t=1,\ldots, T$. In the beginning of every time step $t$, the inventory manager observes the current inventory level $\inv_t$, and $L$ previous unfulfilled orders in the pipeline, denoted as $o_{t-L}, o_{t-L+1}, \ldots, o_{t-1}$, for a single product.   Here, $L \geq 1$ is the lead time defined as the delay (number of time steps) between placing an order and receiving it. Initially in step $1$, there is no inventory ($\inv_1=0$) and no unfulfilled orders. Based on this information, the manager decides the amount $o_t\in \R$ of the product to order in the current time step.


The next inventory position is then obtained through the following sequence of events. First, the order $o_{t-L}$ that was made $L$ time steps earlier, arrives, so that the on-hand inventory level becomes $I_t=\inv_t + o_{t-L}$. Then, an {\it unobserved} demand $d_t \ge 0$  is generated from an unknown demand distribution $F$, independent of the previous time steps. Sales is the minimum of the on-hand inventory and demand, i.e., sales $y_t:=\min\{I_t, d_t\}$.  The decision maker only observes the sales $y_t$ and not the actual demand $d_t$, the demand information is therefore {\it censored}. 
A holding cost of $h(I_t-d_t)^+$ is incurred on remaining inventory and a lost sales penalty of $p(d_t-I_t)^+$ is incurred on part of the demand that could not be served due to insufficient on-hand inventory. That is, the cost incurred at end of step $t$ is,
\begin{equation} 
\label{eq:trueCost}
\trueCost_t = h(I_t-d_t)^+ + p(d_t-I_t)^+.
\end{equation}
Here, $h$ and $p$ are pre-specified constants denoting per unit holding cost  and per unit lost sales penalty, respectively. Note that the lost sales and therefore the lost sales penalty is unobserved by the decision maker.

\begin{figure}[t]
\centering
\includegraphics[width = \textwidth]{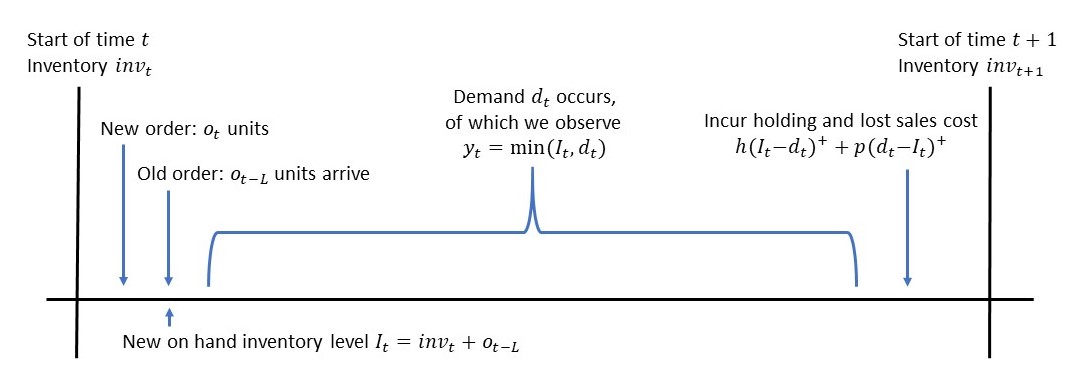}
\caption{Timing of arrival of orders and demand at time $t$.}
\label{fig:insert}
\end{figure}

Figure \ref{fig:insert} illustrates the timing of arrival of orders and demand. 
The next step $t+1$ begins with the leftover inventory 
\begin{equation}
  \inv_{t+1} := (I_t-d_t)^+ = (\inv_t + o_{t-L} -d_t)^+  
\end{equation}
and the new pipeline of outstanding orders $ o_{t-L+1}, \ldots, o_{t}$.

An online learning algorithm for this problem would sequentially decide the orders $o_1, \ldots, o_T$, under demand censoring, and without a priori knowing the demand distribution. The objective is to minimize the total expected cost $\Ex[\sum_{t=1}^T \trueCost_t$].

\mypara{Base-stock policies} aka order up to policies form an important class of policies for the inventory control problem.
Under this policy, the inventory manger always orders a  quantity  that brings the total inventory position (i.e., sum of leftover inventory plus outstanding orders) to some fixed value known as the base-stock level, if possible. Specifically,  let in the beginning of step $t$, the leftover inventory be $\inv_t$ and the outstanding orders be $o_{t-L}, \ldots, o_{t-1}$. Then, on using a base-stock policy with level $x$, the order $o_t$ in step $t$ is given by 
\begin{center}
$o_t=(x-\inv_t-\sum_{i=1}^L o_{t-i})^+$. 
\end{center}
\citep{huh2009asymptotic, zipkin2008old} provide empirical results that show that base-stock policies work well in many applications, and furthermore, \cite{huh2009asymptotic} show that as the ratio of lost sales to holding cost penalty increases to infinity, the ratio of the cost of the best base-stock policy to the optimal cost converges to $1$. Since the ratio of lost sales to holding cost penalty is typically large in applications, best base-stock policy can be considered close to optimal. 

\mypara{Regret against the best base-stock policy.}
Considering the asymptotic optimality of base-stock policies, several past works consider a more tractable objective of minimizing regret of an online algorithm compared to the best base-stock policy \citep{zhang2017closing, huh2009adaptive}. 

Let $\trueCost^x_t, t=1,2 \ldots, $ denote the sequence of costs incurred on running the base-stock policy with level $x$.  Define $\lambda^x$ as the expected infinite horizon  average cost of the base-stock policy, when starting from no inventory or outstanding orders, i.e.,
\begin{equation}
\label{eq:opt}
\lambda^x:= \Ex\left[\lim_{T\rightarrow \infty} \frac{1}{T} \sum_{t=1}^T \trueCost^x_t \left|\right.\  \inv_1=0
\right]
\end{equation}

Following result from \cite{janakiraman2004lost} shows that 
this long-run average cost is convex in $x$. 
\begin{lemma}[derived from Theorem 12 of \cite{janakiraman2004lost}] \label{lem:convex_ref}
 Given a demand distribution $F$ such that $F(0) > 0$, i.e., there is non-zero probability of zero demand. Then, for any $x \geq 0$, 
 the expected infinite horizon average cost (i.e., $\lambda^x$) is convex in $x$. 
\end{lemma}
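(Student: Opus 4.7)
The statement is flagged as being derived directly from Theorem~12 of \cite{janakiraman2004lost}, so my plan is to reduce the claim to that result in two steps: first establish that the limit defining $\lambda^x$ exists and equals a stationary expectation, then import the sample-path convexity argument of Janakiraman--Roundy.

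Step 1 (ergodicity). Under a base-stock policy with target $x$, the augmented state $S_t = (\inv_t, o_{t-L}, \ldots, o_{t-1})$ is a time-homogeneous Markov chain on a subset of $\R_+^{L+1}$. Once $t > L$ the inventory position is always at most $x$, so the chain lives on a bounded domain. The hypothesis $F(0)>0$ is used here to obtain a minorization / small-set condition at $(x,0,\ldots,0)$: with probability at least $F(0)^{L+1}$ we see $L{+}1$ consecutive periods of zero demand, during which no new orders are placed and the whole pipeline drains onto the shelf. This yields positive Harris recurrence and a unique stationary distribution $\pi^x$, so the limit in \eqref{eq:opt} exists and equals $\lambda^x = \Ex_{\pi^x}[\trueCost_t]$.

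Step 2 (sample-path convexity). Couple the trajectories driven by every base-stock level $x$ on a common demand sequence $(d_1, d_2, \ldots)$. Unrolling the recursion
\begin{equation*}
o_t(x) = \Bigl(x - \inv_t(x) - \sum_{i=1}^{L} o_{t-i}(x)\Bigr)^+, \qquad \inv_{t+1}(x) = \bigl(\inv_t(x) + o_{t-L}(x) - d_t\bigr)^+ ,
\end{equation*}
one sees that $\inv_t(\cdot), o_t(\cdot), I_t(\cdot)$ are piecewise-linear functions of $x$. The content of Janakiraman--Roundy's Theorem~12 is the statement that on every sample path the per-period cost $c_t(x) := h(I_t(x)-d_t)^+ + p(d_t - I_t(x))^+$ is a convex function of $x$; equivalently, the breakpoints and slopes of the piecewise-linear $I_t(\cdot)$ align with the kink of $c(\cdot)$ at $d_t$ in such a way that the composition stays convex. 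Their argument is an induction on $t$ tracking how the $(\cdot)^+$ operators interact through the pipeline, which I would adopt verbatim.

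Step 3 (conclude). Taking expectation over the demand sequence preserves convexity, so $x \mapsto \Ex[c_t(x)]$ is convex for each $t$, hence $x \mapsto \tfrac{1}{T}\sum_{t=1}^T \Ex[c_t(x)]$ is convex. Passing to the limit $T \to \infty$ using Step~1 shows that $\lambda^x$ is a pointwise limit of convex functions and is therefore convex.

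The main obstacle is Step~2: because $c(\cdot)$ is a V-shaped convex function and $I_t(x)$ is in general neither globally convex nor globally concave in $x$ (it is a piecewise-linear function built from nested $(\cdot)^+$ operations on $x$ and past demands), convexity of $c_t(x)=c(I_t(x))$ in $x$ does not follow from an elementary composition rule and must be tracked breakpoint-by-breakpoint. This is exactly the delicate content imported from \cite{janakiraman2004lost}; the role of $F(0)>0$, by contrast, is purely to secure the ergodic representation in Step~1.
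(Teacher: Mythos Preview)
The paper does not supply its own proof of this lemma: it is stated as a direct import from Theorem~12 of \cite{janakiraman2004lost}, and the only argument the paper adds is the remark immediately following the statement, which reconciles the starting condition (the paper's $\lambda^x$ is defined with $\inv_1=0$, whereas Janakiraman--Roundy start from $\inv_1=x$; since base-stock policy $x$ from zero inventory deterministically reaches the state with on-hand inventory $x$ and empty pipeline in exactly $L$ steps, the two long-run averages coincide). Your three-step sketch is therefore considerably more detailed than what the paper offers, and its overall shape---sample-path convexity from \cite{janakiraman2004lost} plus a limiting argument---matches the intended import.

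Two differences are worth flagging. First, your Step~1 is heavier than needed for the starting-state issue: the Harris-recurrence/minorization route works, but the paper's observation that the chain reaches a \emph{fixed} state in $L$ deterministic steps already makes $\lambda^x$ independent of the starting convention without any stochastic argument. Second, and more substantively, your closing assertion that ``the role of $F(0)>0$ is purely to secure the ergodic representation in Step~1'' likely mislocates the hypothesis. The paper (see its later remark following Theorem~\ref{thm:reg}) indicates that $F(0)>0$ is the assumption under which Theorem~12 of \cite{janakiraman2004lost} itself is stated, and explicitly notes that alternative conditions (finite support, discretization) can substitute for it in the \emph{convexity} argument---not in an ergodicity step. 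So the condition appears to enter Janakiraman--Roundy's inductive convexity proof proper, not merely the existence of the limit. Since both you and the paper ultimately treat the lemma as a black-box citation, this does not break your argument, but the side comment about where $F(0)>0$ is consumed should be softened or dropped.
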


\begin{remark}
 Theorem 12 of \cite{janakiraman2004lost} actually proves convexity of average cost when starting from an inventory level $\inv_1=x$. However, in definition of $\lambda^x$, we assumed starting inventory $\inv_1=0$. On starting from no-inventory and outstanding orders, and using base-stock policy with level $x$, the system will reach the state with inventory $x$ and no-outstanding orders in finite (exactly $L$) steps. Therefore, $\lambda^x$ is same as the expected infinite horizon average cost on starting with $\inv_1=x$. 
\end{remark}
Regret of an algorithm is defined as difference in its total cost in time $T$ compared to the asymptotic cost of the best base-stock policy. That is,

\begin{equation}
\label{eq:reg_def}
\Reg(T) := \Ex\left[\sum_{t=1}^T \trueCost_t \right] - T \left(\min_{x\in \range}\lambda^x\right)
\end{equation}
where $\trueCost_t, t=1,2\ldots,$ is the sequence of costs incurred on running the algorithm starting from no-inventory and no outstanding orders. $\range$ is some pre-specified range of base-stock levels to be considered.

\subsection{Main results}
Before we formally state our main theorem, we need to define $D$, 
a parameter of the demand distribution $F$ that appears in our regret bounds. However, it is important to note that our algorithm does not need to know the parameter $D$. 

\begin{definition} We define $D$ as the expected number of independent samples needed from distribution $F$  for the sum of those samples to exceed $1$. More precisely, let $d_1,d_2,  d_3,\ldots,$ be a sequence of independent samples generated from the demand distribution $F$, and let $\tau$ be the minimum number such that $\sum_{i=1}^\tau d_i \geq 1$. Then define $D := \Ex[\tau]$. We also refer to $D$ as the expected time to deplete one unit of inventory. We will assume that the demand distribution $F$ is such that $D$ is finite.
\end{definition}
Our main result is stated as follows.
\begin{theorem} 
\label{thm:reg}
Given a demand distribution $F$ such that $F(0) > 0$ and the expected time $D$ to deplete one unit of inventory is finite. Then, for $L\geq 0$, there exists an algorithm (Algorithm \ref{alg}) for the inventory control problem with regret bounded as:
\[\Reg(T) \leq  \tilde{O}\left( D\max(h,p)\rmax^2 + (L+1)\max(h,p)\rmax\sqrt{T}  \right),\]
 with probability at least $1-\frac{1}{T}$. 
For $T\geq (D\rmax)^2$, this implies a regret bound of 
\[\Reg(T) \leq \tilde{O}\left( (L+1)\max(h,p) \rmax\sqrt{T} \right).\]
Here $\tilde{O}(\cdot)$ hides logarithmic factors in $h, p, \rmax, L, T$, and absolute constants.
\end{theorem}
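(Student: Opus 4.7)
The plan is to reduce the MDP problem to stochastic convex optimization with bandit feedback over the one-dimensional interval $\range$ of base-stock levels, then run the algorithm of \cite{agarwal2011stochastic} on top. Lemma~\ref{lem:convex_ref} already delivers convexity of $\lambda^x$, so what remains is to show that (i) empirical cost averages collected while committing to a single base-stock level are faithful estimators of $\lambda^x$ with an explicit bias bound, and (ii) these estimators plugged into a bandit convex optimizer assemble into the claimed regret. Concretely, I would divide the horizon into epochs, commit to a single base-stock level $x_k$ for $N_k$ consecutive periods within epoch $k$, and report the empirical average cost to the outer bandit algorithm as a noisy evaluation of $\lambda^{x_k}$.

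The central technical step, and the one I expect to dominate the analysis, is bounding the bias of this empirical estimator relative to $\lambda^{x_k}$---the chain enters each epoch in an arbitrary state inherited from the previous epoch rather than in steady state. Using the MRP formulation of \S\ref{sec:technical}, I would couple two copies of the base-stock chain at level $x$ driven by the same demand sequence, one started at the inherited state and one at an arbitrary reference state. Since the base-stock rule replaces exactly the demand consumed, the two pipelines carry identical outstanding orders after $L$ steps; consequently the instantaneous cost difference between the two trajectories is at most $\max(h,p)\rmax$---the difference in on-hand inventories---rather than the full per-period cost range $(L+1)\max(h,p)\rmax$. The expected time for the two on-hand inventories to merge is dominated by the expected time to deplete an inventory gap of at most $\rmax$, which is $O(D\rmax)$ by the definition of $D$. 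Multiplying yields a per-epoch bias bound of $\tilde{O}(D\max(h,p)\rmax^{2})$, crucially free of $L$---this $L$-independence is what allows the final regret bound to be only linear in $L$.

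Finally, I would plug these empirical averages into the $1$-dimensional stochastic convex bandit algorithm of \cite{agarwal2011stochastic}, which runs in $\tilde{O}(1)$ phases of a bisection-style search and achieves cumulative excess function value $\tilde{O}(C_{\max}\sqrt{T})$ against $C_{\max}$-bounded noisy queries, once Azuma--Hoeffding delivers concentration at rate $C_{\max}/\sqrt{N_k}$ per epoch with $C_{\max} = O((L+1)\max(h,p)\rmax)$. This contributes $\tilde{O}((L+1)\max(h,p)\rmax\sqrt{T})$ to the regret, while the per-epoch bias bound is incurred only $\tilde{O}(1)$ times across the $\tilde{O}(1)$ epochs, contributing $\tilde{O}(D\max(h,p)\rmax^{2})$ in total. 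Summing gives the claimed bound, and the high-probability statement follows by a union bound over the concentration events of the $\tilde{O}(1)$ epochs.
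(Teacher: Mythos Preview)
Your high-level architecture---reduce to one-dimensional stochastic convex bandits over base-stock levels and invoke \cite{agarwal2011stochastic}---is exactly the paper's. The gaps are in the two technical ingredients you label (i) and (ii).

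\textbf{Unobserved cost.} You propose to feed ``the empirical average cost'' to the bandit optimizer as a noisy evaluation of $\lambda^{x}$, but $\trueCost_t$ contains the lost-sales term $p(d_t-I_t)^+$, which is \emph{not observed} under censoring. You need the pseudo-cost substitution $C_t=\trueCost_t-pd_t = h(I_t-y_t)-py_t$ (observable) together with Lemma~\ref{lem:reg_equiv} to show that regret is preserved under this shift. Without it you have no estimator at all.

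\textbf{The coupling step is false as stated.} You write: ``since the base-stock rule replaces exactly the demand consumed, the two pipelines carry identical outstanding orders after $L$ steps.'' Under base-stock, the order placed at time $t$ equals the \emph{sales} $y_t=\min(I_t,d_t)$, not the demand $d_t$. Two coupled chains with different on-hand inventory $I_t\neq I_t'$ see different censored sales whenever one stocks out and the other does not, hence place different orders, and the pipelines do \emph{not} synchronize after $L$ steps. Consequently the claim that the on-hand inventories merge in $O(D\rmax)$ expected time is unjustified; the paper explicitly points out (proof of Lemma~\ref{lem:value}) that merge-time reasoning of this kind is what produces the \emph{exponential-in-$L$} bounds in prior work. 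Your ``$L$-free bias'' conclusion therefore does not follow.

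\textbf{What actually works.} The paper obtains a bias bound that is \emph{linear} in $L$, not $L$-free: Lemma~\ref{lem:value} shows $|V^x_T(\s)-V^x_T(\s')|\le O(\max(h,p)Lx)$ for any two starting states in $\st^x$, by tracking cumulative sales and cumulative on-hand inventory differences across alternating $\succeq/\preceq$ phases rather than waiting for the chains to couple. This yields the bounded-bias Lemma~\ref{lem:biasBound} and the concentration Lemma~\ref{cor:conc} with rate $O(\max(h,p)Lx/\sqrt{N})$; that is where the $(L+1)\max(h,p)\rmax\sqrt{T}$ term comes from. The additive $D\max(h,p)\rmax^2$ term arises from a different mechanism you do not mention: when the algorithm needs to switch to a \emph{lower} base-stock level it must first play $\pi^0$ until the inventory position depletes below the new target, and each such depletion takes $O(L+D\rmax)$ steps in expectation. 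This cost is paid once per round, over $\tilde O(1)$ rounds, which produces the $\tilde O(D\max(h,p)\rmax^2)$ term.
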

Here, constants $\max(h,p)$ and $\rmax$ define the scale of the problem. Note that the regret bound has a very mild (additive) dependence on the parameter $D$ of the demand distribution, defined as the expected time to deplete one unit of inventory. We conjecture that such a dependence on $D$ in the regret may be unavoidable; any time a learning algorithm reaches an inventory level higher than the optimal base-stock policy, 
it must necessarily wait time steps roughly proportional to $D$ for the inventory to deplete, in order to play a better policy. Only an algorithm that never overshoots the optimal inventory level may avoid incurring this waiting time. However, without a priori knowledge of the optimal level, an exploration based learning algorithm is unlikely to avoid this completely. This dependence also reminds of the dependence on diameter of an MDP in regret bounds for RL algorithms (e.g. see \cite{jaksch2010near, agrawal2017optimistic, tewari2008optimistic}). 
\begin{remark}
The condition $F(0)>0$ in the above theorem is required only for  using the result on convexity of infinite horizon average cost in Theorem 12 of \cite{janakiraman2004lost} (see Lemma \ref{lem:convex_ref}). The convexity result can in fact also be shown to hold under some alternate conditions like finite support of demand, or under sufficient discretization of demand.  
\end{remark}

\begin{remark}
\label{rem:alter}
One may consider an alternative regret definition that compares difference in total cost of the algorithm in time $T$ to the total cost of the best base-stock policy. That is,
\[\Reg'(T) := \Ex\left[\sum_{t=1}^T (\trueCost_t - \bar{C}_t^{x^*})\right]  \text{ where } x^*=\arg \min_{x\in [0,U]} \lambda^x. \]
  We show that our proof implies a bound similar to Theorem \ref{thm:reg} for this alternative regret definition. 
\end{remark}

\subsection{Comparison to related work}
Some earlier works on exploration-exploitation algorithms for inventory control problem \cite{huh2009nonparametric, besbes2013implications} provide $\tilde{O}(\sqrt{T})$ regret bounds, but under zero-lead time \cite{huh2009nonparametric} and/or perishable inventory \cite{besbes2013implications} assumptions.  
The inventory control problem considered here is exactly the same as that considered in recent work by \cite{zhang2017closing}, and earlier work by \cite{huh2009adaptive}. Therefore, we provide a precise comparison to the results obtained in those works. Our result matches the $O(\sqrt{T})$ dependence on $T$ in \cite{zhang2017closing}, improving on the $O(T^{\frac{2}{3}})$ dependence originally given in \cite{huh2009adaptive}. Further, it can be shown (see \cite{zhang2017closing}, Proposition 1) that for $T>5$, the expected regret for any learning algorithm in this setting is lower bounded by $\Omega(\sqrt{T})$, and thus our bound is optimal in $T$ (within logarithmic factors). 

More importantly, our regret bound scales linearly in $L$ as opposed to the exponential dependence on $L$ in \cite{zhang2017closing}.
Specifically, the regret bound achieved by \cite{zhang2017closing} is of order $\tilde{O}(\max(h,p)^2\rmax^2\left(\frac{1}{c}\right)^L \sqrt{T})$. Besides exponential dependence on $L$, the constant $c$ here is given by product of some positive probabilities for demand to take values in certain ranges, which requires several further assumptions on distribution $F$ (see Assumption 1 of \cite{zhang2017closing}). 

Among other related work, the results in \cite{lugosi2017hardness, bartok2014partial, besbes2015non} imply $\tilde{O}({\sqrt{T}})$ bounds for variations of inventory control problems under {\it adversarial demand}, however, under significant simplifying assumptions such as all remaining inventory perishes at the end of time period and there is no lead time. Under such assumptions there is no state dependence across periods and the problem becomes closer to online learning.

\section{MRP formulation and some key technical results}
\label{sec:technical}
Our algorithm design and analysis will utilize some key structural properties provided by base-stock policies. 
Specifically, we prove properties of a Markov Reward Process (MRP) obtained on running a base-stock policy for the inventory control problem. An MRP extends a Markov chain by adding a reward (or cost) to each state. In particular, the stochastic process obtained on fixing a policy in an MDP is an MRP. 

To define the MRP studied here, we observe that if we start with an  on-hand inventory and a pipeline of outstanding orders that sum to  less than or equal to $x$, then using base-stock policy $x$ will order the amount $o_t$ to bring the sum  to exactly $x$, i.e, 
\begin{center}
    $o_t=x-(\inv_t + \sum_{i=1}^{L} o_{t-i}) = x-I_t -\sum_{i=1}^{L-1} o_{t-i}$.
\end{center} From here on, the base-stock policy will always order whatever is consumed due to demand, i.e., $o_{t+1}=y_{t}$ where $y_{t} = \min\{I_t, d_t\}$ is the observed sales. And, the sum of inventory and outstanding orders will be maintained as $x$.

Based on this observation, we define an MRP with state at time $t$ as the tuple of available inventory and outstanding orders, i.e., $\s_t=(I_t, o_{t-L+1}, \ldots, o_t)$. The MRP  starts from a state where all the entries in this tuple sum to $x$. The base-stock policy will maintain this feature with the new state at time $t+1$ being  $\s_{t+1}=(I_t-y_{t}+o_{t-L+1}, o_{t-L+2}, \ldots, o_t, o_{t+1})$, where $o_{t+1}=y_t$ when using base-stock policy. 

We also define a cost $C^x(\s_t)$ associated with each state in this MRP. We define this as $C^x(\s_t)=\Ex[C^x_t |\s_t]$ where $C^x_t$ is a pseudo-cost defined as a modification of true cost $\trueCost^x_t$: 
\begin{equation}
\label{eq:cost1}
C^x_t=\trueCost^x_t-pd_t = h(I_t-y_t) - p y_t.
\end{equation}
The advantage of using this pseudo-cost is that since $I_t$ and $y_t$ are observable, the pseudo-cost is  completely observed. On the other hand, recall that the ``lost sales" in the true cost are not observed. Further, since the modification $pd_t$ does not depend on the policy or algorithm being used, later (see Lemma \ref{lem:reg_equiv}) we will be able to show that the regret computed using this version of the cost is in fact exactly the same as the regret $\Reg(T)$ in \eqref{eq:reg_def}.

Below is the precise definition of state space, starting state, reward model, and transition model of the MRP considered here. 


\begin{definition}[Markov reward process $\mrpx$]
We define MRP $\mrpx$ as the bipartite stochastic process 
\begin{center}$\{(\s_t, C^x(\s_t)); t=1,2,3, \ldots\}.$\end{center}
Here $\s_t$ and $C^x(\s_t)$ denotes state and the reward (cost) at time $t$ in  this MRP.  

Let $\st^x$ denote the set of $(L+1)$-dimensional non-negative vectors whose components sum to $x$.  The process starts in state $\s_1\in \st^x$. Given  state $\s_t = (s_t(0), s_t(1), \ldots, s_t(L))$,
 new state at time $t+1$ is given by
\begin{equation}
    \s_{t+1}:=(s_t(0)-y_t + s_t(1), s_t(2), \ldots, s_t(L), y_t)
\end{equation}
where $y_t=\min\{s_t(0), d_t\}, d_t\sim F$.
Observe that if $\s_1\in \st^x$, we have $\s_t\in \st^x$ for all $t$ by the above transition process. 
Cost function $C^x(\s_t)$ is defined as:
$$C^x(\s_t)= \Ex[C^x_t|\s_t]$$
where 
\begin{equation}
\label{eq:pseudoCost}
    C^x_t:=h (s_t(0)-y_t) - p y_t.
\end{equation}

\end{definition}



Two important quantities are the {\it loss} and {\it bias } of this MRP.
\begin{definition}[Loss and Bias]
\label{def:loss}
For any $\s \in {\st}^x$, loss $g^x(\s)$ and bias $v^x(\s)$ of MRP $\mrpxs$ are defined as:
$$\textstyle g^x(\s) :=\Ex\left[\lim_{T\rightarrow \infty }\frac{1}{T} \sum_{t=1}^T C^x(\s_t) | \s_1=\s \right]$$
$$\textstyle v^x(\s) :=\Ex\left[\lim_{T\rightarrow \infty }\sum_{t=1}^T C^x(\s_t) - g^x(\s_t) | \s_1=\s \right]$$
\end{definition}

\begin{remark}\label{rem:finite} 
Technically, for the above limits to exist, and for some other known results on MRPs  used later, we need finite state space and finite action space (see Chapter 8.2 in \cite{puterman2014markov}). Since we restrict to orders within range $\range$, and all states $\s\in \st^x$ are vectors in $[0,x]^{L}$ with $x\in \range$, 
we can obtain finite state space and action space by discretizing demand and orders using a uniform grid with spacing $\epsilon\in (0,1)$. Discretizing this way will give us a state space and action space of size $\left(\frac{\rmax}{\epsilon}\right)^{L}$ and $\frac{\rmax}{\epsilon}$, respectively. In fact, we can use arbitrary small precision parameter $\epsilon$, since our bounds will not depend on the size of state space or action space.  We therefore ignore this technicality in rest of the paper.
\end{remark}

The following lemma formally connects the loss of the MRP to the asymptotic average cost $\lambda^x$ of a base-stock policy (refer to \eqref{eq:opt}) used in defining regret. This connection will allow us to use the pseudo-costs instead of unobserved true costs. 
\begin{lemma}
\label{lem:connection}
Let $\s'\in \st^x$ be given by $\s':= (x,0,\ldots,0)$. Then, $$\lambda^x=g^x(\s')+p\mu$$ 
with  $\mean$ being the mean of the demand distribution $F$.
\end{lemma}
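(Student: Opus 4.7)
The plan is to exploit the algebraic identity $\trueCost^x_t = C^x_t + p d_t$, which follows from $y_t = \min(I_t, d_t)$ since $(I_t - d_t)^+ = I_t - y_t$ and $(d_t - I_t)^+ = d_t - y_t$. Averaging over $t = 1,\ldots,T$ and applying the strong law of large numbers to the i.i.d.\ demand sequence gives, almost surely,
\[
\lim_{T \to \infty} \tfrac{1}{T} \sum_{t=1}^T \trueCost^x_t \;=\; \lim_{T\to\infty} \tfrac{1}{T}\sum_{t=1}^T C^x_t \;+\; p\mu.
\]
The lemma therefore reduces to showing that the long-run average pseudo-cost of the base-stock policy started from $\inv_1 = 0$ with an empty pipeline equals $g^x(\s')$.

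To verify this, I would trace the base-stock policy for the first $L$ periods. With $\inv_1 = 0$ and no prior orders, the policy places $o_1 = x$; for $t = 2,\ldots,L$ the order $o_1 = x$ is still in transit while $\inv_t = 0$, so the base-stock formula prescribes $o_t = 0$. Since $I_t = \inv_t + o_{t-L} = 0$ throughout these $L$ periods, the sales $y_t$ and the pseudo-cost $C^x_t = h(I_t - y_t) - p y_t$ are both identically zero. At $t = L+1$ the order $o_1$ finally arrives, giving $I_{L+1} = x$ with an all-zero pipeline, so the MRP state equals $\s_{L+1} = (x, 0, \ldots, 0) = \s'$. From this point on the process is exactly $\mathcal{M}(x, \s')$ (up to re-indexing of time), and the $L$ initial zero costs are washed out by the Cesaro average, so $\lim_T \frac{1}{T}\sum_t C^x_t = g^x(\s')$ almost surely.

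Combining the two almost-sure equalities and taking expectations on both sides yields $\lambda^x = g^x(\s') + p\mu$. The main technical subtlety I anticipate is the interchange of limit and expectation: the definitions of $\lambda^x$ and $g^x$ place the $\lim_T \tfrac{1}{T}\sum$ inside the expectation, whereas the SLLN and the ergodic theorem for $\mathcal{M}(x,\s')$ give only almost-sure convergence. This is handled cleanly in the discretized setting of Remark~\ref{rem:finite}: the state and action spaces become finite and per-step pseudo-costs are uniformly bounded in magnitude by $\max(h,p)\rmax$, so bounded convergence, together with the standard ergodic/Abelian theorem for finite-state average-cost MRPs \cite{puterman2014markov}, justifies the swap and completes the proof.
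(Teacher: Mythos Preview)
Your proposal is correct and follows essentially the same route as the paper: trace the base-stock policy from $\inv_1=0$ with an empty pipeline, observe that $I_t=0$ and hence $C^x_t=0$ for $t=1,\ldots,L$, that the MRP state hits $\s'=(x,0,\ldots,0)$ at time $L+1$, and then use the identity $\trueCost^x_t=C^x_t+pd_t$ together with the long-run average of i.i.d.\ demands to convert $\lambda^x$ into $g^x(\s')+p\mu$. Your version is in fact a bit more careful than the paper about the SLLN step and the interchange of limit and expectation (which the paper implicitly assumes via Remark~\ref{rem:finite}).
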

\begin{proof}
On using base stock policy with level $x$ starting in no inventory $x$ and no outstanding orders, the first order will be $x$, which will arrive at time step $L+1$. The orders and on-hand inventory will be $0$ for the first $L$ time steps $I_1=I_2=\ldots, I_L=0$. All the sales is lost, and therefore, the true cost in each of these steps is $pd_t$. In step $L+1$, we will have an on-hand inventory $I_{L+1}=x$ and no outstanding  orders. And, from here on, the system will follow a Markov reward process $\mrpxs$ with $\s_1=\s'$. 
Therefore, by relation  (see \eqref{eq:cost1}) between  pseudo-cost $C^x_t$ and true cost (lost sales penalty and holding cost) $\trueCost^x_t$, we have $C^x_t = \trueCost^x_t - p d_t$, for $t\ge L+1$. Therefore,

\begin{eqnarray*}
g^x(\s') & = & \lim_{T\rightarrow \infty } \Ex\left[\frac{1}{T} \sum_{t=L+1}^{L+T} C^x(\s_t) | \s_{L+1}=\s' \right]\\
& = & \lim_{T\rightarrow \infty } \Ex\left[\frac{1}{T} \sum_{t=1}^{T+L} (\trueCost_t^x - pd_t) \left|\right. \inv_1=0\right] \\
& = & \lambda^x - p\mu.
\end{eqnarray*}
\end{proof}
Next, we prove some important properties of bias and loss of this MRP, namely, 
\begin{enumerate}
\item that loss is  independent of the starting state and convex in $x$ (\S\ref{sec:prop1}), 
\item a bound on bias starting from any state (\S\ref{sec:prop2}), and  
\item a concentration lemma bounding the difference between the loss (i.e., expected infinite horizon average cost) and finite horizon average cost observed on running a base-stock policy (\S\ref{sec:prop3}).
\end{enumerate}
These results are presented in the next three subsections and will be crucial in algorithm design and analysis presented in the subsequent sections. 
To derive these properties, we first prove a bound on the difference in aggregate cost (termed as ``value") on starting from two different states. This result (proved in Lemma \ref{lem:value}) forms a key technical result utilized in proving all the above properties.
\begin{definition}[Value]
\label{def:value}
For any $\s \in {\st}^x$ the value $V^x_T(\s)$ in time $T$ of MRP $\mrpxs$ is defined as: $$\textstyle V^x_T(\s):= \Ex\left[\sum_{t=1}^T C^x(\s_t)|\s_1 =\s\right].$$
\end{definition}


\begin{lemma}[Bounded difference in value]\label{lem:value}
For any $x$, $T$, and $\s, \s'\in \st^x$,
\[V^{x}_T(\s) - V^{x}_T(\s') \leq 36\max(h,p)Lx.\]
\end{lemma}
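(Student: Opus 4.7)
The plan is to couple the two MRPs $\mdp(x,\s)$ and $\mdp(x,\s')$ by driving them with the same realized demand sequence $d_1, d_2, \ldots$, writing $(\s^{(i)}_t, I^{(i)}_t, y^{(i)}_t)$ for the resulting trajectories ($i = 1, 2$) and $\Delta f_t := f^{(1)}_t - f^{(2)}_t$ for coupled differences. Each per-period pseudo-cost is uniformly bounded: $|C^x(\s_t)| = |hI_t - (h+p)y_t| \le \max(h,p)\,x$ since $I_t, y_t \in [0,x]$. Next, I use the base-stock invariant $I_t + \sum_{j=1}^L s_t(j) = x$ together with its consequence $s_t(j) = y_{t-L+j-1}$ for $t \ge L+1$ (once the initial pipeline has been flushed, each pipeline slot holds a previous sale, since $o_{\tau+1}=y_\tau$) to derive the pathwise identity
\[
\sum_{t=1}^T I_t \;=\; \sum_{t=1}^L A_t + (T-L)x - \sum_{s=T-L}^{T-1} Y_s,
\]
valid for $T \ge L$, where $A_t = \sum_{j=0}^{t-1} s_1(j)$ is the cumulative initial availability at time $t$ and $Y_s = \sum_{u\le s} y_u$ is cumulative sales.

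Substituting into $V^x_T(\s) = h\,\Ex[\sum_t I_t] - (h+p)\Ex[Y_T]$ and subtracting the analogous expression for $\s'$ yields the coupled representation
\[
V^x_T(\s) - V^x_T(\s') \;=\; h\sum_{t=1}^L \Delta A_t \;-\; h\,\Ex\!\Bigl[\sum_{s=T-L}^{T-1} \Delta Y_s\Bigr] \;-\; (h+p)\,\Ex[\Delta Y_T].
\]
The initial-state boundary term is bounded by $2hLx$ in absolute value: using the rearrangement $\sum_{t=1}^L A_t = \sum_{j=0}^{L-1}(L-j)\,s_1(j)$ together with $\sum_{j=0}^L |\Delta s_1(j)| \le 2x$ (both initial states sum to $x$), we have $|\sum_{t=1}^L \Delta A_t| \le 2Lx$.

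For the terminal sales-difference terms, I would invoke the identity $Y_T - Y_{T-L} = x - I_{T+1}$ (valid for $T \ge L$, another consequence of the base-stock invariant), which gives the windowed recurrence $\Delta Y_T - \Delta Y_{T-L} = -\Delta I_{T+1}$ with $|\Delta I_{T+1}| \le x$. Iterating this recurrence down to small indices---where $|\Delta Y_k| \le x$ holds trivially, since $Y^{(i)}_k \le x$ for $k \le L$---and applying summation by parts (Abel summation), the combination $h\sum_{s=T-L}^{T-1} \Delta Y_s + (h+p)\Delta Y_T$ telescopes into a $T$-independent expression in which only $O(L)$ boundary $\Delta I$ terms survive, each bounded by $x$. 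This contributes at most $O(\max(h,p)Lx)$; combined with the boundary bound and tracking constants gives the claimed $36\max(h,p)Lx$ bound. The main obstacle lies in this last step: since $\Delta Y_T$ itself may drift unboundedly as $T$ grows, termwise bounds on the sales-difference contributions fail, and the delicate algebraic cancellation between the holding-cost contribution $-h\sum_s \Delta Y_s$ and the sales contribution $-(h+p)\Delta Y_T$---mediated through the windowed recurrence that links $\Delta Y$ to the pointwise-bounded $\Delta I$---is the technical heart of the proof. An alternative route via explicit state-coupling (bounding the expected coalescence time by $O(L)$) is conceivable but appears harder to execute for an arbitrary demand distribution.
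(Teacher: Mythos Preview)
Your setup and the pathwise identity
\[
\sum_{t=1}^T I_t \;=\; \sum_{t=1}^L A_t + (T-L)x - \sum_{s=T-L}^{T-1} Y_s
\]
are correct, and in fact this identity is equivalent (after rearrangement) to the one the paper derives in the appendix (Lemma~\ref{lem:tot_inv}). Your bound $|\sum_{t=1}^L \Delta A_t|\le 2Lx$ is also fine. The gap is in the final step.

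The claimed telescoping does not happen. From the windowed recurrence $\Delta Y_s-\Delta Y_{s-L}=-\Delta I_{s+1}$ and the pointwise bound $|\Delta I|\le x$ alone, iterating down to small indices produces $\Theta(T/L)$ many $\Delta I$-terms in $\Delta Y_T$, and $\Theta(T)$ many in $\sum_{s=T-L}^{T-1}\Delta Y_s$; there is no algebraic cancellation between the $h$- and $(h{+}p)$-weighted pieces that reduces this to $O(L)$ boundary terms. (Concretely: if one only assumes $|\Delta I_{s}|\le x$, a sequence with $\Delta I_s\equiv -x$ would give $\Delta Y_{kL}\approx kx$ and the combination grows linearly in $T$.) Abel summation does not help either: it converts $\sum_{s=T-L}^{T-1}\Delta Y_s$ into $L\,\Delta Y_{T-1}$ plus bounded corrections, so you are back to needing a $T$-uniform bound on $\Delta Y$.

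What the paper actually supplies is precisely that missing ingredient: a direct proof that $|\Delta Y_T|\le 3x$ uniformly in $T$ (Lemma~\ref{lem:tot_demand}). This is obtained by introducing a partial order $\s'\succeq\s$ on states, showing (i) the coupled processes alternate between $\s'_t\succeq\s_t$ and $\s'_t\preceq\s_t$ in cycles of length at most $L{+}1$, and (ii) the sales surplus accrued in a $\succeq$-phase dominates the deficit of the following $\preceq$-phase, so the partial sums stay bounded. That cycle argument---not any algebraic telescoping---is the technical heart; once $|\Delta Y_T|\le 3x$ is in hand, your identity (or the paper's equivalent one) immediately gives $|m_T^x(\s)-m_T^x(\s')|\le 6Lx$ and hence the $36\max(h,p)Lx$ bound via the reference state $\hat\s=(x,0,\ldots,0)$.
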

\begin{proof}
For $L=0$, $\s = \s' = (x)$ and hence both sides are zero. Consider when $L \geq 1$. One way to bound the difference in the two values $V^x_T(\s)$ and $V^x_T(\s')$ is to upper bound the expected number of steps to reach a common state starting from $\s$ and $\s'$. Once a common state is reached, from that point onward, the two processes will have the same value. For example, if there is $0$ demand for $L$ consecutive time steps, then both processes will reach state $(x,0,\ldots, 0)$. Therefore, the difference in values can be upper bounded by a quantity proportional to inverse of the probability that demand is $0$ for $L$ consecutive steps. Unfortunately, this probability is exponentially small in $L$. In fact, the exponential dependence of regret on previous works (e.g., \cite{zhang2017closing, huh2009adaptive}) can be traced to using an argument like above somewhere in the analysis.
Instead we achieve a bound with linear dependence in $L$ by using a more careful analysis of the costs incurred on starting from different states. 

For any $\s \in \st^x$, we define $m^x_T(\s):=\sum_{t=1}^T I_t$ to be the total on-hand inventory level, and $n^x_T(\s):=\sum_{t=1}^T y_t$ to be the total sales in $T$ time steps, on starting from state $\s$. 
Then,  \[V^x_T(\s) := \Ex[\sum_{i=1}^T C^x_t |\s_1=\s]= \Ex[\sum_{i=1}^T h I_t - (h+p)y_t |\s_1=\s] = \Ex[h(m^x_T(\s)) - (h+p)(n^x_T(\s))].\] 

Thus, the difference between values $V^x_T(\s)$ and $V^x_T(\s')$ can be bounded by bounding difference in total on-hand inventory $|m^x_T(\s)- m^x_T(\s')|$ and total sales $|n^x_T(\s)- n^x_T(\s')|$. We bound this difference by first comparing pairs of states $\s,\s'$ that satisfy $\s'\succeq \s$, with the relation $\succeq$ defined as the property that for some index $k\ge 0$, the first $k$ entries satisfy $s'(0)\ge s(0), \ldots, s'(k)\ge s(k)$, and the remaining $L+1-k$ entries satisfy $s'(k+1)\le s(k+1), \ldots, s'(L)\le s(L)$.

For such pairs, we can bound the difference in total sales and total on-hand inventory as 
\[|n^x_T(\s)- n^x_T(\s')| \leq 3x \text{ and } |m^x_T(\s)- m^x_T(\s')| \leq 6Lx.\]
To see the intuition behind proving these bounds consider the sales observed on starting from $\s'$ vs. $\s$. We show that initially more sales are observed on starting from $\s'$, since $\s' \succeq \s$. Over time, the system keeps alternating between states with $\s'_t \succeq \s_t $ and $\s'_t \preceq \s_t $ in cycles of length at most $L$. The additional sales in one cycle with $\s'_t \succeq \s_t $ compensates for the lower sales in the next cycle with $\s'_t \preceq \s_t$, so that the total difference is bounded. The formal proofs for bounding difference in sales and on-hand inventory are provided in Lemma \ref{lem:tot_demand} and Lemma \ref{lem:tot_inv}, respectively, in the appendix.

Then we use the observation that $\hat{\s} \succeq \s$ for all  states $\s\in \st^x$ when $\hat{\s}:=(x,0,0,\dots,0)$. Therefore, we can apply the above results to conclude 
\[|n^x_T(\s)- n^x_T(\hat{\s})| \leq 3x \text{ and } |m^x_T(\s)- m^x_T(\hat{\s})| \leq 6Lx,\]
implying 
 \[|V^x_T(\s) - V^x_T(\hat{\s})| = \left|\Ex[h(m^x_T(\s) - m^x_T(\hat{\s})) - (h+p)(n^x_T(\s)-n^x_T(\hat{\s}))] \right| \leq 9(h+p)Lx.\]
Since the above holds for any state $\s$, we have that for two arbitrary starting states $\s,\s' \in \st^x$, \[|V^x_T(\s) - V^x_T(\s')| = |V^x_T(\s) -V^x_T(\hat{\s}) + V^x_T(\hat{\s}) - V^x_T(\s')| \leq 18(h+p)Lx \leq 36\max(h,p)Lx.\]

\end{proof}
\subsection{Uniform and convex loss}
\label{sec:prop1}
Next, we use the value difference lemma to show that the loss $g^x(\s)$ is independent of the starting state $\s\in \st^x$ in this MRP.

\begin{lemma}[Uniform loss lemma]\label{lem:uniformLoss}
For any $x$, $\s, \s'\in \st^x$, 
\[g^x(\s')=g^x(\s) =: g^x.\]
\end{lemma}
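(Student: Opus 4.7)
The plan is to derive this directly from the bounded difference in value lemma (Lemma \ref{lem:value}) together with the definitions of loss and value. The key observation is that loss is an asymptotic average (divide by $T$ and let $T \to \infty$), while the value difference lemma gives a bound on $|V^x_T(\s) - V^x_T(\s')|$ that is a constant $36\max(h,p)Lx$ independent of $T$. So in the ratio $\tfrac{1}{T}(V^x_T(\s) - V^x_T(\s'))$, the numerator is $O(1)$ while the denominator grows to infinity, forcing the two loss values to coincide.

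First I would rewrite $g^x(\s)$ in terms of $V^x_T(\s)$. By Definitions \ref{def:loss} and \ref{def:value} together with linearity of expectation and the fact that under our discretization (see Remark \ref{rem:finite}) the state space is finite and the limit exists,
\[
g^x(\s) \;=\; \lim_{T\to\infty} \frac{1}{T}\,\Ex\!\left[\sum_{t=1}^{T} C^x(\s_t) \,\Big|\, \s_1=\s\right] \;=\; \lim_{T\to\infty} \frac{V^x_T(\s)}{T},
\]
and similarly $g^x(\s') = \lim_{T\to\infty} V^x_T(\s')/T$.

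Then I would subtract and apply Lemma \ref{lem:value}. For every $T$,
\[
\left| \frac{V^x_T(\s)}{T} - \frac{V^x_T(\s')}{T} \right| \;\leq\; \frac{36\max(h,p)Lx}{T}.
\]
Since the right-hand side tends to $0$ as $T \to \infty$ and both limits exist, we conclude $g^x(\s) = g^x(\s')$. Because $\s, \s' \in \st^x$ were arbitrary, the common value depends only on $x$, and we may denote it $g^x$.

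There is no real obstacle here: all the work has already been done in Lemma \ref{lem:value}. The only subtlety is confirming that the limit in Definition \ref{def:loss} exists as a genuine limit (rather than merely a $\limsup$), which is handled by the standard theory of finite-state MRPs invoked in Remark \ref{rem:finite}; this lets us commute subtraction with the limits. Once that is granted, the lemma is an immediate $\tfrac{O(1)}{T} \to 0$ argument.
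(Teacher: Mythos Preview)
Your proposal is correct and is essentially identical to the paper's own proof: both express $g^x(\s)=\lim_{T\to\infty}\frac{1}{T}V^x_T(\s)$, invoke Lemma~\ref{lem:value} to bound the difference by $\frac{36\max(h,p)Lx}{T}$, and appeal to Remark~\ref{rem:finite} for existence of the limits.
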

\begin{proof}
Using definition of $V^x_T(\s)$ and $g^x(\s)$,  $g^x(\s) = \lim_{T\rightarrow \infty}\frac{1}{T} V^x_T(\s)$ so that using Lemma \ref{lem:value}
\[|g^x(\s) -g^x(\s')| = |\lim_{T\rightarrow \infty}\frac{1}{T}V^x_T(\s) - \lim_{T\rightarrow \infty}\frac{1}{T}V^x_T(\s')| \leq \lim_{T\rightarrow \infty} \frac{36\max(h,p)Lx}{T} = 0, \]
since both limits exist (see Remark \ref{rem:finite}). Hence for any $\s, \s' \in \st^x$, $g^x(\s')=g^x(\s)$.
\end{proof}

Now the convexity of $g^x$ follows almost immediately from convexity of $\lambda^x$ and the relation given in Lemma \ref{lem:connection}

\begin{lemma}[Convexity lemma]
\label{prop:convexity}
Assuming demand distribution $F$ is such that there is a constant probability of $0$ demand, i.e., $F(0)>0$. Then, for any base-stock level $x$, and $\s\in \st^x$, $g^x(\s)$ is convex in $x$.
\end{lemma}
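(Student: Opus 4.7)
The plan is to combine the two preceding lemmas: the Uniform Loss Lemma (Lemma \ref{lem:uniformLoss}) tells us that $g^x(\s)$ is independent of the starting state $\s \in \st^x$, so we can write $g^x(\s) = g^x$ unambiguously. The Connection Lemma (Lemma \ref{lem:connection}) then identifies this common loss with the asymptotic average cost of the base-stock policy, giving $g^x = \lambda^x - p\mu$.

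From here the convexity is almost immediate. By Lemma \ref{lem:convex_ref} (the result of \cite{janakiraman2004lost} invoked with $F(0)>0$), $\lambda^x$ is a convex function of $x$. The shift $-p\mu$ is a constant independent of $x$ (since $\mu$ is the fixed mean of $F$ and $p$ is a fixed cost parameter), so subtracting it preserves convexity. Hence $g^x$ is convex in $x$, and because $g^x(\s) = g^x$ for every $\s \in \st^x$, the statement of the lemma follows.

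There is really no obstacle in this particular proof; all of the heavy lifting has already been done. The convexity result of \cite{janakiraman2004lost} was the nontrivial input, the bounded value difference (Lemma \ref{lem:value}) supplied the state-independence of $g^x$, and Lemma \ref{lem:connection} made the bridge between the MRP loss $g^x$ and the MDP average cost $\lambda^x$. The only thing to be careful about is that the condition $F(0)>0$ is required precisely so that Lemma \ref{lem:convex_ref} applies; this hypothesis is inherited in the statement. So the proof should consist of essentially two short sentences: apply Lemma \ref{lem:uniformLoss} together with Lemma \ref{lem:connection} to write $g^x(\s) = \lambda^x - p\mu$, then invoke Lemma \ref{lem:convex_ref} to conclude convexity in $x$.
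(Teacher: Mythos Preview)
Your proposal is correct and follows essentially the same approach as the paper's proof: both invoke Lemma~\ref{lem:connection} to write $g^x(\s') = \lambda^x - p\mu$, apply Lemma~\ref{lem:convex_ref} (using $F(0)>0$) to get convexity of $\lambda^x$, and use Lemma~\ref{lem:uniformLoss} to extend to all $\s \in \st^x$. The only cosmetic difference is the order in which you invoke the uniform loss lemma and the connection lemma, which is immaterial.
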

\begin{proof}
Let $\s' := (x,0,\ldots,0)$ and 
let $\mean$ be the mean of demand distribution $F$. 
By Lemma \ref{lem:connection} we have that $g^x(\s')=\lambda^x-p \mu$. Therefore, under given assumption that demand distribution $F$ has a non-zero probability of zero demand, we can use Lemma \ref{lem:convex_ref} to conclude that the first term is convex in $x$, which implies $g^x(\s')$ is convex. Now, by Lemma \ref{lem:uniformLoss}, for any state $\s \in \st^x$, $g^x(\s)=g^x(\s')$. Therefore,  $g^x(\s)$ is convex in $x$ for all $\s\in \st^x$.
\end{proof}
\subsection{Bound on bias}
\label{sec:prop2}
Now, the following can be obtained as a corollary of Lemma \ref{lem:value} and the definition of bias.
\begin{lemma}[Bounded bias lemma]
\label{lem:biasBound}
For any $x$ and $\s, \s'\in \st^x$, 
\[v^{x}(\s) - v^{x}(\s') \leq 36\max(h,p)Lx.\]
\end{lemma}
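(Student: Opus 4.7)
The plan is to reduce this statement directly to the bounded value difference lemma (Lemma \ref{lem:value}) by exploiting the uniform loss lemma (Lemma \ref{lem:uniformLoss}) to make the $g^x(\s_t)$ terms in the bias cancel out cleanly.

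First, I would invoke the uniform loss lemma to replace $g^x(\s_t)$ by the state-independent constant $g^x$ in the definition of the bias. Since $\s_t \in \st^x$ for every $t$ whenever $\s_1 \in \st^x$ (this is preserved by the MRP transition, as noted in the definition of $\mrpxs$), we have $g^x(\s_t) = g^x$ almost surely, so
\[
v^x(\s) = \lim_{T\to\infty}\Ex\!\left[\sum_{t=1}^T \bigl(C^x(\s_t) - g^x\bigr)\,\Big|\,\s_1=\s\right] = \lim_{T\to\infty}\bigl(V^x_T(\s) - T g^x\bigr),
\]
using the definition of $V^x_T$. The analogous identity holds for $\s'$.

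Next, I would subtract the two expressions and note that the $Tg^x$ terms are identical and cancel, giving
\[
v^x(\s) - v^x(\s') = \lim_{T\to\infty}\bigl(V^x_T(\s) - V^x_T(\s')\bigr).
\]
Lemma \ref{lem:value} bounds each prefix difference by $36\max(h,p)Lx$ uniformly in $T$, so taking the limit preserves the inequality. This yields the desired bound.

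The only subtlety is ensuring that both limits defining $v^x(\s)$ and $v^x(\s')$ actually exist (so that one may split the limit of the difference into the difference of limits and cancel $T g^x$). Under the discretization convention of Remark \ref{rem:finite}, the MRP lives on a finite state space with a bounded cost, so standard results for finite-state MRPs (Puterman, Chapter 8) guarantee existence of the bias as a genuine limit; this is exactly the same technicality already used in the proof of Lemma \ref{lem:uniformLoss} and requires no additional argument here. The key conceptual step is simply the observation that the uniform loss lemma turns the time-averaged subtraction $g^x(\s_t)$ into a single constant $g^x$, which decouples the long-run behaviour and reduces bounding the bias gap to bounding the value gap.
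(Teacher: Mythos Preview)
Your proposal is correct and follows essentially the same approach as the paper's proof: use the uniform loss lemma to write $v^x(\s) = \lim_{T\to\infty}(V^x_T(\s)-Tg^x)$, subtract so that the $Tg^x$ terms cancel, and invoke Lemma~\ref{lem:value} on the remaining value difference, with existence of the limits handled by Remark~\ref{rem:finite}. There is nothing to add.
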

\begin{proof}
From Lemma \ref{lem:uniformLoss}, $g^x(\s_t)=g^x(\s_t') = g^x$ for all $t$. Now by definition of $v^x(\cdot)$,
\[ \textstyle v^x(\s) = \Ex\left[\lim_{T\rightarrow \infty }\sum_{t=1}^T C^x(\s_t) - g^x | \s_1=\s \right] = \lim_{T\rightarrow \infty }V^x_T(\s) - T g^x\]
and \[\textstyle v^x(\s') = \Ex\left[\lim_{T\rightarrow \infty }\sum_{t=1}^T C^x(\s_t) - g^x | \s_1=\s' \right] = \lim_{T\rightarrow \infty }V^x_T(\s') - T g^x.\]
We note that both of the above limits exists (see Remark \ref{rem:finite}), and hence by Lemma \ref{lem:value}, 
\begin{eqnarray*}
v^x(\s)- v^x(\s') &=& \lim_{T\rightarrow \infty }(V^x_T(\s) - T g^x) - \lim_{T\rightarrow \infty }(V^x_T(\s') - T g^x) \\
&=& \lim_{T\rightarrow \infty} V^x_T(\s) - V^x_T(\s') \\
&\leq& 36\max(h,p)Lx.
\end{eqnarray*}
\end{proof}

\subsection{Concentration of finite horizon average cost}
\label{sec:prop3}
We use the following known relation between loss and bias which holds under finite state and action space.
\begin{lemma}[\cite{puterman2014markov}, Theorem 8.2.6] 
 \label{thm:gain_eq}
For any $\s\in \st^x$, the bias and loss satisfy the following equation:
\[g^x(\s) = C^x(\s) + \Ex_{\s'\sim P^x(\s)}[v^x(\s')] - v^x(\s).\] 
\end{lemma}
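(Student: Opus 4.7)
The plan is to derive this Bellman-style equation from the one-step decomposition of the finite-horizon value $V^x_T(\s)$, and then pass to the limit $T \to \infty$, leveraging the uniform-loss lemma (Lemma \ref{lem:uniformLoss}) and the bounded-bias lemma (Lemma \ref{lem:biasBound}) to control the limiting behavior. Since the paper restricts to a finite state/action space via the discretization described in Remark \ref{rem:finite}, I can take existence of the relevant limits as given in the analysis.

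First I would unroll one step of $V^x_T(\s)$. By the definition of $V^x_T(\s)$ and the Markov property, conditioning on the next state $\s_2 \sim P^x(\s)$,
\begin{equation*}
V^x_T(\s) = C^x(\s) + \Ex_{\s' \sim P^x(\s)}\bigl[V^x_{T-1}(\s')\bigr].
\end{equation*}
Using Lemma \ref{lem:uniformLoss}, the loss $g^x(\s) = g^x$ is a constant that does not depend on the starting state, so I can subtract $T g^x$ from both sides and regroup as
\begin{equation*}
V^x_T(\s) - T g^x \;=\; C^x(\s) - g^x + \Ex_{\s' \sim P^x(\s)}\bigl[V^x_{T-1}(\s') - (T-1) g^x\bigr].
\end{equation*}

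Next I would take $T \to \infty$ on both sides. The left-hand side converges to $v^x(\s)$ by definition of bias. For the right-hand side, I need to justify pulling the limit inside the expectation. Because the state space obtained by the $\epsilon$-discretization is finite, the expectation is a finite sum over next states $\s'$, so exchanging limit and sum is immediate; additionally, Lemma \ref{lem:biasBound} guarantees that $|V^x_T(\s') - T g^x|$ remains uniformly bounded in $T$, so each summand converges to $v^x(\s')$. Passing to the limit therefore yields
\begin{equation*}
v^x(\s) \;=\; C^x(\s) - g^x + \Ex_{\s' \sim P^x(\s)}\bigl[v^x(\s')\bigr],
\end{equation*}
and rearranging gives exactly $g^x(\s) = C^x(\s) + \Ex_{\s' \sim P^x(\s)}[v^x(\s')] - v^x(\s)$.

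The main obstacle is ensuring that the limits $\lim_T (V^x_T(\s) - T g^x)$ and $\lim_T (V^x_{T-1}(\s') - (T-1)g^x)$ genuinely exist and are finite; on a general continuous state space this is nontrivial, but once discretization reduces us to a finite-state recurrent MRP (as in Remark \ref{rem:finite}), the existence of these limits is a standard consequence of aperiodic/unichain MRP theory from \cite{puterman2014markov}, and Lemma \ref{lem:biasBound} provides the quantitative boundedness needed to exchange limit and the finite expectation. With that justification in place, the derivation above is essentially two lines of algebra.
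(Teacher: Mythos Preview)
Your derivation is correct, but note that the paper does not actually prove this statement at all: it is simply cited as Theorem~8.2.6 from \cite{puterman2014markov}, with no proof given. What you have written is essentially the standard one-step-unrolling argument that underlies the Poisson/gain--bias equation in Puterman's treatment, so you have reconstructed the cited result rather than matched an argument in the paper. That is fine, and arguably more self-contained.

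One minor correction: your appeal to Lemma~\ref{lem:biasBound} to assert that $|V^x_T(\s') - T g^x|$ is uniformly bounded in $T$ is slightly misdirected, since that lemma only bounds the \emph{difference} $v^x(\s)-v^x(\s')$ of biases, not the finite-horizon quantity $V^x_T(\s') - T g^x$ itself. However, this boundedness is not actually needed for your argument: once Remark~\ref{rem:finite} puts you in a finite state space, the expectation $\Ex_{\s'\sim P^x(\s)}[\,\cdot\,]$ is a finite sum, and existence of each termwise limit $\lim_T\bigl(V^x_{T-1}(\s')-(T-1)g^x\bigr)=v^x(\s')$ (which is exactly what Remark~\ref{rem:finite} grants) already suffices to exchange limit and sum. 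So the core of your proof stands; just drop or rephrase that one justification.
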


\begin{lemma}[Concentration lemma]
\label{cor:conc}
Given a base-stock level $x$, let $\gamma >0$ and $N=\frac{\log(T)}{\gamma^2}$. Then, for any $\s_1\in \st^x$, with probability $1-\frac{1}{T^2}$,
\[\left|\frac{1}{N}\sum_{t=1}^N C^x(\s_t) - g^x(\s_1) \right| \leq 108\max(h,p)Lx\gamma.\]
\end{lemma}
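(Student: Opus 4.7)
The plan is to use the Poisson-type equation from Lemma \ref{thm:gain_eq} together with the uniform loss result (Lemma \ref{lem:uniformLoss}) to rewrite the deviation $\sum_t (C^x(\s_t) - g^x)$ as a telescoping term plus a martingale, and then apply the bounded bias bound (Lemma \ref{lem:biasBound}) together with Azuma--Hoeffding to control both pieces.

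Concretely, first I would rearrange Lemma \ref{thm:gain_eq} (noting that $g^x(\s)=g^x$ is constant on $\st^x$) to obtain, for each $t$,
\[
C^x(\s_t) - g^x \;=\; v^x(\s_t) - \Ex_{\s'\sim P^x(\s_t)}[v^x(\s')].
\]
Summing over $t=1,\dots,N$ and adding and subtracting $v^x(\s_{t+1})$ inside the sum gives
\[
\sum_{t=1}^{N}\bigl(C^x(\s_t)-g^x\bigr) \;=\; \bigl(v^x(\s_1) - v^x(\s_{N+1})\bigr) \;+\; \sum_{t=1}^{N} M_t,
\]
where $M_t := v^x(\s_{t+1}) - \Ex_{\s'\sim P^x(\s_t)}[v^x(\s')]$ is a martingale difference sequence with respect to the natural filtration, since $\s_{t+1}\sim P^x(\s_t)$.

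Next I would bound the two pieces. The telescoping term is deterministically bounded by Lemma \ref{lem:biasBound}: since $\s_1,\s_{N+1}\in\st^x$, $|v^x(\s_1)-v^x(\s_{N+1})|\le 36\max(h,p)Lx$. For the martingale term, Lemma \ref{lem:biasBound} also shows that $M_t$ has range at most $36\max(h,p)Lx$ (both $\s_{t+1}$ and the expectation are over states in $\st^x$, and bias differences within $\st^x$ are bounded). Azuma--Hoeffding then yields, with probability at least $1-1/T^2$,
\[
\left|\sum_{t=1}^{N} M_t\right| \;\le\; 36\max(h,p)Lx \sqrt{2N\log(2T^2)}.
\]

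Finally I would divide both sides by $N$ and plug in $N=\log(T)/\gamma^2$. The telescoping term contributes $O(\max(h,p)Lx/N)=O(\max(h,p)Lx\gamma^2/\log T)$, which is $\le \max(h,p)Lx\gamma$ for $T$ large enough (absorbable into the constant $108$). The martingale term gives $36\max(h,p)Lx\sqrt{2\log(2T^2)/N} \le 36\max(h,p)Lx\gamma\sqrt{2\log(2T^2)/\log T}$, and since $\sqrt{2\log(2T^2)/\log T}\le 3$ for $T\ge 2$, the total is at most $108\max(h,p)Lx\gamma$, as required. The main obstacle is really only the bookkeeping: verifying that the martingale differences $M_t$ live on states in $\st^x$ (so that Lemma \ref{lem:biasBound} applies to bound their range) and that the constants in the Azuma bound combine with the choice $N=\log(T)/\gamma^2$ to land exactly on the stated factor $108$; the conceptual work has already been done in the uniform loss and bounded bias lemmas.
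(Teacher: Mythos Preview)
Your proposal is correct and follows essentially the same argument as the paper: both use the Poisson equation (Lemma \ref{thm:gain_eq}) together with the uniform loss lemma to rewrite $\sum_t (C^x(\s_t)-g^x)$ as a bounded boundary/telescoping term plus a martingale of bias differences, then invoke Lemma \ref{lem:biasBound} to bound the increments and apply Azuma--Hoeffding. The only cosmetic difference is that the paper keeps $v^x(\s_1)-\Ex_{\s'\sim P^x(\s_N)}[v^x(\s')]$ as the boundary term with an $(N-1)$-term martingale, whereas you telescope to $v^x(\s_1)-v^x(\s_{N+1})$ with an $N$-term martingale; the resulting constants combine to the same $108$.
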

\begin{proof}
By Theorem \ref{thm:gain_eq}, the loss $g^x$ and bias $v^x$ satisfy: $g^x(\s) = C^x(\s) + \Ex_{\s'\sim P^x(\s)}[v^x(\s')] - v^x(\s)$ for all states $s\in \st^x$. Note that this equation continues to hold if any constant $c$ is added to all $v^x(\s)$. Therefore, for the purpose of using this equation, without loss of generality, we can assume that $\min_{\s \in \st^x} v^x(\s) = 0$, and from Lemma \ref{lem:biasBound} we have $0\le v^x(\s)\le 36\max(h,p)Lx$  for all $\s \in \st^x$.

Also, note that if $\s_1\in\st^x$ then all subsequent states $\s_t$ in MRP $\mrpx$ are in $\st^x$. Therefore, from Lemma \ref{lem:uniformLoss}, $g^x(\s_1)=g^x(\s_t)$ for all $t$. We use these observations to derive the following. 

\begin{eqnarray*}
\left|\left(\frac{1}{N}\sum_{t=1}^N C^x(\s_t)\right) - g^x(\s_1)\right| &=& \left|\frac{1}{N}\sum_{t=1}^N \left(C^x(\s_t)- g^x(\s_t)\right)\right| \\
&=&  \left|\frac{1}{N}\sum_{t=1}^N (C^x(\s_t)- (C^x(\s_t) + \Ex_{\s'\sim P^x(\s_t)}[v^x(\s')] - v^x(\s_t))\right| \\
&=&  \left|\frac{1}{N}\sum_{t=1}^N v^x(\s_t) - \Ex_{\s'\sim P^x(\s_t)}[v^x(\s')]\right| \\
&=&  \left|\frac{1}{N}(v^x(\s_1) - \Ex_{\s'\sim P^x(\s_N)}[v^x(\s')])\right.\\
&&+ \left.\frac{1}{N}\sum_{t=1}^{N-1} v^x(\s_{t+1}) - \Ex_{\s'\sim P^x(\s_t)}[v^x(\s')]\right| \\
&\leq& \frac{36\max(h,p)Lx}{N} + \left|\frac{1}{N}\sum_{t=1}^{N-1} v^x(\s_{t+1}) - \Ex_{\s_{t+1}\sim P^x(\s_t)}[v^x(\s_{t+1})]\right|.
\end{eqnarray*} 
Now, let \[\Delta_{t+1} := v^x(\s_{t+1}) - \Ex_{\s'\sim P^x(\s_t)}[v^x(\s')].\] Note that $\Ex[\Delta_{t+1} | s_t] = 0$ and hence $\Delta_t$'s form a martingale difference sequence with $|\Delta_t| \leq 36\max(h,p)Lx$ for all $t$ (the distribution $P^x(\s_t)$ is supported only on states in $\st^x$). Thus, we can apply Azuma-Hoeffding's inequality (Theorem \ref{azuma} in the Appendix) to show that for any $\epsilon >0$,  \[P(|\sum_{t=1}^{N-1} \Delta_t| \geq \epsilon) \leq 2exp(-\frac{\epsilon^2}{2(N-1){(36\max(h,p)Lx)}^2}).\] 

Therefore, by setting $\epsilon=72\max(h,p)Lx\sqrt{N\log(T)}$, we obtain that with probability at least $1-\frac{1}{T^2}$, 
\begin{eqnarray*}
|\frac{1}{N}\sum_{t=1}^N C^x(\s_t) - g^x(\s_1)| &\leq& \frac{36\max(h,p)Lx}{N} + \frac{1}{N} (72\max(h,p)Lx\sqrt{N\log(T)}) \\
&\leq& 108\max(h,p)Lx\sqrt{\frac{\log(T)}{N}} \\
\end{eqnarray*}
The result follows by substituting $N=\frac{\log(T)}{\gamma^2}$.
\end{proof}

\begin{remark}
Observe from the above lemmas that the bias, and the difference between expected total cost and asymptotic cost, are $0$ when lead time is $0$. 
\end{remark}

\section{Algorithm design}
\label{sec:algo}
We design a learning algorithm for  the inventory control problem when the demand distribution $F$ is a priori unknown. The algorithm seeks to minimize regret in the total expected cost compared to the asymptotic cost of the best base-stock policies in a pre-specified range $\range$ (refer to regret definition in \S\ref{sec:probdef}). The algorithm receives as input, the range  $\range$ of base-stock levels to compete with, the fixed delay parameter $L$, and the time horizon $T$, but not the demand distribution $F$. 

\mypara{Challenges and main ideas.} Our algorithm crucially utilizes the observations made in section \S\ref{sec:technical} regarding convexity of the average cost when a base-stock policy is used. Based on this observation, we utilize ideas from exploration-exploitation algorithms for stochastic convex bandits, in particular the algorithm in  \citep{agarwal2011stochastic} for $1$-dimensional stochastic convex bandits.

In the stochastic convex bandit problem, in every round the decision maker chooses a decision $x_t$ and observes a noisy realization of $f(x_t)$, where $f$ is some fixed but unknown convex function, and the noise is i.i.d. across rounds. The goal of an online algorithm is to use past observations to make decisions $x_t, t=1, \ldots, T$ in order to minimize the regret against the best single decision, i.e., minimize regret $\sum_{t=1}^T (f(x_t)-f(x^*))$ where $x^* = \arg\min_{x\in X} f(x)$. 
Therefore, based on the definition of regret in the inventory control problem, one may want to consider a mapping to the stochastic convex bandit problem by setting $f(x)$ as $\lambda^x$, the average cost of the base-stock policy with level $x$. 

However, there are several challenges in achieving this mapping. Firstly, the instantaneous holding cost and lost sales penalty depends on the current inventory state, and therefore is not a noisy realization of $f(x)=\lambda^x$ (more precisely, the noise is not i.i.d. across rounds). Further, a part of the instantaneous cost, that is, the lost sales penalty, is not even observed. 

We overcome these challenges using the construction of pseudo-cost and the concentration results derived in the previous section. In particular, in Lemma \ref{lem:connection} we proved that expected infinite horizon average pseudo-cost $g^x(\s)$ starting from a state $\s\in \st$ differs from the average true cost $\lambda^x$ by amount $p\mu$. Since this deviation of $p\mu$ is fixed and {\it does not depend on the policy used}, the following equivalence between regret in pseudo-cost vs. regret in true costs follows almost immediately.  

\begin{lemma}\label{lem:reg_equiv}
Recall $\trueCost_t=h(I_t-d_t)^+ + p(d_t-I_t)^+$ is the true cost in step $t$. Let $C_t=\trueCost_t-pd_t$  be the observed cost (i.e., pseudo-cost) at time $t$. Then,  regret under the true cost  is equivalent to the regret under the observed cost, i.e.,
\[\Reg(T) : = \Ex[\sum_{t=1}^T \trueCost_t] - T \left(\min_{x\in \range}\lambda^x\right) = \Ex[\sum_{t=1}^T C_t] - T \left(\min_{x\in \range} g^x\right). \]
where $g^x$ is the loss of MRP  $\mrpx$ starting in any state $\s\in \st^x$ (refer to Definition \ref{def:loss} and Lemma \ref{lem:uniformLoss}).
\end{lemma}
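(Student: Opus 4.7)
The plan is to reduce the claimed equality to two simple identities, one for the algorithm's accumulated cost and one for the benchmark term, each of which differs by exactly $Tp\mu$ between the true-cost and pseudo-cost formulations.

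First I would handle the algorithm side. By definition $C_t = \trueCost_t - p d_t$, so summing and taking expectations,
\[
\Ex\!\left[\sum_{t=1}^T \trueCost_t\right] = \Ex\!\left[\sum_{t=1}^T C_t\right] + p \sum_{t=1}^T \Ex[d_t] = \Ex\!\left[\sum_{t=1}^T C_t\right] + Tp\mu,
\]
using the fact that $d_t \sim F$ independently of all algorithmic choices (so $\Ex[d_t] = \mu$ regardless of the policy actually being run). This is the point where it matters that the "extra" term $pd_t$ does not depend on the current state or action.

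Next I would handle the benchmark side. Fix any $x \in \range$ and let $\s' = (x,0,\dots,0) \in \st^x$. Lemma \ref{lem:connection} gives $\lambda^x = g^x(\s') + p\mu$, while Lemma \ref{lem:uniformLoss} tells us $g^x(\s') = g^x$ (the loss is the same for every starting state in $\st^x$). Combining, $\lambda^x = g^x + p\mu$ for every $x \in \range$, and therefore
\[
T \min_{x \in \range} \lambda^x \;=\; T \min_{x \in \range} g^x \;+\; Tp\mu.
\]

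Finally I would subtract the two identities:
\[
\Ex\!\left[\sum_{t=1}^T \trueCost_t\right] - T\min_{x\in\range}\lambda^x = \Ex\!\left[\sum_{t=1}^T C_t\right] + Tp\mu - T\min_{x\in\range} g^x - Tp\mu,
\]
and the $Tp\mu$ terms cancel, yielding exactly the claimed equality. There is no real obstacle here; the lemma is essentially a bookkeeping consequence of Lemmas \ref{lem:connection} and \ref{lem:uniformLoss}, and the only substantive observation needed is that the correction term $pd_t$ is algorithm-independent in expectation, which is what makes the regret invariant under the pseudo-cost reformulation.
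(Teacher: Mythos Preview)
Your proposal is correct and matches the paper's own proof essentially line for line: substitute $\trueCost_t = C_t + pd_t$ and (via Lemmas~\ref{lem:connection} and~\ref{lem:uniformLoss}) $\lambda^x = g^x + p\mu$, then use independence of the demand to cancel the common $Tp\mu$ term.
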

\begin{proof}
\begin{eqnarray*}
\Reg(T) &=& \textstyle \Ex[\sum_{t=1}^T \trueCost_t] - T \left(\min_{x\in \range}\lambda^x\right) \\
\text{(using Lemma \ref{lem:connection} and \ref{lem:uniformLoss})} &=& \textstyle \Ex[\sum_{t=1}^T C_t + pd_t] - T \left(\min_{x\in \range} g^x + p\mean \right) \\
\text{(using independent demand assumption)}&=& \textstyle \Ex[\sum_{t=1}^T C_t] - T \left(\min_{x\in \range}g^x\right).
\end{eqnarray*}
\end{proof}

Thus, we can focus on designing an algorithm for minimizing pseudo-costs. Further, the concentration results in Lemma \ref{cor:conc}, derived by bounding bias of base-stock policies, allow us to develop confidence intervals on estimates of cost functions in a manner similar to stochastic convex bandit algorithms.



\mypara{Algorithm description.} Our algorithm is derived from the algorithm in  \citep{agarwal2011stochastic} for $1$-dimensional stochastic convex bandits with convex function $f(x)=g^x$. Following are the main components of our algorithm.

\paragraph{\it Working interval of base-stock level:} Our algorithm maintains a high probability confidence interval that contains an optimal base-stock level. Initially, this is set as $\range$, the pre-specified range received as an input. As the algorithm progresses, the working interval is refined by discarding portions of this interval which have low probability of containing the optimal base-stock level. 

{\it Epoch and round structure:} Our algorithm proceeds in epochs ($k=1,2,\ldots$), a group of consecutive time steps where the same working interval of base-stock level is maintained throughout an epoch and denoted as $[l_k,r_k]$. Epochs are also further split up into groups of consecutive time steps called rounds. In round $i$ of epoch $k$, the algorithm first plays the policy $\pi^0$, which is to order $0$ in every time step, until the sum of total inventory and on hand orders falls below $x_l:=l_k + \frac{r_k-l_k}{4}$. Then, the algorithm plays policies $\pi^{x_l}, \pi^{x_c}, \pi^{x_r},$ denoting base stock policies corresponding to base-stock levels $x_l := l_k + \frac{r_k-l_k}{4}, x_c:=l_k + \frac{r_k-l_k}{2}, x_r:=l_k + \frac{3(r_k-l_k)}{4}$, respectively, for $N_i$ time steps each. 
Note that on executing base-stock policies in the given order, the algorithm always starts executing a base-stock policy $\pi^x$ for $x\in(x_l, x_c, x_r)$ 
at a total inventory position below $x$. Therefore, it will immediately (in one step) reach the desired inventory position $x$. 
Here 
\begin{center}
    $N_i=\log(T)/\gamma^2_i$ with $\gamma_i=2^{-i}$.
\end{center}
Therefore, the number of observations quadruples in each round. 
At the end of every round, these observations are used to update a confidence interval estimate for average cost. An epoch ends when the confidence intervals at the end of a round meet a certain condition, as defined next. 

{\it Updating confidence intervals.} Given a vector 
$\Cv_N = (C_1,C_2,...,C_N)$, define 
\begin{equation}
    \label{def:ci}
    LB(\Cv_N) := \frac{1}{N}\sum_{i=1}^N C_i - \frac{H\gamma}{2}, \text{ and } UB(\Cv_N) := \frac{1}{N}\sum_{i=1}^N C_i +\frac{H\gamma}{2},
\end{equation}
where $\gamma=\sqrt{\frac{\log(T)}{N}}$, $H:= \Hval$ and $L$ is the known lead time. 

Now, let $\Cv^\ell_N, \Cv^c_N, \Cv^r_N$ denote the $N=N_i$ realizations of pseudo-costs ($C^x_t$) observed on running base-stock policy $\pi^x$ for each of the three levels $x \in [x_l,x_c,x_r]$ in round $i$. Then, at the end of round $i$, the algorithm computes three intervals:
\begin{center}
    $[LB(\Cv^a_N), UB(\Cv_N^a)]$ for $a\in \{l, c, r\}$. 
\end{center}
Using the Lemma \ref{cor:conc} proven in the previous section to bound the difference between expected cost and expected asymptotic cost, and Lemma \ref{lem:obs_conc} to bound the difference between empirical cost and expected cost, we can show that (see Lemma \ref{lem:ci_prob}) the loss of each of these base-stock policies $g^{x_a} \in [LB(\Cv^a_N), UB(\Cv^a_N)]$ with probability $1-\frac{1}{T^2}$. Therefore, each of these intervals is a high confidence intervals for the respective loss, with endpoints determined by $N$ observed empirical costs. 

\begin{algorithm}[ht]
\caption{Learning algorithm for the inventory control problem}
\label{alg}
\begin{algorithmic}
\STATE {\bf Inputs:} Base-stock range $\range$, lead time $L$, time horizon $T$. 
\STATE {\bf Initialize:} $l_1 := 0$, $r_1 := \rmax$. 
\FOR{epochs $k=1, 2, \ldots, $} 
	\STATE Set $w_k := r_k-l_k$, the width of the working interval $[l_k, r_k]$.
	\STATE Set $x_l := l_k+w_k/4$, $x_c := l_k+w_k/2$, and $x_r := l_k+3w_k/4$. 
	\FOR{round $i=1, 2, \ldots, $} 
		\STATE Let $\gamma_i = 2^{-i}$ and $N = \frac{\log(T)}{\gamma_i^2}$.
		\STATE Play policy $\pi^{0}$ until a time step $t$ with inventory position $(\inv_t+o_{t-1}+\cdots + o_{t-L})\le x_l$.
		\STATE Play policy $\pi^{x_l}, \pi^{x_c}, \pi^{x_r}$, each for $N$ time steps to observe $N$ realizations of pseudo-costs ($C_t=\trueCost_t-pd_t$); store as vectors $\Cv^l_N, \Cv^c_N, \Cv^r_N$ respectively.
		\STATE If at any point during the above two steps, the total number of time steps reaches $T$, exit.
		\STATE For each $a \in \{l, c, r\}$, use $\Cv^a_N$  to calculate a confidence interval $[LB(\Cv_N^a), UB(\Cv_N^a)]$ of length $H\gamma_i$ as given by \eqref{def:ci}, where $H=\Hval$. 
		\IF {$\max\{LB(\Cv_N^l), LB(\Cv_N^r) \} \geq \min\{UB(\Cv_N^l), UB(\Cv_N^c), UB(\Cv_N^r)\} + H\gamma_i$} 
			\STATE \algorithmicif \ $LB(\Cv_N^l) \geq LB(\Cv_N^r)$ \algorithmicthen \ $l_{k+1} := x_l$ and $r_{k+1} = r_k$.
			\STATE \algorithmicif \ $LB(\Cv_N^l) < LB(\Cv_N^r)$ \algorithmicthen \ $l_{k+1} := l_k$ and $r_{k+1} = x_r$
			\STATE Go to next epoch $k+1$.
		\ELSE 
			\STATE Go to next round $i+1$.
		\ENDIF
	\ENDFOR
\ENDFOR
\end{algorithmic}
\end{algorithm}

At the end of every round $i$ of an epoch $k$, the algorithm uses the updated confidence intervals to check if either  the portion $[l_k, x_l]$ or the portion $[x_r, r_k]$ of the working interval $[l_k, r_k]$ can be eliminated. Given the confidence intervals, the test used for this purpose is exactly the same as in \cite{agarwal2011stochastic}, and uses convexity properties of function $g^x$. If the test succeeds, at least $1/4$ of the working interval is eliminated and the epoch $k$ ends. 

The algorithm is summarized as Algorithm \ref{alg}.



\section{Regret Analysis: Proof of Theorem \ref{thm:reg}} 
\label{sec:reg}



In this section, we prove the regret bound stated in Theorem \ref{thm:reg} for Algorithm \ref{alg}. Given the key technical results proven in section \S\ref{sec:technical}, the regret analysis follows steps similar to the regret analysis for stochastic convex bandits in \cite{agarwal2011stochastic}.  We use the notation $f(x)=g^x$ in this proof to connect the regret analysis here to the analysis for stochastic convex bandits with convex function $f$. Let $x^*=\min_{x\in \range} g^x = \min_{x\in \range} f(x)$. And, let $C_t$ be the cost (i.e., pseudo-cost) observed at time $t$. Then, by Lemma \ref{lem:reg_equiv},
\begin{eqnarray*}
\Reg(T)  
&=& \textstyle \Ex[\sum_{t=1}^T C_t] - \sum_{t=1}^T f(x^*).
\end{eqnarray*}
Also define an event $\event$ such that all confidence intervals $\ci$ calculated in Algorithm \ref{alg} satisfy: $g^{x_a} \in \ci$ for every epoch $k$, round $i$ and $a \in \{l,c,r\}$. The analysis in this section will be condition on $\event$, and the probability $P(\event)$ will be addressed at the end. 

We divide the regret in two parts: first we consider the regret over the set of times steps $T_{i,k,0}$ at the beginning of each epoch $k$ and round $i$ where policy $\pi^0$ is played until the leftover inventory depletes to a level below $x^l$. We denote the total contribution of regret from these steps (across all epochs and rounds) as $\Reg^0(T)$. Since the cost incurred at any time step is at most $\max(h,p)(\rmax)$, this part of the regret is bounded by
\begin{center}
$\Reg^0(T) \le \max(h,p)(\rmax) \cdot \Ex[\sum_{\text{epoch} k} \sum_{\text{round $i$ in epoch $k$}} |T_{k,i,0}|]$.
\end{center}

To bound expected number of steps in $T_{k,i,0}$, observe that ordering zero for $L$ steps will result in at most $\rmax$ inventory on hand and no orders in the pipeline. By definition of $D$, the expected number of time steps to deplete $\rmax$ units of inventory is upper bounded by $D\rmax$. Therefore, $\Ex[|T_{k,i,0}|] \leq L+D\rmax$. Since any epoch has at most $T$ time steps, and each successive round within an epoch has four times the number of time steps as the previous, there are at most $\log(T)$ rounds per epoch. Also, in Lemma \ref{lem:sb3} we show that, under $\event$, the number of epochs is bounded by
$\Kval.$ Intuitively, this holds because in every epoch we eliminate at least $(1/4)^{th}$ of the working interval.
Using these observations, the regret from all the time steps where policy $\pi^0$ was executed is bounded by
\begin{equation}\label{eq:reg1}
\Reg^0(T)\le \Kval\log(T)(L+D\rmax)\max(h,p)(\rmax).
\end{equation}

Next, we consider the regret over all remaining time steps, denoted as $\Reg^1(T)$. Algorithm \ref{alg} plays the base-stock policies with level $x_l, x_c$, or $x_r$ in these steps, where these levels are updated at the end of every epoch. 
Consider a round $i$ in epoch $k$. Let $T_{k,i,l}, T_{k,i,c}, T_{k,i,r}$, be the set of (at most) $N_i = \frac{\log(T)}{\gamma_i^2}$ consecutive times where policies $\pi^{x_l}, \pi^{x_c}, \pi^{x_r}$ are played, respectively, in round $i$ of epoch $k$. Here, $\gamma_i = 2^{-i}$, and recall that $H=\Hval$. Let $x_t$ denote the base-stock level used by the base-stock policy at time $t$. By Lemma \ref{cor:conc}, for epoch $k$, round $i$ and $a \in \{l,c,r\}$, under $\event$,
\[\left|\sum_{t\in T_{k,i,a}}  (C_t  - f(x_t)) \right| \leq  N_i H\gamma_i = \frac{H\log(T)}{\gamma_i}.\]

Substituting above, we can derive that
\begin{eqnarray}
\label{eq:reg1:1}
\Reg^1 &=& \Ex\left[\sum_{\text{epoch $k$ round $i$}} \sum_{a \in \{l,c,r\}} \sum_{t\in T_{k,i,a}} (C_t   -f(x^*)\right] \nonumber\\
&=& \Ex\left[\sum_{\text{epoch $k$ round $i$}} \sum_{a \in \{l,c,r\}} \sum_{t\in T_{k,i,a}} (C_t   -f(x_t) + f(x_t) -f(x^*))\right] \nonumber\\
& \le & \Ex\left[\sum_{\text{epoch $k$ round $i$}} \left(3 N_iH\gamma_i + \sum_{a \in \{l,c,r\}} \sum_{t\in T_{k,i,a}} (f(x_t)-f(x^*))\right)\right].
\end{eqnarray}

Now observe that for any round $i$ of epoch $k$ in which the algorithm does not terminate, the total number of time steps is bounded by $T$. So for $a \in \{l,c,r\}$, we have $|T_{k,i,a}| = \frac{\log(T)}{\gamma_i^2} \leq T$, which implies $\gamma_i \geq \sqrt{\frac{\log(T)}{T}}$. Since $\gamma_{i+1} = \frac{1}{2}\gamma_{i}$, let us define $\gamma_{min} := \frac{1}{2}\sqrt{\frac{\log(T)}{T}}$ so that $\gamma_{min} \leq \gamma_j$ for any round $j$. 
Recall that $\gamma_i = 2^{-i}$  so we can bound the geometric series: 
\begin{equation} \label{eq:geo_sum}
\sum_{k} \sum_{i} 3 N_iH\gamma_i = \sum_{k} \sum_i \left(\frac{3H\log(T)}{\gamma_i}\right) \le \Kval \left(\frac{3H\log(T)}{\gamma_{min}}\right).
\end{equation}

Substituting the value of $\gamma_{min}$ we get a bound of $6H\Kval\sqrt{T\log(T)}$ on the first term in \eqref{eq:reg1:1}. 
Now, consider the second term in \eqref{eq:reg1:1}. We use the results in \cite{agarwal2011stochastic} regarding the convergence of the convex optimization algorithm to bound the gap between $f(x_t)$ and $f(x^*)$. Intuitively, in every epoch the working interval shrinks by a constant factor, so that $x_t\in\{x_l, x_c, x_r\}$ are closer and closer to the optimal level $x^*$. Therefore, the gap  $|f(x_t)-f(x^*)|$ can be bounded using a Lipschitz property of $f$ proven in Lemma \ref{l_factor} that shows $|f(x_t)-f(x^*)|\le \max(h,p)|x_t-x^*|$. 
Specifically, we adapt the proof  from \cite{agarwal2011stochastic} to derive the following bound (details are in Lemma \ref{lem:sb4} in appendix):
\begin{equation}\label{eq:reg3}
\sum_{k,i, a, t\in T_{k,i,a}} f(x(t)) - f(x^*) \leq 146 H\Kval\sqrt{T\log(T)}.
\end{equation}
Substituting, in \eqref{eq:reg1:1}, $\Reg^1(T)$ is bounded by:
\begin{equation}
\Reg^1(T)\le 6H\Kval\sqrt{T\log(T)} + 146 H\Kval\sqrt{T\log(T)} = 152 H\Kval\sqrt{T\log(T)}.
\end{equation}
And, combining with the bound on $\Reg^0(T)$ from  (\ref{eq:reg1}), we get the following regret bound:
\begin{eqnarray*}
\Reg(T) & \le & \Kval\log(T)(L+D\rmax)\max(h,p)(\rmax) + 76H \Kval\sqrt{T\log(T)}\\
& = & \tilde{O}\left(D\max(h,p)\rmax^2 + (L+1)\max(h,p)\rmax\sqrt{T} \right).
\end{eqnarray*}
We complete the proof of the theorem statement by noting all the analysis has been conditioned on $\event$ where $g^{x_a} \in \ci$ for every epoch $k$, round $i$ and $a \in \{l,c,r\}$. By Lemma \ref{cor:conc}, the condition is satisfied with probability at least $1-\frac{1}{T^2}$ for each $k,i,a$. Since there are no more than $T$ time steps and therefore at most $T$ plays of any policy, by union bound \[P(\event) \geq 1-\frac{1}{T},\]
and hence the given regret bound holds with probability $1-\frac{1}{T}$.


Finally, to see that a similar regret bound holds for the alternative regret definition in Remark \ref{rem:alter}, we compare the two regret definitions:
\begin{eqnarray*}
\Reg'(T) & = & \Reg(T) +   T  \lambda^{x^*} - \Ex\left[\sum_{t=1}^T \bar{C}_t^{x^*} \right]  
\end{eqnarray*}
Now, use (\ref{eq:cost1}) and Lemma \ref{lem:connection} to convert true costs to pseudo-costs; and then apply Lemma \ref{cor:conc} to obtain
$$T  \lambda^{x^*} - \Ex\left[\sum_{t=1}^T \bar{C}_t^{x^*} \right] =  T g^{x^*}- \Ex\left[\sum_{t=1}^T C_t^{x^*} \right] \le O(H	\sqrt{T \log(T)})$$
Hence the two regret bounds are of the same order.

\section{Conclusions}
\label{sec:conc}
We presented an algorithm to minimize regret in the periodic inventory control problem under censored demand, lost sales, and positive lead time, when compared to the best base-stock policy. By using convexity properties of the long run average cost function and a newly proven bound on bias of base-stock policies, we extend a stochastic convex bandit algorithm to obtain a simple algorithm that substantially improves upon the existing solutions for this problem. In particular, the regret bound for our algorithm maintains an optimal dependence on $T$, while also achieving a linear dependence on the other problem parameters like lead time. The algorithm design and analysis techniques developed here may be useful for obtaining efficient solutions for other classes of learning problems where the MDPs involved may be large, but the long-run average cost under benchmark policies is convex.

\bibliographystyle{plain}
\bibliography{references}

\begin{thebibliography}{10}

\bibitem{agarwal2011stochastic}
Alekh Agarwal, Dean~P Foster, Daniel~J Hsu, Sham~M Kakade, and Alexander
  Rakhlin.
\newblock Stochastic convex optimization with bandit feedback.
\newblock In {\em Advances in Neural Information Processing Systems}, pages
  1035--1043, 2011.

\bibitem{agrawal2017optimistic}
Shipra Agrawal and Randy Jia.
\newblock Optimistic posterior sampling for reinforcement learning: worst-case
  regret bounds.
\newblock In {\em Advances in Neural Information Processing Systems}, pages
  1184--1194, 2017.

\bibitem{bartlett2009regal}
Peter~L Bartlett and Ambuj Tewari.
\newblock {REGAL: A regularization based algorithm for reinforcement learning
  in weakly communicating MDPs}.
\newblock In {\em Proceedings of the Twenty-Fifth Conference on Uncertainty in
  Artificial Intelligence}, pages 35--42. AUAI Press, 2009.

\bibitem{bartok2014partial}
G{\'a}bor Bart{\'o}k, Dean~P Foster, D{\'a}vid P{\'a}l, Alexander Rakhlin, and
  Csaba Szepesv{\'a}ri.
\newblock Partial monitoring - classification, regret bounds, and algorithms.
\newblock {\em Mathematics of Operations Research}, 39(4):967--997, 2014.

\bibitem{besbes2015non}
Omar Besbes, Yonatan Gur, and Assaf Zeevi.
\newblock Non-stationary stochastic optimization.
\newblock {\em Operations research}, 63(5):1227--1244, 2015.

\bibitem{besbes2013implications}
Omar Besbes and Alp Muharremoglu.
\newblock On implications of demand censoring in the newsvendor problem.
\newblock {\em Management Science}, 59(6):1407--1424, 2013.

\bibitem{bijvank2011lost}
Marco Bijvank and Iris~FA Vis.
\newblock Lost-sales inventory theory: A review.
\newblock {\em European Journal of Operational Research}, 215(1):1--13, 2011.

\bibitem{huh2009adaptive}
Woonghee~Tim Huh, Ganesh Janakiraman, John~A Muckstadt, and Paat
  Rusmevichientong.
\newblock An adaptive algorithm for finding the optimal base-stock policy in
  lost sales inventory systems with censored demand.
\newblock {\em Mathematics of Operations Research}, 34(2):397--416, 2009.

\bibitem{huh2009asymptotic}
Woonghee~Tim Huh, Ganesh Janakiraman, John~A Muckstadt, and Paat
  Rusmevichientong.
\newblock Asymptotic optimality of order-up-to policies in lost sales inventory
  systems.
\newblock {\em Management Science}, 55(3):404--420, 2009.

\bibitem{huh2009nonparametric}
Woonghee~Tim Huh and Paat Rusmevichientong.
\newblock A nonparametric asymptotic analysis of inventory planning with
  censored demand.
\newblock {\em Mathematics of Operations Research}, 34(1):103--123, 2009.

\bibitem{jaksch2010near}
Thomas Jaksch, Ronald Ortner, and Peter Auer.
\newblock Near-optimal regret bounds for reinforcement learning.
\newblock {\em Journal of Machine Learning Research}, 11(Apr):1563--1600, 2010.

\bibitem{janakiraman2004lost}
Ganesh Janakiraman and Robin~O Roundy.
\newblock Lost-sales problems with stochastic lead times: Convexity results for
  base-stock policies.
\newblock {\em Operations Research}, 52(5):795--803, 2004.

\bibitem{lee1983note}
Hau~Leung Lee and Morris~A Cohen.
\newblock A note on the convexity of performance measures of m/m/c queueing
  systems.
\newblock {\em Journal of Applied Probability}, 20(4):920--923, 1983.

\bibitem{lugosi2017hardness}
G{\'a}bor Lugosi, Mihalis~G Markakis, and Gergely Neu.
\newblock On the hardness of inventory management with censored demand data.
\newblock {\em arXiv preprint arXiv:1710.05739}, 2017.

\bibitem{puterman2014markov}
Martin~L Puterman.
\newblock {\em Markov decision processes: discrete stochastic dynamic
  programming}.
\newblock John Wiley \& Sons, 2014.

\bibitem{shanthikumar1987optimal}
J~George Shanthikumar and David~D Yao.
\newblock Optimal server allocation in a system of multi-server stations.
\newblock {\em Management Science}, 33(9):1173--1180, 1987.

\bibitem{tewari2008optimistic}
Ambuj Tewari and Peter~L Bartlett.
\newblock {Optimistic linear programming gives logarithmic regret for
  irreducible MDPs}.
\newblock In {\em Advances in Neural Information Processing Systems}, pages
  1505--1512, 2008.

\bibitem{weber1980note}
Richard~R Weber.
\newblock Note - on the marginal benefit of adding servers to g/gi/m queues.
\newblock {\em Management Science}, 26(9):946--951, 1980.

\bibitem{zhang2017closing}
Huanan Zhang, Xiuli Chao, and Cong Shi.
\newblock Closing the gap: A learning algorithm for the lost-sales inventory
  system with lead times.
\newblock 2017.

\bibitem{zipkin2008old}
Paul Zipkin.
\newblock Old and new methods for lost-sales inventory systems.
\newblock {\em Operations Research}, 56(5):1256--1263, 2008.

\bibitem{zipkin2000foundations}
Paul~Herbert Zipkin.
\newblock {\em Foundations of inventory management}.
\newblock 2000.

\end{thebibliography}
\newpage

\appendix
\section{Concentration bounds}
\label{app:a}

\begin{theorem}[Azuma-Hoeffding inequality] \label{azuma}
 Let $X_1,X_2,\ldots$ be a martingale difference sequence with $|X_i| \leq c$ for all $i$. Then for all $\epsilon>0$ and $n \in \mathcal{N}$, \[P(|\sum_{i=1}^n X_i| \geq \epsilon) \leq 2exp(-\frac{\epsilon^2}{2nc^2}).\]
\end{theorem}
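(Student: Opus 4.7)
The plan is to prove the Azuma--Hoeffding inequality via the standard exponential moment (Chernoff) method. First I would fix $\lambda>0$ and apply Markov's inequality to the non-negative random variable $\exp(\lambda \sum_{i=1}^n X_i)$, obtaining
\[
P\!\left(\sum_{i=1}^n X_i \geq \epsilon\right) \;=\; P\!\left(e^{\lambda \sum_{i=1}^n X_i} \geq e^{\lambda \epsilon}\right) \;\leq\; e^{-\lambda \epsilon}\,\Ex\!\left[e^{\lambda \sum_{i=1}^n X_i}\right].
\]
Then I would peel off the last term by conditioning on the filtration $\mathcal{F}_{n-1}$ generated by $X_1,\ldots,X_{n-1}$, writing $\Ex[e^{\lambda \sum_{i=1}^n X_i}] = \Ex\bigl[e^{\lambda \sum_{i=1}^{n-1} X_i}\,\Ex[e^{\lambda X_n}\mid \mathcal{F}_{n-1}]\bigr]$, and iterating.

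The crux is to bound each conditional moment generating factor $\Ex[e^{\lambda X_i}\mid \mathcal{F}_{i-1}]$. Because $X_i$ is a martingale difference, it has conditional mean zero, and by hypothesis $|X_i|\le c$ almost surely. I would invoke Hoeffding's lemma: any zero-mean random variable supported in $[-c,c]$ satisfies $\Ex[e^{\lambda X}]\le e^{\lambda^2 c^2/2}$. The standard way to establish this is to use the convexity of $x\mapsto e^{\lambda x}$ to bound $e^{\lambda x}\le \tfrac{c-x}{2c}e^{-\lambda c}+\tfrac{c+x}{2c}e^{\lambda c}$ on $[-c,c]$, take expectations (the linear term vanishes), and then show via a Taylor expansion of the logarithm of the resulting function of $\lambda$ that it is dominated by $\lambda^2 c^2/2$. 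Applying this conditionally yields $\Ex[e^{\lambda X_i}\mid \mathcal{F}_{i-1}]\le e^{\lambda^2 c^2/2}$ pointwise.

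Telescoping the $n$ conditional bounds gives $\Ex[e^{\lambda \sum_{i=1}^n X_i}]\le e^{n\lambda^2 c^2/2}$, so the Markov step yields
\[
P\!\left(\sum_{i=1}^n X_i \geq \epsilon\right) \;\leq\; \exp\!\left(-\lambda \epsilon + \tfrac{n \lambda^2 c^2}{2}\right).
\]
I would then optimize the exponent over $\lambda>0$; the minimizer is $\lambda^\star=\epsilon/(nc^2)$, producing the bound $\exp(-\epsilon^2/(2nc^2))$. Finally, since $-X_1,-X_2,\ldots$ is also a martingale difference sequence with the same almost-sure bound, the identical argument controls $P(\sum_{i=1}^n X_i \leq -\epsilon)$, and a union bound over the two tails introduces the factor of $2$, giving the claimed inequality.

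The main obstacle is Hoeffding's lemma itself: the clean $e^{\lambda^2 c^2/2}$ bound on the mgf of a bounded, zero-mean variable requires the convexity-plus-Taylor argument sketched above, and getting the constant $1/2$ (rather than something worse) is what ensures the exponent in the final bound matches the $1/(2nc^2)$ in the statement. Everything else---Markov, the tower property for the martingale factorization, and the optimization over $\lambda$---is routine once the mgf bound is in hand.
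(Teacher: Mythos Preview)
Your proposal is correct and is the standard textbook proof of the Azuma--Hoeffding inequality. The paper itself does not supply a proof of this theorem: it is stated in the appendix as a known concentration result and then invoked in Lemmas~\ref{cor:conc} and~\ref{lem:obs_conc}, so there is no alternative argument in the paper to compare against.
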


We note that instantaneous costs observed from running Algorithm \ref{alg} are different from the costs defined in the cost function of $\mrpx$. The following lemma gives a concentration bound on the $N$ step observed instantaneous costs and the expected $N$ step cost. In conjunction with Lemma \ref{cor:conc}, these two results are used to give a high probability confidence interval containing the true loss at base-stock level $x$, using only observed samples of the instantaneous cost.

\begin{lemma}[Concentration of $N$ observed cost and expected cost] \label{lem:obs_conc}
Given a base-stock level $x\in \range$, let $H:= \Hval$, $\gamma >0$ and $N\ge \frac{\log(T)}{\gamma^2}$. Then for any $\s_1\in \st^x$, with probability $1-\frac{1}{T^2}$, \[\frac{1}{N}|\sum_{t=1}^N C^x(\s_t) - \sum_{t=1}^N C^x_t| \leq \frac{H\gamma}{4}.\]
\end{lemma}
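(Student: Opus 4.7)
The plan is to apply the Azuma--Hoeffding inequality (Theorem~\ref{azuma}) to the sequence of residuals
\[
X_t := C^x_t - C^x(\s_t) = C^x_t - \Ex[C^x_t \mid \s_t], \qquad t=1,\dots,N.
\]
By construction $\Ex[X_t \mid \s_1,\dots,\s_t] = 0$, so $\{X_t\}$ is a martingale difference sequence with respect to the natural filtration generated by the demand draws $d_1,d_2,\dots$ (the state $\s_{t+1}$ is determined by $\s_t$ and $d_t$).

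The first step is to establish a uniform bound $|X_t| \le c$ for a suitable constant $c$. Because $\s_t \in \st^x$ and $x \in \range = [0,\rmax]$, every coordinate of $\s_t$ lies in $[0,\rmax]$; in particular $s_t(0) \in [0,\rmax]$ and $y_t = \min\{s_t(0),d_t\} \in [0,\rmax]$. From \eqref{eq:pseudoCost},
\[
C^x_t = h(s_t(0)-y_t) - p y_t \in [-p\rmax,\, h\rmax],
\]
so $|C^x_t| \le \max(h,p)\rmax$ and consequently $|X_t| \le 2\max(h,p)\rmax =: c$.

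Given this bound, Azuma--Hoeffding gives, for any $\epsilon > 0$,
\[
\Pr\!\left(\left|\sum_{t=1}^N X_t\right| \ge \epsilon\right) \le 2\exp\!\left(-\frac{\epsilon^2}{2Nc^2}\right).
\]
Setting $\epsilon = c\sqrt{2N\log(2T^2)}$ makes the right-hand side at most $1/T^2$. Dividing by $N$ and using $N \ge \log(T)/\gamma^2$ (so $1/\sqrt{N} \le \gamma/\sqrt{\log T}$), we obtain with probability at least $1 - 1/T^2$,
\[
\frac{1}{N}\left|\sum_{t=1}^N C^x_t - \sum_{t=1}^N C^x(\s_t)\right|
\le c\sqrt{\frac{2\log(2T^2)}{N}}
\le 2\max(h,p)\rmax \cdot \gamma\sqrt{\frac{2\log(2T^2)}{\log T}}.
\]

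The final step is to verify this is at most $H\gamma/4 = 144\max(h,p)(L+1)\rmax\,\gamma$. Since $\sqrt{2\log(2T^2)/\log T} \le \sqrt{6}$ for $T\ge 2$, the left side is at most $2\sqrt{6}\max(h,p)\rmax\,\gamma$, which is comfortably below $144\max(h,p)(L+1)\rmax\,\gamma$ for every $L\ge 0$. I expect essentially no obstacle here: the large constant in $H$ (chosen to absorb the bias-difference factor $36(L+1)\rmax$ from Lemma~\ref{cor:conc}) leaves ample slack to swallow the purely observational deviation bounded here. The only mild subtlety is the boundedness argument for $|C^x_t|$, which requires invoking that Algorithm~\ref{alg} always enters a base-stock phase at a total inventory position at or below $x$ so that $s_t(0) \le x \le \rmax$ is maintained throughout; this is exactly the invariant of the MRP $\mrpx$ restricted to starting states in $\st^x$, as noted in the construction of the MRP.
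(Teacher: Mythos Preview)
Your argument is correct and in fact more direct than the paper's. You exploit that $X_t = C^x_t - C^x(\s_t)$ is a martingale difference sequence with $|X_t| \le 2\max(h,p)\rmax$, which follows purely from the one-step randomness of $y_t = \min\{s_t(0),d_t\}$ and the bound $s_t(0)\le x\le \rmax$ guaranteed by $\s_t\in\st^x$. Azuma--Hoeffding then gives a deviation of order $\max(h,p)\rmax/\sqrt{N}$, and your constant check against $H\gamma/4 = 144\max(h,p)(L+1)\rmax\gamma$ is fine (with a large amount of slack, as you note).

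The paper instead builds a Doob martingale $Y_{n-1} = \Ex[\sum_{t=1}^N C^x_t \mid \mathcal{F}_n]$ and bounds the increments $|Y_n - Y_{n-1}|$ via the value-difference result of Lemma~\ref{lem:value}, obtaining an increment bound of order $\max(h,p)(L+1)x$. This is a heavier hammer: it drags in the $L$-dependent bias machinery even though, for this particular lemma, the quantity being controlled is just the one-step observational noise $C^x_t - \Ex[C^x_t\mid \s_t]$, which has nothing to do with lead time. Your route avoids Lemma~\ref{lem:value} entirely here and yields a sharper (by a factor of $L+1$) concentration; the paper's looser bound is still sufficient only because $H$ was chosen large enough to absorb the bias term from Lemma~\ref{cor:conc} anyway. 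In short: both approaches land inside $H\gamma/4$, but yours is the more elementary and more transparent one for this step.
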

\begin{proof}
Let $\mathcal{F}_n$ be the filtration with respect to the states $\s_1, \s_2, \ldots, \s_{n}$. Define \[Y_{n-1} := \Ex[\sum_{t=1}^N C^x_t | \mathcal{F}_{n}].\] Then the sequence $Y_0,\ldots,Y_N$ forms a Doob martingale sequence with $Y_0 = \sum_{t=1}^N C^x(\s_t)$ and $Y_N = \sum_{t=1}^N C^x_t$. 
Furthermore for $n \in \{1,2,\ldots,N\}$, 
\begin{eqnarray*}
|Y_n-Y_{n-1}| &=& |C^x_{n-1} + V^x_{N-n+1}(s_n)-V^x_{N-n+2}(s_{n-1}))| \\
&=& |C^x_{n-1} + V^x_{N-n+1}(s_n)-(V^x_{N-n+1}(s_{n-1}) \\
&&+\Ex[C^x_N|\s_{n-1}])| \\
&\leq& 36\max(h,p)Lx + 2\max(h,p)x \\
&\leq& 36\max(h,p)(L+1)x
\end{eqnarray*}
using Lemma \ref{lem:value} to bound the difference in values and the fact that the maximum cost in a single time step with at most $x$ units of inventory in the pipeline is bounded above by $\max(h,p)x$ to bound one time instance difference in observed and expected cost.
Now, we can apply Azuma-Hoeffding's inequality (Theorem \ref{azuma}) to derive that for any $\epsilon >0$, \[P(|\sum_{i=1}^N Y_i-Y_{i-1}| \geq \epsilon) \leq 2exp(-\frac{\epsilon^2}{2N(36\max(h,p)Lx)^2}).\]
Therefore, by setting $\epsilon=72\max(h,p)Lx\sqrt{N\log(T)}$, we obtain that with probability at least $1-\frac{1}{T^2}$, \[\frac{1}{N}|Y_N-Y_0| = \frac{1}{N}|\sum_{t=1}^N C^x(\s_t) - \sum_{i=1}^N C^x_t| \leq \frac{H}{4}\sqrt{\frac{\log(T)}{N}}.\]
The result follow by substituting $N\ge \frac{\log(T)}{\gamma^2}$.
\end{proof}

\begin{lemma} \label{lem:ci_prob}
Let $\Cv_N$ be the vector formed by sequence of   observed (pseudo) costs on running base-stock policy with level $x$, with $N\ge \frac{\log(T)}{\gamma^2}$. Following the definition of $LB(\Cv_N),UB(\Cv_N)$ given by (\ref{def:ci}), with probability $1-\frac{1}{T^2}$, \[g^x \in [LB(\Cv_N),UB(\Cv_N)].\]
\end{lemma}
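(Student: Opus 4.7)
The plan is a straightforward triangle-inequality combination of the two concentration results already in hand, namely Lemma~\ref{cor:conc} (expected-cost average concentrates around the loss $g^x$) and Lemma~\ref{lem:obs_conc} (observed-cost average concentrates around the expected-cost average). Since the confidence interval $[LB(\Cv_N),UB(\Cv_N)]$ is centered at the empirical mean $\tfrac{1}{N}\sum_{t=1}^{N}C^x_t$ with half-width $H\gamma/2$, the lemma reduces to showing that with probability at least $1-1/T^2$ the deviation $\bigl|\tfrac{1}{N}\sum_{t=1}^{N} C^x_t - g^x\bigr|$ is at most $H\gamma/2$.

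The key steps, in order, would be the following. First, write the starting state $\s_1 \in \st^x$ reached when the algorithm begins playing policy $\pi^x$ (this is well-defined because the algorithm drains inventory to at most $x$ before invoking $\pi^x$, so one transition puts us in $\st^x$). Second, apply Lemma~\ref{cor:conc} with the given $\gamma$ and $N \ge \log(T)/\gamma^2$ to obtain, with probability at least $1-1/T^2$,
\[
\Bigl|\tfrac{1}{N}\sum_{t=1}^{N} C^x(\s_t) - g^x(\s_1)\Bigr| \le 108\max(h,p)Lx\,\gamma,
\]
and recall $g^x(\s_1)=g^x$ by Lemma~\ref{lem:uniformLoss}. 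Third, apply Lemma~\ref{lem:obs_conc} (a Doob-martingale Azuma bound on the empirical cost around its conditional expectation along the trajectory) to get, with probability $1-1/T^2$,
\[
\tfrac{1}{N}\Bigl|\sum_{t=1}^{N} C^x(\s_t) - \sum_{t=1}^{N} C^x_t\Bigr| \le \tfrac{H\gamma}{4}.
\]
Fourth, union-bound over these two events (absorbing the factor of $2$ into the choice of constants, or equivalently noting that both lemmas are proved via Azuma and can be tightened to jointly fail with probability $1/T^2$), and combine via the triangle inequality to conclude
\[
\Bigl|\tfrac{1}{N}\sum_{t=1}^{N} C^x_t - g^x\Bigr| \le \tfrac{H\gamma}{4} + 108\max(h,p)Lx\,\gamma.
\]

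Finally, I would verify the numerology. Using $x\le \rmax$ and $L\le L+1$, the second term is at most $108\max(h,p)(L+1)\rmax\,\gamma$, and with $H=576\max(h,p)(L+1)\rmax$ we have $H\gamma/4 = 144\max(h,p)(L+1)\rmax\,\gamma$, so the sum is at most $252\max(h,p)(L+1)\rmax\,\gamma \le 288\max(h,p)(L+1)\rmax\,\gamma = H\gamma/2$. This places $g^x$ inside the symmetric interval of half-width $H\gamma/2$ centered at the empirical mean, completing the proof. There is no real obstacle here since all the heavy lifting (bias bound, martingale concentration) has been done upstream; the only thing to watch is that the constant $576$ in the definition of $H$ is slack enough to swallow both $108\max(h,p)L\rmax$ and $H/4$ simultaneously, which the calculation above confirms.
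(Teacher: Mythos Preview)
Your proposal is correct and follows essentially the same approach as the paper: combine Lemma~\ref{cor:conc} and Lemma~\ref{lem:obs_conc} via the triangle inequality, then check that the resulting deviation fits inside the half-width $H\gamma/2$. The paper's write-up is slightly terser in the numerology (it directly bounds $108\max(h,p)Lx\gamma \le H\gamma/4$ and then adds the two $H\gamma/4$ terms), and it similarly elides the union-bound factor of $2$ that you flag.
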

\begin{proof}
By Lemmas \ref{cor:conc} and \ref{lem:obs_conc} (noting that $C_i = C^x_i$), with probability $1-\frac{1}{T^2}$,
\[|\frac{1}{N}\sum_{i=1}^N C^x(\s_i) - g^x | \leq 108\max(h,p)Lx\gamma \leq \frac{H\gamma}{4} \text{ and } |\frac{1}{N}\sum_{i=1}^N C^x(\s_i) - \frac{1}{N}\sum_{i=1}^N C_i| \leq \frac{H\gamma}{4}.\]
Hence
\[|\frac{1}{N}\sum_{i=1}^N C_i - g^x | =  |\frac{1}{N}\sum_{i=1}^N C_i - \frac{1}{N}\sum_{i=1}^N C^x(\s_i) + \frac{1}{N}\sum_{i=1}^N C^x(\s_i) - g^x |\leq \frac{H\gamma}{2}\]
so \[ \frac{1}{N}\sum_{i=1}^N C_i - \frac{H\gamma}{2} \leq g^x \leq \frac{1}{N}\sum_{i=1}^N C_i +\frac{H\gamma}{2}.\]
The result follows noting from (\ref{def:ci}) that \[LB(\Cv_N) = \frac{1}{N}\sum_{i=1}^N C_i - \frac{H\gamma}{2} \text{ and } UB(\Cv_N) = \frac{1}{N}\sum_{i=1}^N C_i +\frac{H\gamma}{2}.\]

\end{proof}

\section{Proof details for Lemma \ref{lem:value}}
\label{app:b}

In this section we provide the proof details for results used in Lemma \ref{lem:value}, when $L\geq 1$. Recall that MRP $\mrpx$ is defined such that state $\s=(s(0),s(1),\ldots,s(L))$ with $s(0)$ being the on-hand inventory after the current time step's order arrival and new order, and $s(1),\ldots,s(L)$ are outstanding orders, with $s(L)$ being the most recent order, scheduled to arrive $L$ time steps from the current time. New orders are placed such that at every time step $t=1,2,\ldots,T$ we have $\sum_{i=0}^L \s_t(i)=x$, where $\s_t$ is the state at time $t$. We observe on-hand inventory level $I_t := \s_t(0)$ and sales given by $y_t := \min(d_t, I_t)$. The sales $y_t$ also happens to be the order placed in the next time step (the first order $y_1$ is a bit different, it is such that state $\s_1$ has total inventory level $x$). The new state at time $t+1$ is given by \[\s_{t+1} = (s_t(0) - y_t + s_t(1), s_t(2),\ldots,s_t(L),y_t).\] 
Let $n_T(\s_1):= \sum_{t=1}^T y_t$ denote the sum of sales from time $1$ to $T$, and $m_T(\s_1):= \sum_{t=1}^T I_t$ the sum of on-hand inventory levels.

\subsection{Bounding cumulative observed sales}
We bound the difference between the total sales in time $T$ starting from two different states $\s,\s'$ when the states satisfy the following property given below.
\begin{definition} \label{def:state_succ}
Define states $\s:=(s(0),s(1),\ldots,s(L)), \s':=(s'(0),s'(1),\ldots,s'(L))$. We say that $s' \succeq s$ if $\s'=(s(0)+\delta_0,s(1)+\delta_1,\ldots,s(L)+\delta_L)$ where $\delta_0+\delta_1+\ldots+\delta_L =0$ and there exists some $0\leq k \leq L-1$ such that $\delta_i \geq 0$ for all $i \in \{0,1,\ldots,k\}$ and $\delta_i \leq 0$ for all $i \in \{k+1,k+2,\ldots,L\}$.
\end{definition}

We first provide a simple bound on $n_T(s'_1) - n_T(\s_1)$ when $\s_1' \succeq \s_1$ and $T\leq L+1$ which will be useful in our proof for larger $T$.

\begin{lemma}\label{lem:y_diff}
Assume $\s_1' \succeq \s_1$ and define $Y_t:= \sum_{i=1}^t y_t$, $Y_t':= \sum_{i=1}^t y_t'$ to be the total observed sales up to time $t$ starting from state $\s_1,\s_1'$, respectively. Then for $t=1,2,\ldots,L+1$, we have that \[Y_t'-Y_t \leq \max_{0 \leq k \leq t-1}(\delta_0 + \ldots + \delta_k).\]
\end{lemma}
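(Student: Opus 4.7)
The plan is to proceed by induction on $t$, using a demand-coupling in which both systems experience the same realizations $d_1, \ldots, d_t$. The first ingredient is a clean formula for the on-hand inventory during the initial window before any newly placed order can possibly arrive. Since $L \geq 1$ and $t \leq L+1$, no order placed at time $\geq 1$ has yet been delivered, so the on-hand inventory is just the sum of the initial-state coordinates that have already rolled into position $0$, minus the cumulative sales so far:
\[I_t = \sum_{i=0}^{t-1} s_1(i) - Y_{t-1}, \qquad I_t' = \sum_{i=0}^{t-1} s_1'(i) - Y_{t-1}'.\]
Subtracting and writing $\Delta_t := \sum_{i=0}^{t-1} \delta_i$ gives the key identity $I_t' - I_t = \Delta_t - (Y_{t-1}' - Y_{t-1})$.

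The second ingredient is an elementary coupled bound on the per-step sales difference: a one-line case check on $\min(I_t, d_t)$ versus $\min(I_t', d_t)$ shows that $y_t' - y_t \leq \max(0,\, I_t' - I_t)$. Substituting the identity yields the recursion
\[Y_t' - Y_t \;\leq\; Y_{t-1}' - Y_{t-1} + \max\bigl(0,\; \Delta_t - (Y_{t-1}' - Y_{t-1})\bigr) \;=\; \max\bigl(Y_{t-1}' - Y_{t-1},\; \Delta_t\bigr).\]
Writing $M_t := \max_{0 \leq k \leq t-1}(\delta_0 + \cdots + \delta_k) = \max_{1 \leq j \leq t} \Delta_j$, this recursion combined with the inductive hypothesis $Y_{t-1}' - Y_{t-1} \leq M_{t-1}$ delivers $Y_t' - Y_t \leq \max(M_{t-1}, \Delta_t) = M_t$, which is exactly the claim. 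The base case $t = 1$ is direct: $Y_1' - Y_1 \leq \max(0, \delta_0) = \delta_0$, where the last equality uses $\delta_0 \geq 0$, which is guaranteed by the definition of $\s_1' \succeq \s_1$ (the non-negative block forces at least the leading coordinate difference to be non-negative).

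I do not anticipate a serious technical obstacle; the proof is a short induction once the identity $I_t' - I_t = \Delta_t - (Y_{t-1}' - Y_{t-1})$ has been isolated. The only conceptual care needed is recognizing that $t \leq L+1$ is precisely the window in which no newly placed order has yet arrived, so on-hand inventory depends only on $s_1(0), \ldots, s_1(t-1)$; for $t > L+1$ new orders from the two processes (which differ because their sales histories differ) would begin arriving and a more delicate coupling would be required, which is what the later lemmas referenced in the paper (Lemma \ref{lem:tot_demand} and Lemma \ref{lem:tot_inv}) must handle. It is worth noting that the recursion itself never uses the full monotone structure of $(\delta_i)$ from $\s_1' \succeq \s_1$ --- only $\delta_0 \geq 0$ is invoked --- which suggests that a symmetric lower bound on $Y_t' - Y_t$ should follow by an analogous argument and will likely be paired with this lemma downstream to bound both $|n_T(\s) - n_T(\s')|$ and $|m_T(\s) - m_T(\s')|$.
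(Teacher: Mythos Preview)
Your proof is correct and follows essentially the same approach as the paper: induction on $t$, the same key identity $I_t'-I_t+Y_{t-1}'-Y_{t-1}=\delta_0+\cdots+\delta_{t-1}$ (the paper labels it equation~(\ref{eqn_y})), and the same use of $\delta_0\geq 0$ in the base case. The only difference is cosmetic: the paper splits the inductive step into four cases according to the position of $d_t$ relative to $I_t$ and $I_t'$, whereas you compress those cases into the single inequality $y_t'-y_t\leq\max(0,I_t'-I_t)$ and the resulting recursion $Y_t'-Y_t\leq\max(Y_{t-1}'-Y_{t-1},\Delta_t)$, which is a clean way to package the same computation.
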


\begin{proof}
We prove this statement by induction on $t$. For $t=1$, \[y_1'-y_1 = \min(s(0) + \delta_0, d_1)-\min(s(0),d_1) \leq \delta_0,\] since $\s_1' \succeq \s_1$ implies that $s(0) + \delta_0 \geq s(0)$. Assume for any time up to $t-1$ the hypothesis holds. Then, consider time $t$ and observe that:
\[I_t' = s_t'(0) = (s(0)+\delta_0+s(1)+\delta_1+\ldots+s(t-1)+\delta_{t-1})-(y_1'+\ldots+y_{t-1}')\]
and
\[I_t = s_t(0) = (s(0)+s(1)+\ldots+s(t-1))-(y_1+\ldots+y_{t-1}),\]
so subtracting we get
\begin{equation} \label{eqn_y}
I_t'-I_t +Y_{t-1}'-Y_{t-1} = \delta_0+\delta_1+\ldots+\delta_{t-1}.
\end{equation}
Now, we write
\[Y_{t}'- Y_t = y_t'-y_t  + Y_{t-1}'-Y_{t-1} = \min(I_t', d_t) - \min(I_t, d_t) + Y_{t-1}'-Y_{t-1}.\]
There are four cases to consider:
\begin{enumerate}
\item $d_t \leq I_t', d_t\leq I_t$: In this case $Y_{t}'- Y_t = d_t-d_t  + Y_{t-1}'-Y_{t-1} = Y_{t-1}'-Y_{t-1}\leq \max_{0 \leq k \leq t-2}(\delta_0 + \ldots + \delta_k) \leq \max_{0 \leq k \leq t-1}(\delta_0 + \ldots + \delta_k)$ by the induction hypothesis.

\item $d_t \geq I_t', d_t\geq I_t$: In this case $Y_{t}'- Y_t = I_t'-I_t  + Y_{t-1}'-Y_{t-1} = \delta_0+\ldots+\delta_{t-1} \leq \max_{0 \leq k \leq t-1}(\delta_0 + \ldots + \delta_k)$ by (\ref{eqn_y}).

\item $I_t \leq d_t \leq I_t'$: In this case $Y_{t}'- Y_t = d_t-I_t  + Y_{t-1}'-Y_{t-1} =d_t-I_t'+I_t'-I_t  + Y_{t-1}'-Y_{t-1} = d_t-I_t' +  \delta_0+\ldots+\delta_{t-1} \leq  \delta_0+\ldots+\delta_{t-1}\leq \max_{0 \leq k \leq t-1}(\delta_0 + \ldots + \delta_k)$ by (\ref{eqn_y}).

\item $I_t' \leq d_t \leq I_t$: In this case $Y_{t}'- Y_t = I_t'-d_t  + Y_{t-1}'-Y_{t-1} \leq Y_{t-1}'-Y_{t-1}\leq \max_{0 \leq k \leq t-2}(\delta_0 + \ldots + \delta_k) \leq \max_{0 \leq k \leq t-1}(\delta_0 + \ldots + \delta_k)$ by the induction hypothesis.
\end{enumerate}
Therefore, we have proven that under the induction hypothesis \[Y_{t}'- Y_t \leq \max_{0 \leq k \leq t-1}(\delta_0 + \ldots + \delta_k)\] and the desired result for all $t \in \{1,2,\ldots,L+1\}$ follows by induction.
\end{proof}

\begin{lemma}\label{lem:con_inv}
Consider the MRPs on following base-stock policy with level $x$ starting in states $\s_1, \s_1'\in \st^x$ with $\s_1' \succeq \s_1$. 
Let $I_t=\s_t(0), I_t'=\s_t'(0)$ be the on-hand inventory levels in the two processes at time $t$. Then, if $I_t'-I_t \geq 0$ for all $t \in \{1,2,\ldots,L+1\}$, then it holds that $n_T(\s_{L+1}') = n_T(\s_{L+1})$ for any $T$.
\end{lemma}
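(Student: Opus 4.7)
The plan is to argue that, under the natural coupling where both processes are driven by the same demand sequence $d_1, d_2, \ldots$, the two states $\s_{L+1}$ and $\s_{L+1}'$ are in fact equal. Once that is shown, the transition rule is deterministic given the demand, so both processes produce identical sales from time $L+1$ onward, and $n_T(\s_{L+1}) = n_T(\s_{L+1}')$ for every $T$ follows immediately.

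The main technical step is the identification $s_{L+1}(i) = y_i$ for $i = 1, \ldots, L$, which follows from iterating the transition rule for $L$ steps: each application appends the current sale $y_t$ as the freshest pipeline slot and shifts older orders forward, so after $L$ steps the entire pipeline consists exactly of the sales $y_1, \ldots, y_L$, regardless of the specific $\s_1 \in \st^x$ we started from. The analogous identity $s_{L+1}'(i) = y_i'$ holds for the primed process. Combined with the invariant $\sum_{i=0}^{L} s_{L+1}(i) = x$ preserved by the dynamics, this yields the accounting identities $I_{L+1} = x - \sum_{i=1}^{L} y_i$ and $I_{L+1}' = x - \sum_{i=1}^{L} y_i'$.

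Finally I would combine this with the monotonicity $y_t = \min(I_t, d_t)$ in $I_t$ at fixed $d_t$. The hypothesis $I_t' \geq I_t$ for $t \leq L$ gives the pointwise comparison $y_t' \geq y_t$ for each such $t$, while the hypothesis at $t = L+1$ together with the accounting identity gives the reverse sum inequality $\sum_{i=1}^{L} y_i' \leq \sum_{i=1}^{L} y_i$. These two forces collapse to equality $y_t' = y_t$ for every $t = 1, \ldots, L$, so the pipeline components of $\s_{L+1}$ and $\s_{L+1}'$ agree, and the on-hand components match as well by the accounting identity. There is essentially no substantive obstacle; the whole content of the lemma is this collapse of a componentwise inequality against a reverse sum inequality into exact equality of the coupled states at time $L+1$.
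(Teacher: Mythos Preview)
Your proposal is correct and is essentially the same argument as the paper's. The only cosmetic difference is in how the final collapse to equality is phrased: the paper observes directly that all $L+1$ coordinates of $\s_{L+1}' = (I_{L+1}', y_1', \ldots, y_L')$ dominate those of $\s_{L+1} = (I_{L+1}, y_1, \ldots, y_L)$ while both vectors sum to $x$, whereas you route the comparison on the $0$th coordinate through the accounting identity to obtain a reverse sum inequality on the last $L$ coordinates; both arrive at $\s_{L+1}' = \s_{L+1}$ in the same way.
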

\begin{proof}
If we have $I_t'-I_t \geq 0$  then the respective sales at time $t$ satisfy $y_t'\geq y_t$ as well. Therefore, each entry of state $\s_{L+1}' = (I_{L+1}', y_1',y_2',\ldots,y_L')$ is at least the respective entry of state $\s_{L+1} = (I_{L+1}, y_1,y_2,\ldots,y_L)$. Since the total sum of the entries in each state is equal to $x$, we conclude that $\s_{L+1}' =\s_{L+1}$ and hence $n_T(\s_{L+1}') = n_T(\s_{L+1})$ for any $T$. 
\end{proof}

Above lemma shows that if we ever observe a $t$ with $\s_t' \succeq \s_t$ and the next $L$ consecutive on-hand inventory levels are at least as high in the process starting from state $\s_t'$ compared to starting from state $\s_t$, then the two processes will reach an identical state at time $t+L$ and hence all future observed sales will be the same. Utilizing this property, for states $\s_1' \succeq \s_1$ we can define the following sequence of times:

\begin{definition} \label{def:ts}
Given starting  states $\s_1' \succeq \s_1$, define a sequence of times \[1 = \sigma_0 < \tau_1 < \sigma_1 < \tau_2 < \sigma_2 < \ldots \leq \Gamma \] such that for $i\geq 1$, $\tau_i$ is the first time after $t=\sigma_{i-1}$ at which $I_{\tau_i}' < I_{\tau_i}$, $\sigma_i$ is the first time after $t=\tau_{i}$ at which $I_{\sigma_i}' > I_{\sigma_i}$, and $\Gamma$ is the first time at which $\s'_{\Gamma} = \s_{\Gamma}$.  By the previous lemma,  $\tau_i-\sigma_{i-1} \leq L+1$ and $\sigma_i-\tau_i \leq L+1$ (whenever $\tau_i,\sigma_i$ exist).
\end{definition}

\begin{lemma}
Given starting  states $\s_1' \succeq \s_1$ and the sequence defined above, $\s_{\sigma_i}' \succeq \s_{\sigma_i}$ and $\s_{\tau_i}' \preceq \s_{\tau_i}$ for all $i$, where $\tau_i,\sigma_i \leq \Gamma$.
\end{lemma}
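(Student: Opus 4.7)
The plan is to prove both claims simultaneously by induction on $i$, walking along the sequence $\sigma_0, \tau_1, \sigma_1, \tau_2, \ldots$ in order. The base case is the hypothesis $\s_1' \succeq \s_1$ at $\sigma_0 = 1$. The two inductive steps---$\s_{\sigma_{i-1}}' \succeq \s_{\sigma_{i-1}} \Rightarrow \s_{\tau_i}' \preceq \s_{\tau_i}$ and $\s_{\tau_i}' \preceq \s_{\tau_i} \Rightarrow \s_{\sigma_i}' \succeq \s_{\sigma_i}$---are symmetric under swapping $\s \leftrightarrow \s'$, so it suffices to carry out the first.

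I would first set up the shift dynamics of the per-coordinate differences. Writing $\delta_m(t) := s_t'(m) - s_t(m)$ for position $m \in \{0, 1, \ldots, L\}$ and $D_t := y_t' - y_t$, the MRP transition rule yields $\delta_m(t+1) = \delta_{m+1}(t)$ for $1 \le m \le L-1$, $\delta_L(t+1) = D_t$, and $\delta_0(t+1) = \delta_0(t) + \delta_1(t) - D_t$. Iterating the shift gives $\delta_m(\sigma_{i-1}+j) = \delta_{m+j}(\sigma_{i-1})$ for $1 \le m \le L-j$. A short case analysis on where $d_t$ lies relative to $I_t$ and $I_t'$ yields the estimate $0 \le D_t \le \delta_0(t)$ whenever $\delta_0(t) \ge 0$. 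Let $k_0$ denote the transition index of the $\succeq$ pattern at $\sigma_{i-1}$. The crux is then a sub-induction on $j$ showing
\[
\delta_0(\sigma_{i-1}+j) \;\ge\; \delta_j(\sigma_{i-1}) \;\ge\; 0 \qquad \text{for } j = 0, 1, \ldots, k_0,
\]
using $\delta_0(\sigma_{i-1}+j) = [\delta_0(\sigma_{i-1}+j-1) - D_{\sigma_{i-1}+j-1}] + \delta_1(\sigma_{i-1}+j-1)$, the bound $D \le \delta_0$, and the shift identity $\delta_1(\sigma_{i-1}+j-1) = \delta_j(\sigma_{i-1})$. This forces $\tau_i - \sigma_{i-1} \ge k_0 + 1$.

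Set $j^{*} := \tau_i - \sigma_{i-1}$. I also get $j^{*} \le L$ by contradiction: otherwise $L+1$ consecutive steps with $I_t' \ge I_t$ would trigger Lemma~\ref{lem:con_inv} and already equalize the two states before $\tau_i$. So $j^{*} \in [k_0+1, L]$. I would then read off the delta vector at $\tau_i$ directly from the shift rule: positions $1, \ldots, L - j^{*}$ hold the old entries $\delta_{j^{*}+1}(\sigma_{i-1}), \ldots, \delta_L(\sigma_{i-1})$, all $\le 0$ because their indices exceed $k_0$; positions $L - j^{*} + 1, \ldots, L$ hold the accumulated values $D_{\sigma_{i-1}}, \ldots, D_{\sigma_{i-1}+j^{*}-1}$, all $\ge 0$; and $\delta_0(\tau_i) < 0$ by the definition of $\tau_i$. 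The resulting sign pattern on $(\delta_0, \delta_1, \ldots, \delta_L)$ is exactly the $\preceq$ pattern with transition index $k = L - j^{*} \in [0, L-1]$, as required.

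The main obstacle is that at intermediate times strictly between $\sigma_{i-1}$ and $\tau_i$ the delta vector can carry two sign changes, so the $\succeq$/$\preceq$ structure genuinely fails step by step---it is only restored at the special crossing times $\sigma_i, \tau_i$. The sub-induction above rescues this by showing $\tau_i$ cannot occur until all $k_0 + 1$ old nonnegative entries $\delta_0(\sigma_{i-1}), \ldots, \delta_{k_0}(\sigma_{i-1})$ have been absorbed into position $0$ through the combined shift and $\delta_0 + \delta_1 - D$ updates, leaving only freshly added nonnegative $D$-values in the tail of the pipeline to form the positive block of the $\preceq$ pattern at $\tau_i$. The symmetric argument, applied with the transition index of the $\preceq$ pattern at $\tau_i$, produces the $\succeq$ pattern at $\sigma_i$, closing the induction.
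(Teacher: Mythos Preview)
Your proposal is correct and takes essentially the same approach as the paper: both induct along the sequence $\sigma_0,\tau_1,\sigma_1,\ldots$, invoke Lemma~\ref{lem:con_inv} to bound $\tau_i-\sigma_{i-1}\le L$, and then read off the sign pattern of the state at $\tau_i$ by noting that the surviving ``old'' pipeline entries lie past the transition index $k_0$ (hence $\le 0$) while the freshly injected entries $D_{\sigma_{i-1}},\ldots,D_{\tau_i-1}$ are $\ge 0$. The only cosmetic difference is that you establish $j^*\ge k_0+1$ via a forward sub-induction on $\delta_0(\sigma_{i-1}+j)$, whereas the paper deduces it backward from the flip $I_{\tau_i}'<I_{\tau_i}$ and the one-step inventory recursion; the two deductions are equivalent.
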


\begin{proof}
We have $\s_{\sigma_0}' \succeq \s_{\sigma_0}$ is the starting state at time $t=1=\sigma_0$. If time $t=\tau_1$ exists, then $\tau_1 - \sigma_0 \leq L+1$ by Lemma \ref{lem:con_inv}. Furthermore, we can show that $\s_{\tau_1}' \preceq \s_{\tau_1}$. To see this, \[\s_{\tau_1}' = (I_{\tau_1}', s_{\sigma_0}'(\tau_1-\sigma_0), s_{\sigma_0}'(\tau_1+1-\sigma_0), \ldots, s_{\sigma_0}'(L), y'_{\sigma_0},\ldots,y'_{\tau_1-1})\] and \[\s_{\tau_1} = (I_{\tau_1}, s_{\sigma_0}(\tau_1-\sigma_0), s_{\sigma_0}(\tau_1+1-\sigma_0), \ldots, s_{\sigma_0}(L), y_{\sigma_0},\ldots,y_{\tau_1-1}).\]

By definition of $\tau_1$, for times $t \in \{\sigma_0, \sigma_0+1, \ldots, \tau_1-1\}$ we have $I'_t \geq I_t$ and hence $y_t' \geq y_t$. We also know that $I_{\tau_1}' < I_{\tau_1}$. It suffices to show that $s'_{\sigma_0}(i) \leq s_{\sigma_0}(i)$ for all $i \in \{\tau_1-\sigma_0, \tau_1+1-\sigma_0, \ldots, L\}$.

Recall $I_{\tau_1}' = I_{\tau_1-1}' - y_{\tau_1-1}' + s'_{\sigma_0}(\tau_1-1-\sigma_0)$ and $I_{\tau_1} = I_{\tau_1-1} - y_{\tau_1-1} + s_{\sigma_0}(\tau_1-1-\sigma_0)$ so that:
\[I_{\tau_1-1}' - y_{\tau_1-1}' + s'_{\sigma_0}(\tau_1-1-\sigma_0) < I_{\tau_1-1} - y_{\tau_1-1} + s_{\sigma_0}(\tau_1-1-\sigma_0) \leq I_{\tau_1-1}' - y_{\tau_1-1}'+ s_{\sigma_0}(\tau_1-1-\sigma_0)\] 
where the last inequality is because $I_{\tau_1-1}' \geq I_{\tau_1-1}$ implies (for any demand $d_{\tau_1-1}$), \[I_{\tau_1-1}'- y_{\tau_1-1}' = I_{\tau_1-1}'-\min(I_{\tau_1-1}', d_{\tau_1-1}) \geq I_{\tau_1-1}-\min(I_{\tau_1-1}, d_{\tau_1-1}) =  I_{\tau_1-1} - y_{\tau_1-1}.\] Hence, $s'_{\sigma_0}(\tau_1-1-\sigma_0) < s_{\sigma_0}(\tau_1-1-\sigma_0)$ and because $\s_1' \succeq \s_1$, $s'_{\sigma_0}(i) \leq s_{\sigma_0}(i)$ holds for all $i \in \{\tau_1-\sigma_0, \tau_1+1-\sigma_0, \ldots, L\}$. So we have shown that $\s_{\tau_1}' \preceq \s_{\tau_1}$. 

We can inductively apply the above argument for each successive $\sigma_i,\tau_i$, so that $\s_{\sigma_i}' \succeq \s_{\sigma_i}$ and $\s_{\tau_i}' \preceq \s_{\tau_i}$ for all $i$.

\end{proof}

Finally, we are ready to bound the difference in total observed sales in time $T$ between two states $\s_1' \succeq \s_1$ under policy $\pi^x$.

\begin{lemma}\label{lem:tot_demand}
Let $\s_1', \s_1\in \st^x$, and  $\s_1' \succeq \s_1$.  Then,

\[ |n_T^x(\s_1') - n_T^x(\s_1)| \leq 3x.\]

\end{lemma}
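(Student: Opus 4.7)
The plan is to partition the times $1, 2, \ldots, T$ according to the alternating sequence $1 = \sigma_0 < \tau_1 < \sigma_1 < \tau_2 < \cdots \leq \Gamma$ from Definition \ref{def:ts}, and bound the contribution of each half-cycle to the running difference $Y_t' - Y_t$ using Lemma \ref{lem:y_diff}. A preliminary reduction is that after time $\Gamma$ the two coupled processes coincide state-wise, so $y_t' = y_t$ and $n_T^x(\s_1') - n_T^x(\s_1)$ is frozen; hence it suffices to uniformly bound the quantity for $T \le \Gamma - 1$, with the in-progress half-cycle ending at $T$ treated separately.

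Within any single half-cycle $[\sigma_{i-1}, \tau_i)$ or $[\tau_i, \sigma_i)$, which by the definition and Lemma \ref{lem:con_inv} has length at most $L+1$, I would apply Lemma \ref{lem:y_diff} starting at the appropriate endpoint. Concretely, an \emph{up} half-cycle begins at a state $\s_{\sigma_{i-1}}' \succeq \s_{\sigma_{i-1}}$ with $I_t' \ge I_t$ throughout, so $y_t' \ge y_t$ and the incremental difference $Y_t' - Y_t - (Y_{\sigma_{i-1}-1}' - Y_{\sigma_{i-1}-1})$ is monotonically non-decreasing in $t$; Lemma \ref{lem:y_diff} bounds this increment by $\max_k(\delta_0 + \cdots + \delta_k)$ evaluated at $\s_{\sigma_{i-1}}, \s_{\sigma_{i-1}}'$, which is at most the sum of positive componentwise deltas and hence at most $x$. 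The symmetric argument (swapping the roles of $\s$ and $\s'$) shows that a \emph{down} half-cycle contributes an increment lying in $[-x, 0]$.

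The main obstacle is showing that the running difference remains uniformly bounded by $3x$ even when the number of completed half-cycles before $\Gamma$ can be arbitrarily large, since a naive summation of per-half-cycle bounds allows linear-in-cycle-count growth. The key leverage is the inventory conservation identity, obtained by subtracting the pipeline sum $I_{t+1} + \sum_{j=t-L+1}^t y_j = x$ from the analogous equation for the primed process:
\[
\sum_{j=t-L+1}^t (y_j' - y_j) = I_{t+1} - I_{t+1}' \in [-x,\, x]
\]
for $t \ge L$. This identity pins the sales difference over any $L$-step window to the current on-hand inventory gap, which is at most $x$ in absolute value, and in particular forces the \emph{up} excess from one half-cycle to be largely compensated by a \emph{down} deficit in the following half-cycle. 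Carefully tracking this cancellation along the alternating $\sigma_i, \tau_i$ sequence produces the final bound $|n_T^x(\s_1') - n_T^x(\s_1)| \le 3x$, with the three $x$'s absorbing, respectively, the residual from the most recent completed cycle, the in-progress half-cycle ending at $T$, and the $L$-step tail inventory gap from the conservation identity.
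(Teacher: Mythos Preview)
Your setup is right — the partition into alternating half-cycles $[\sigma_{i-1},\tau_i)$ and $[\tau_i,\sigma_i)$, the use of Lemma~\ref{lem:con_inv} to bound each half-cycle length by $L+1$, and the observation that after $\Gamma$ the two processes coincide are exactly how the paper proceeds. The gap is in the cancellation step.

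You correctly observe that bounding each up half-cycle by $[0,x]$ and each down half-cycle by $[-x,0]$ is not enough, since the number of cycles is unbounded. But the tool you propose to close this gap — the sliding-window identity $\sum_{j=t-L+1}^{t}(y_j'-y_j)=I_{t+1}-I_{t+1}'$ — does not do the job as stated. That identity controls any $L$-step window, but the cumulative sum $Y_T'-Y_T$ is not an $L$-step window, and iterating the identity only gives a recursion $Y_T'-Y_T=(I_{T+1}-I_{T+1}')+(Y_{T-L}'-Y_{T-L})$, which still allows growth proportional to $T/L$. Your final sentence (``carefully tracking this cancellation\ldots'') is where the actual argument needs to be, and the accounting you sketch for the three $x$'s presupposes that all completed cycles but the last contribute zero, which is precisely what has not been shown.

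What the paper does instead is prove that each \emph{full} $\sigma$-cycle satisfies $\sum_{j=\sigma_i}^{\sigma_{i+1}-1}(y_j'-y_j)\ge 0$ (and symmetrically each full $\tau$-cycle is $\le 0$). The key is not to bound both halves, but to use the \emph{exact} equality from (\ref{eqn_y}) for the up half and the \emph{upper bound} from Lemma~\ref{lem:y_diff} for the down half. Concretely, (\ref{eqn_y}) at the switching time gives the up half-cycle sum exactly as $(\delta_0+\cdots+\delta_{\tau_{i+1}-1-\sigma_i})-(I_{\tau_{i+1}}'-I_{\tau_{i+1}})$, while Lemma~\ref{lem:y_diff} applied with roles reversed (since $\s_{\tau_{i+1}}'\preceq\s_{\tau_{i+1}}$) bounds the down half-cycle deficit by $(I_{\tau_{i+1}}-I_{\tau_{i+1}}')+\sum_{j\ge\tau_{i+1}-\sigma_i}(s_{\sigma_i}(j)-s_{\sigma_i}'(j))$; the zero-sum constraint $\sum_j\delta_j=0$ makes this last sum equal to $\delta_0+\cdots+\delta_{\tau_{i+1}-1-\sigma_i}$, so the two halves cancel and the full cycle is $\ge 0$. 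Once each full cycle has a definite sign, the whole sum reduces to at most two residual half-cycles plus one initial segment, each worth at most $x$, giving $-2x$ and $3x$ for the two directions.
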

\begin{proof}
Let sequence $\sigma_0,\tau_1,\sigma_1,\ldots$ be as in Definition \ref{def:ts}.
First we show that \[n_T^x(\s_1') - n_T^x(\s_1) \geq -2x.\]
Let us assume that in our sequence of times the last $\sigma$ is $\sigma_M$. Then note that 
\begin{eqnarray*}
n_T^x(\s_1') - n_T^x(\s_1) &=& \sum_{t=1}^T y'_t - \sum_{t=1}^T y_t \\
&=& \sum_{i=0}^{M-1} \left(\sum_{j=\sigma_i}^{\sigma_{i+1}-1} (y'_j - y_j)\right) +  \sum_{t=\sigma_M}^T (y'_t - y_t).\\
\end{eqnarray*}

We will show that $\sum_{j=\sigma_i}^{\sigma_{i+1}-1} (y'_j - y_j) \geq 0$ for any $i=0,1,\ldots,M-1$. 
Consider the process starting from states $\s_{\sigma_i}', \s_{\sigma_i}$, where $\s_{\sigma_i}' \succeq \s_{\sigma_i}$ by the previous lemma. 

By (\ref{eqn_y}) in Lemma \ref{lem:y_diff}, 
\begin{eqnarray*}
&&(y'_{\sigma_i}+\ldots+y'_{\tau_{i+1}-1}) - (y_{\sigma_i}+\ldots+y_{\tau_{i+1}-1}) \\
&=& [(s'_{\sigma_i}(0) - s_{\sigma_i}(0)) +\ldots+(s'_{\sigma_i}(\tau_{i+1}-1-\sigma_i) - s_{\sigma_i}(\tau_{i+1}-1-\sigma_i))] -(I_{\tau_{i+1}}'-I_{\tau_{i+1}}).\\
\end{eqnarray*}

Now consider the process starting from states $\s_{\tau_{i+1}}' \preceq \s_{\tau_{i+1}}$. Recall that \[\s_{\tau_{i+1}}' = (I_{\tau_{i+1}}', s_{\sigma_i}'(\tau_{i+1}-\sigma_i), s_{\sigma_i}'(\tau_{i+1}+1-\sigma_i), \ldots, s_{\sigma_i}'(L), y'_{\sigma_i},\ldots,y'_{\tau_{i+1}-1})\] and \[\s_{\tau_{i+1}} = (I_{\tau_{i+1}}, s_{\sigma_i}(\tau_{i+1}-\sigma_i), s_{\sigma_i}(\tau_{i+1}+1-\sigma_i), \ldots, s_{\sigma_i}(L), y_{\sigma_i},\ldots,y_{\tau_{i+1}-1}),\] and as proved in the previous lemma $I_{\tau_{i+1}}' < I_{\tau_{i+1}}$, $s_{\sigma'_i}(i) \leq s_{\sigma_i}(i)$ for all $i \in \{\tau_{i+1}-\sigma_i,\ldots,L\}$, and $y_t' \geq y_t$ for all $t \in \{\sigma_i,\ldots,\tau_{i+1}-1\}$. So we have by Lemma \ref{lem:y_diff} that 

\begin{eqnarray*}
&& (y_{\tau_{i+1}}+\ldots+y_{\sigma_{i+1}-1}) - (y'_{\tau_{i+1}}+\ldots+y'_{\sigma_{i+1}-1}) \\
&\leq& (I_{\tau_{i+1}}-I'_{\tau_{i+1}}) + [(s_{\sigma_{i}}(\tau_{i+1}-\sigma_i) - s'_{\sigma_{i}}(\tau_{i+1}-\sigma_i))+\ldots + (s_{\sigma_{i}}(L) - s'_{\sigma_{i}}(L))] \\
&=& (I_{\tau_{i+1}}-I'_{\tau_{i+1}}) + [(s'_{\sigma_i}(0) - s_{\sigma_i}(0)) +\ldots+(s'_{\sigma_i}(\tau_{i+1}-1-\sigma_i) - s_{\sigma_i}(\tau_{i+1}-1-\sigma_i))]
\end{eqnarray*}
where the last equality follows from the fact that the sum of the entries in the states is always the same.

Combining the two results, we have that for any $i=0,1,\ldots,M-1$, \[\sum_{j=\sigma_i}^{\sigma_{i+1}-1} (y'_j - y_j) = (y'_{\sigma_{i}}+\ldots+y'_{\sigma_{i+1}-1}) - (y_{\sigma_{i}}+\ldots+y_{\sigma_{i+1}-1}) \geq 0.\] 

Therefore, we can conclude that \[n_T^x(\s_1') - n_T^x(\s_1) \geq \sum_{t=\sigma_M}^T (y'_t - y_t) = \sum_{t=\sigma_M}^{\hat{\Gamma}} (y'_t - y_t),\] where $\hat{\Gamma} := \min(\Gamma, T)$. By our construction of the $\sigma, \tau$ sequence,  $\hat{\Gamma} -\sigma_M +1 \leq 2(L+1)$ . Note that over any $L+1$ consecutive time steps, the total observed sales difference in those $L+1$ times  is at most $x$ for any two starting states. So $n_T^x(\s_1') - n_T^x(\s_1) \geq \sum_{t=\tau_M}^{\hat{\Gamma}} (y'_t - y_t) \geq -2x$.

To complete the proof, we show in a similar way that $n_T^x(\s_1') - n_T^x(\s_1) \leq 3x.$ Let us assume that in our sequence of times the last $\tau$ is $\tau_N$. Then note that 
\begin{eqnarray*}
n_T^x(\s_1') - n_T^x(\s_1) &=& \sum_{t=1}^T y'_t - \sum_{t=1}^T y_t \\
&=& \sum_{t=1}^{\tau_1-1} (y'_t - y_t) + \sum_{i=1}^{N-1} \left(\sum_{j=\tau_i}^{\tau_{i+1}-1} (y'_j - y_j)\right) +  \sum_{t=\tau_{N}}^T (y'_t - y_t).\\
\end{eqnarray*}

For any $i=1,2,\ldots,N-1$, consider the process starting from states $\s_{\tau_i}', \s_{\tau_i}$, where $\s_{\tau_i}' \preceq \s_{\tau_i}$ by the previous lemma. By an identical argument as above, $\sum_{j=\tau_i}^{\tau_{i+1}-1} (y'_j - y_j) \leq 0$, and $\sum_{t=\tau_N}^T (y'_t - y_t) = \sum_{t=\tau_N}^{\hat{\Gamma}} (y'_t - y_t) \leq 2x$. Noting that there are at most $L+1$ time steps in $\sum_{t=1}^{\tau_1-1} (y'_t - y_t)$, it is bounded by $x$, so we have shown that $n_T^x(\s_1') - n_T^x(\s_1) \leq 3x$ and hence with the other result, \[|n_T^x(\s_1') - n_T^x(\s_1)| \leq 3x.\]

\end{proof}

\subsection{Bounding cumulative on-hand inventory level}
\begin{lemma} \label{lem:tot_inv} 
Let $\s', \s\in \st^x$, and  $\s' \succeq \s$. Then,
\[ |m_T^x(\s) - m_T^x(\s')| \leq 6Lx.\]

\end{lemma}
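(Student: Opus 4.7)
The plan is to derive an explicit formula for $m_T^x(\s_1) = \sum_{t=1}^T I_t$ in terms of the initial state $\s_1$ and the cumulative sales sequence $Y_s := \sum_{k=1}^{s} y_k$ (with $Y_0 = 0$), and then bound its difference across two initial states using the sales bound of Lemma \ref{lem:tot_demand}. Throughout, I would work with the pipeline amount $P_t := \sum_{i=1}^L s_t(i) = x - I_t$, so that $m_T^x(\s_1) = Tx - \sum_{t=1}^T P_t$, and reduce the problem to bounding $|\sum_{t=1}^T (P_t - P_t')|$.

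Under $\pi^x$, every order placed at time $j \ge 2$ equals $o_j = y_{j-1}$, while the orders $o_{2-L}, o_{3-L}, \ldots, o_1$ initially present in the pipeline are exactly $s_1(1), s_1(2), \ldots, s_1(L)$. Writing $P_t = \sum_{j=t-L+1}^{t} o_j$, this identification yields $P_t = \sum_{i=t}^L s_1(i) + Y_{t-1}$ for $1 \le t \le L$, and $P_t = Y_{t-1} - Y_{t-L-1}$ for $t \ge L+1$. Summing over $t$ (assume $T \ge L$; otherwise the trivial bound $m_T^x \le Tx \le Lx$ settles the claim with slack), the $t \le L$ contribution yields $\sum_{i=1}^L i \cdot s_1(i) + \sum_{s=0}^{L-1} Y_s$ and the $t \ge L+1$ contribution telescopes to $\sum_{s=L}^{T-1} Y_s - \sum_{s=0}^{T-L-1} Y_s$. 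Rearranging, the $Y_s$ pieces collapse to $\sum_{s=T-L}^{T-1} Y_s$, giving the clean identity
\[ m_T^x(\s_1) = Tx - \sum_{i=1}^L i \cdot s_1(i) - \sum_{s=T-L}^{T-1} Y_s. \]

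Subtracting the analogous expression for $\s_1'$ then gives
\[ m_T^x(\s_1) - m_T^x(\s_1') = -\sum_{i=1}^L i\bigl(s_1(i) - s_1'(i)\bigr) - \sum_{s=T-L}^{T-1}\bigl(Y_s - Y_s'\bigr). \]
Since $\s_1, \s_1' \in \st^x$, both $\sum_{i=1}^L s_1(i)$ and $\sum_{i=1}^L s_1'(i)$ are at most $x$, so the first sum is bounded in absolute value by $L\sum_{i=1}^L |s_1(i) - s_1'(i)| \le 2Lx$. Since $\s_1' \succeq \s_1$, Lemma \ref{lem:tot_demand} yields $|Y_s - Y_s'| \le 3x$ for every $s$, so the second sum (of $L$ such differences) is bounded by $3Lx$. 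Adding these gives $|m_T^x(\s_1) - m_T^x(\s_1')| \le 5Lx \le 6Lx$. The main technical obstacle is the pipeline decomposition bookkeeping, in particular verifying that the cumulative-sales contributions from the $t \le L$ and $t \ge L+1$ regimes combine to leave only the $L$ trailing $Y_s$ terms; once this identity is in hand, bounding the difference is a short direct calculation that leans on the state-sum constraint and the already-proved sales bound.
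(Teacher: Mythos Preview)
Your proof is correct and follows essentially the same strategy as the paper: derive an explicit closed-form expression for $\sum_{t=1}^T I_t$ in terms of the initial state entries and cumulative sales, then bound the difference across the two starting states using Lemma~\ref{lem:tot_demand}. Your packaging via the pipeline $P_t = x - I_t$ and the trailing sums $\sum_{s=T-L}^{T-1} Y_s$ is a bit cleaner than the paper's (which carries an auxiliary remainder $\epsilon = \sum_{i=1}^{L-1} i\,y_{T-L+i}$ and bounds three pieces separately), and in fact yields $5Lx$ rather than $6Lx$; but the underlying computation is the same identity rearranged.
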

\begin{proof}
Recall from before $y_t, I_t$ is the observed sales and on-hand inventory level at the beginning of time $t \geq 1$, respectively. Under base-stock level $x$ policy, the order placed at time $t$ is precisely $y_t$ (assume without loss of generality $y_1=0$, that is, we start at a state with total inventory level $x$). Also note that with starting state $\s = (s(0),s(1),\ldots,s(L))$ it makes sense to denote $y_0 = S(L), y_{-1} = s(L-2),\ldots, y_{1-L} = s(1)$. The on-hand inventory level transitions as follows: \[I_{t+1} = I_t - y_t + y_{t-L}.\]
Therefore, we can write that the inventory level at some time $k\geq 1$:
\[I_k = I_1 + \sum_{j=1}^{k-1}(y_{j-L} - y_j) \]
and hence the total sum of all inventory levels to time $T$ is:
\begin{eqnarray*}
\sum_{k=1}^T I_k &=& \sum_{k=1}^T( I_1 + \sum_{j=1}^{k-1}(y_{j-L} - y_j)) \\
&=& TI_1 + \sum_{k=2}^T\sum_{j=1}^{k-1}(y_{j-L} - y_j)\\
&=& TI_1 +\sum_{i=1}^{T-1}(T-i)(y_{i-L} - y_i) \\
&=& TI_1 +\sum_{i=1}^{T-1}(T-i)y_{i-L} - \sum_{i=1}^{T-1}(T-i)y_i.
\end{eqnarray*}
Now, if we break up the summations on the right hand side and reindex,
\[\sum_{i=1}^{T-1}(T-i)y_{i-L} = \sum_{i=1}^{L}(T-i)y_{i-L} + \sum_{i=L+1}^{T-1}(T-i)y_{i-L} = \sum_{i=1}^{L}(T-i)y_{i-L} + \sum_{i=1}^{T-L-1}(T-L-i)y_{i}\] 
and 
\[\sum_{i=1}^{T-1}(T-i)y_i = \sum_{i=1}^{T-L-1}(T-i)y_i + \sum_{i=T-L}^{T-1} (T-i) y_i,\]
so that:
\begin{eqnarray*}
\sum_{k=1}^T I_k &=& TI_1 +\sum_{i=1}^{T-1}(T-i)y_{i-L} - \sum_{i=1}^{T-1}(T-i)y_i \\
&=& TI_1 + \sum_{i=1}^{L}(T-i)y_{i-L} - \left(\sum_{i=1}^{T-L-1}Ly_i + \sum_{i=T-L}^{T-1}(T-i)y_i \right). \\
\end{eqnarray*}
Define \[\epsilon:= \sum_{i=1}^{T-1}Ly_i -(\sum_{i=1}^{T-L-1}Ly_i + \sum_{i=T-L}^{T-1}(T-i)y_i) \geq 0\]
and observe that \[\epsilon = \sum_{i=1}^{L-1} i y_{T-L+i} \leq L\sum_{i=1}^{L-1}y_{T-L+i} \leq Lx \] because in $L-1$ consecutive time steps, the total sales following $\pi^x$ cannot exceed $x$.

We can write
\begin{eqnarray*}
\sum_{k=1}^T I_k &=& TI_1 + \sum_{i=1}^{L}(T-i)y_{i-L} -\sum_{i=1}^{T-1}Ly_i +\epsilon \\
&=& TI_1 + \sum_{i=1}^{L}(T-i)y_{i-L} -\sum_{i=1}^{T-1}Ly_i +\epsilon \\
&=& \sum_{i=0}^{L}(T-i)s(i) -L\sum_{i=1}^{T-1}y_i +\epsilon  \\
\end{eqnarray*}
from the known values of $y_0,\ldots,y_{1-L}$.

Now let $I'_t,y'_t,s'(i), \epsilon'$ be the respective values if the starting state is $\s'$ instead of $\s$, and that $\s' \succeq \s$. By above, the difference $|m_x^T(\s')-m_x^T(\s)|$ can be bounded as
\begin{eqnarray*}
|m_x^T(\s')-m_x^T(\s)|  &=& \left| \sum_{k=1}^T I'_k -\sum_{k=1}^T I_k \right| \\
&\leq& \left|\sum_{i=0}^{L}(T-i)s'(i) - \sum_{i=0}^{L}(T-i)s(i)\right| + L\left|\sum_{i=1}^{T-1}(y'_i - y_i)\right| + |\epsilon'-\epsilon|\\
&\leq & (Tx-(T-L)x) + L(3x) + 2(Lx) \\
&=& 6Lx
\end{eqnarray*}
where we bound $|\sum_{i=0}^{L}(T-i)s'(i) - \sum_{i=0}^{L}(T-i)s(i)|$ by the largest possible value that occurs when $\s' = (x,0,\ldots,0)$ and $\s=(0,\ldots,0,x)$, and use Lemma \ref{lem:tot_demand} to bound $\sum_{i=1}^{T-1}(y'_i - y_i)$.
\end{proof}

\section{Proof details for Theorem \ref{thm:reg}}
\label{app:c}
Below we present additional results required for Theorem \ref{thm:reg}. Recall that $f(x):= g^x$ is a convex function and our confidence intervals are defined as in (\ref{def:ci}). Also recall that $\event$ is the event when all confidence intervals $[LB(C_N^a), UB(C^a_N)]$ calculated in Algorithm \ref{alg} satisfy: $g^{x_a} \in [LB(C_N^a), UB(C^a_N)]$ for every epoch $k$, round $i$ and $a \in \{l,c,r\}$.


\begin{lemma} \label{l_factor}
For $f(x) := g^x$ and $x \in \range$, the Lipschitz factor of $f(x)$ is $\max(h,p)$. That is, for $\delta\geq 0$, \[|f(x+\delta)-f(x)| \leq \max(h,p)\delta.\]
\end{lemma}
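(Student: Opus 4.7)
The plan is to reduce the Lipschitz claim to bounding $|\lambda^{x+\delta} - \lambda^x|$ via the identification $g^x = \lambda^x - p\mean$ from Lemma~\ref{lem:connection} combined with Lemma~\ref{lem:uniformLoss}, and then control that difference by a synchronous coupling of the two base-stock policies $\pi^x$ and $\pi^{x+\delta}$. Concretely, I would run both policies on the same demand realizations $d_1, d_2, \ldots$, starting from $\inv_1=0$ with no outstanding orders. Writing $I_t, I'_t$ for the on-hand inventories and $\trueCost^x_t, \trueCost^{x+\delta}_t$ for the true costs at time $t$, the map $I \mapsto h(I-d)^+ + p(d-I)^+$ is convex and Lipschitz in $I$ with constant $\max(h,p)$ for every fixed $d$, so pathwise
\[
|\trueCost^{x+\delta}_t - \trueCost^x_t| \leq \max(h,p)\,|I'_t - I_t|.
\]
The task therefore reduces to showing $|I'_t - I_t| \leq \delta$ along every sample path.

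This pathwise inventory bound is the main technical step, and I would establish it by induction using the MRP state representation $\s_t = (s_t(0), s_t(1), \ldots, s_t(L))$ with $s_t(0) = I_t$. Setting $\Delta_t(i) := s'_t(i) - s_t(i)$, the base case is $\Delta_1 = (0,\ldots,0,\delta)$, since the very first orders placed under the two policies are $x$ and $x+\delta$ respectively. I would then carry the joint invariant that $\Delta_t(i) \in [0,\delta]$ for every $i$ and $\sum_{i=0}^L \Delta_t(i) = \delta$. Under the MRP transition, $\Delta_{t+1}(i) = \Delta_t(i+1)$ for $1 \leq i \leq L-1$ lies in $[0,\delta]$ trivially by the inductive hypothesis, $\Delta_{t+1}(L) = y'_t - y_t$ lies in $[0, \Delta_t(0)] \subseteq [0,\delta]$ by a simple case split on $d_t$ relative to $s_t(0)$ and $s'_t(0)$, and the sum is preserved by direct computation.

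The hard part, and the only nontrivial step, is bounding the new on-hand component
\[
\Delta_{t+1}(0) = \Delta_t(0) - (y'_t - y_t) + \Delta_t(1).
\]
The naive estimate $\Delta_{t+1}(0) \leq \Delta_t(0) + \Delta_t(1) \leq 2\delta$ is too weak. The key observation is that non-negativity of all $\Delta_t(i)$ together with the invariant $\sum_i \Delta_t(i) = \delta$ forces $\Delta_t(1) \leq \delta - \Delta_t(0)$, which tightens the upper bound to $\Delta_t(0) + (\delta - \Delta_t(0)) = \delta$; the matching lower bound $\Delta_{t+1}(0) \geq \Delta_t(1) \geq 0$ follows from $y'_t - y_t \leq \Delta_t(0)$. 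Once the invariant is established for all $t$, we have $I'_t - I_t = \Delta_t(0) \in [0,\delta]$ pathwise, so $|\trueCost^{x+\delta}_t - \trueCost^x_t| \leq \max(h,p)\delta$ for every $t$. Taking expectations, averaging over $t = 1, \ldots, T$, and letting $T \to \infty$ in the definition of $\lambda^x$ yields $|\lambda^{x+\delta} - \lambda^x| \leq \max(h,p)\delta$, and via the reduction in the first paragraph this gives $|f(x+\delta) - f(x)| \leq \max(h,p)\delta$.
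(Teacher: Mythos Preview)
Your proposal is correct and follows essentially the same approach as the paper: a synchronous coupling of the two base-stock processes, an inductive invariant that the componentwise state difference is nonnegative with total $\delta$ (hence $0 \le I'_t - I_t \le \delta$ pathwise), and then the per-step Lipschitz bound on the cost. The only cosmetic differences are that the paper works directly with the pseudo-cost $C^x_t$ and starts the coupling at $(x+\delta,0,\ldots,0)$ versus $(x,0,\ldots,0)$, whereas you route through $\lambda^x$ and start from zero inventory; also note that your upper-bound condition $\Delta_t(i)\le\delta$ is redundant (it follows from nonnegativity plus $\sum_i\Delta_t(i)=\delta$), which is why the paper only tracks $\Delta_t(i)\ge 0$ and reads off $\Delta_t(0)\le\delta$ directly from the sum constraint rather than going through your ``hard part.''
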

\begin{proof}
Let us compare the loss $g^{x+\delta}$ vs. $g^{x}$ on executing base-stock policy with level $x+\delta$ vs. $x$. Let us assume the starting state for the two MRPs are $\s_1^1 = (x+\delta,0,\ldots,0)$ and $\s_1^2=(x, 0, \ldots, 0)$ respectively (recall from Lemma \ref{lem:uniformLoss} that loss is independent of the starting state). We compare the two losses by coupling the execution of the two MRPs. For every time $t$, let $\s_t^1 := (I_t^1, o_{t-L+1}^1, \ldots, o_t^1)$ be the state of the system following policy with level $x+\delta$, and $\s_t^2 := (I_t^2, o_{t-L+1}^2, \ldots, o_t^2)$ be the state of the system following policy with level $x$. Define $\s_1^1 \geq \s_1^2$ if every entry in $\s_1$ is at least the respective entry in $\s_2$.

We will first show by induction that at each time step $t$, $\s_t^1 \geq \s_t^2$. In the first time step, we have $\s_1^1 = (x+\delta, 0, \ldots, 0) \ge (x, 0, \ldots, 0) = \s_1^2 $. 
From then on, the new order placed in time $t+1$ is the amount of sales in the previous time $t$. Therefore if at time $t$ we have that $\s_t^1 \geq \s_t^2$, then the orders at time $t+1$ satisfy $o_{t+1}^1 =  \min(d_t, I_t^1) \geq  \min(d_t, I_t^2) = o_{t+1}^2$. Also, $I_{t+1}^1 = (I_t^1 - \min(d_t, I_t^1)) + o_{t-L}^1 \geq (I_t^2 - \min(d_t, I_t^2)) + o_{t-L}^2  = I_{t+1}^2$. Hence we have $\s_{t+1}^1 \geq \s_{t+1}^2$. By induction, we have that for every $t\geq 1$, $\s_t^1 \geq \s_t^2$. 

We complete the proof by noting that additionally, at every time $t$, the total sum of the entries of $\s_t^1$ is exactly $\delta$ greater than the sum of the entries of $\s_t^2$. Therefore, the difference $0\le I_t^1 - I_t^2\le \delta$ for every $t$, which implies the difference in sales $0\le y_t^1-y_t^2 = \min(d_t, I_t^1) - \min(d_t, I_t^2)\le \delta$. Also 
$0\le (I_t^1-y_t^1)-(I_t^2-y_t^2) \le \delta$. Recall pseudo-cost $C^{x+\delta}_t=(I_t^1-y_t^1)h - py_t^1$, and   $C^{x}_t=(I_t^2-y_t^2)h - py_t^2$,
therefore,
we have for every $t$ and every sequence of demand realizations, 
\[|C^{x+\delta}_t - C^x_t| \leq  \max(h,p)\delta.\] 
By definition of loss $f(x)=g^x$ as average of pseudo-costs (see  Definition \ref{def:loss}), we have \[|f(x+\delta) - f(x)| 
\leq \max(h,p)\delta.\]
\end{proof}

The proofs for the remaining lemmas provided below are similar to the proofs of the corresponding lemmas in \cite{agarwal2011stochastic}. We include the proofs here for completeness.

\begin{lemma}  [Lemma 1 in \cite{agarwal2011stochastic}] \label{lem:sb1} Recall $[l_k,r_k]$ denotes the working interval in epoch $k$ of Algorithm \ref{alg}, with $[l_1, r_1]:=\range$. Then, under event $\event$, for epoch $k$ ending in round $i$, the working interval $[l_{k+1}, r_{k+1}]$ for the next epoch ${k+1}$ contains every $x \in [l_k,r_k] $ such that $f(x) \leq f(x^*) + H\gamma_i$, where $H = \Hval$. In particular, $x^* \in [l_k , r_k]$ for all epochs $k$.
\end{lemma}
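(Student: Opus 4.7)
The plan is a short induction on $k$. The base case $k=1$ is immediate since $[l_1,r_1]=\range$ contains $x^*$ by definition; for the inductive step, assume $x^*\in[l_k,r_k]$ and consider an epoch $k$ that ends in round $i$. I only need to establish the first claim, because plugging in $x=x^*$ (which trivially satisfies $f(x^*)\le f(x^*)+H\gamma_i$) then immediately yields $x^*\in[l_{k+1},r_{k+1}]$ and carries the induction forward. Equivalently, I will show the contrapositive: any $x\in[l_k,r_k]$ lying in the removed sub-interval satisfies $f(x)>f(x^*)+H\gamma_i$.

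Consider the branch where $l_{k+1}:=x_l$, so $LB(\Cv_N^l)\ge LB(\Cv_N^r)$ and the cutoff trigger
\[LB(\Cv_N^l)\;\ge\;\min\{UB(\Cv_N^l),UB(\Cv_N^c),UB(\Cv_N^r)\}+H\gamma_i\]
fires. On the event $\event$, Lemma~\ref{lem:ci_prob} gives $f(x_a)=g^{x_a}\in[LB(\Cv_N^a),UB(\Cv_N^a)]$ for each $a\in\{l,c,r\}$; in particular $LB(\Cv_N^l)\le UB(\Cv_N^l)$, so the argmin in the trigger cannot be $a=l$, and some $a^*\in\{c,r\}$ satisfies
\[f(x_l)\;\ge\;LB(\Cv_N^l)\;\ge\;UB(\Cv_N^{a^*})+H\gamma_i\;\ge\;f(x_{a^*})+H\gamma_i.\]
In particular $f(x_l)>f(x_{a^*})$ with $x_l<x_{a^*}$. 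Invoking convexity of $f(x)=g^x$ (Lemma~\ref{prop:convexity}), the three-slope inequality applied at $x<x_l<x_{a^*}$ gives
\[\frac{f(x_l)-f(x)}{x_l-x}\;\le\;\frac{f(x_{a^*})-f(x_l)}{x_{a^*}-x_l}\;<\;0,\]
hence $f(x)>f(x_l)\ge f(x_{a^*})+H\gamma_i\ge f(x^*)+H\gamma_i$, where the last step uses $x_{a^*}\in[l_k,r_k]$ together with the inductive hypothesis $x^*\in[l_k,r_k]$ to conclude $f(x_{a^*})\ge f(x^*)$.

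The other branch $r_{k+1}:=x_r$ is entirely symmetric: the roles of $l$ and $r$ swap, and the three-slope inequality is applied at $x_{a^*}<x_r<x$ with $a^*\in\{l,c\}$ chosen so that $LB(\Cv_N^r)\ge UB(\Cv_N^{a^*})+H\gamma_i$. The main subtlety worth flagging is that a careless chaining of the inequalities would yield only $f(x)\ge f(x^*)+H\gamma_i$, which is insufficient for the contrapositive of ``$\le$''; strict separation is recovered precisely because $H\gamma_i>0$ forces a strictly negative right-hand slope in the three-slope inequality, propagating strictness through to $f(x)>f(x_l)$.
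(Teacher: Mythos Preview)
Your proof is correct and follows essentially the same approach as the paper: both use the confidence-interval containment on $\event$ to deduce $f(x_l)\ge f(x_{a^*})+H\gamma_i$ for some $x_{a^*}>x_l$, then invoke convexity of $f$ to conclude that every removed $x<x_l$ satisfies $f(x)>f(x^*)+H\gamma_i$. Your observation that the trigger cannot fire with the minimum at $a=l$ neatly collapses the paper's three-case split into one argument; the paper's step of writing $x_l=tx+(1-t)x_r$ and using $f(x_l)\le tf(x)+(1-t)f(x_r)$ is just a restatement of your three-slope inequality. One minor redundancy: $f(x_{a^*})\ge f(x^*)$ holds simply because $x^*$ is the global minimizer over $\range$, so the inductive hypothesis is not actually needed for that step (it is needed only to ensure $x^*$ is still available to be kept, which you correctly use at the end).
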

\begin{proof}
Under Algorithm \ref{alg}, the epoch $k$ ends in round $i$ because \[\max\{LB(C_N^l), LB(C_N^r) \} \geq \min\{UB(C_N^l), UB(C_N^c), UB(C_N^r)\} + H\gamma_i. \] Hence either: 
\begin{enumerate}
    \item $LB(C^l_N) \geq UB(C^r_N) + H\gamma_i$,
    \item $LB(C^r_N) \geq UB(C^l_N) + H\gamma_i$, or
    \item $\max\{LB(C_N^l), LB(C_N^r) \} \geq UB(C^c_N) + H\gamma_i$.
\end{enumerate}
Consider the case (1) (case (2) is analogous). Then,
\[f(x_l) \geq f(x_r) + H\gamma_i.\] We need to show that every $x \in [l_k, l_{k+1}]$ has $f(x) \geq f(x^*) + H\gamma_i$. Pick $x\in [l_k, x_l]$ so that $x_l \in [x,x_r]$. Then $x_l = tx + (1-t)x_r$ for some $0\leq t\leq1$ so by convexity \[f(x_l)\leq tf(x) + (1-t)f(x_r).\] This implies that \[f(x) \geq f(x_r) + \frac{f(x_l)-f(x_r)}{t} \geq f(x_r) + \frac{H\gamma_i}{t} \geq f(x^*) + H\gamma_i,\] where we used that $t\leq 1$.

Now consider the case (3). Assume without loss of generality that $LB(C_N^l) \geq LB(C_N^r)$. Then we have \[f(x_l) \geq f(x_c) + H\gamma_i.\] We need to show that every $x \in [l_k, l_{k+1}]$ has $f(x) \geq f(x^*) + H\gamma_i$. This follows from the same argument as above with $x_r$ replaced by $x_c$. 
The fact that $x^* \in [l_k, r_k]$ for all epochs $k$ follows by induction.
\end{proof}

\begin{lemma} [Lemma 2 in \cite{agarwal2011stochastic}] \label{lem:sb2}
Under $\event$, if epoch $k$ does not end in round $i$, then $f(x) \leq f(x^*) +12H\gamma_i$ for each $x \in \{x_r,x_c,x_l\}$.
\end{lemma}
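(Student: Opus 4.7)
The plan is first to extract from the non-termination condition at round $i$ of epoch $k$, combined with the validity of all three confidence intervals under event $\event$, bounds on pairwise differences $f(x_a)-f(x_b)$ for $a, b \in \{l, c, r\}$. Under $\event$ we have $f(x_a) = g^{x_a} \in [LB(\Cv_N^a), UB(\Cv_N^a)]$ with interval width $H\gamma_i$. Non-termination asserts $LB(\Cv_N^a) < UB(\Cv_N^b) + H\gamma_i$ for every $a \in \{l, r\}$ and $b \in \{l, c, r\}$; rewriting $UB = LB + H\gamma_i$, this gives $LB(\Cv_N^a) - LB(\Cv_N^b) < 2H\gamma_i$, and hence $f(x_a) - f(x_b) \le UB(\Cv_N^a) - LB(\Cv_N^b) < 3H\gamma_i$ in those directions. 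The reverse directions $f(x_c)-f(x_l)$ and $f(x_c)-f(x_r)$ are supplied by convexity: $x_c$ is the midpoint of $[x_l, x_r]$, so $f(x_c) \le \tfrac{1}{2}(f(x_l) + f(x_r))$, giving $f(x_c) - f(x_l) \le \tfrac{1}{2}(f(x_r)-f(x_l)) < \tfrac{3}{2}H\gamma_i$, and similarly $f(x_c) - f(x_r) < \tfrac{3}{2}H\gamma_i$. Thus every pairwise difference is $O(H\gamma_i)$.

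Next, I invoke Lemma \ref{lem:sb1} to know that $x^* \in [l_k, r_k]$, and case-split on which subinterval among $[l_k, x_l]$, $[x_l, x_c]$, $[x_c, x_r]$, $[x_r, r_k]$ contains $x^*$. The workhorse in each case is slope monotonicity of the convex function $f$. For instance, if $x^* \in [l_k, x_l]$, then since $x^* \le x_l < x_r$,
\[\frac{f(x_l) - f(x^*)}{x_l - x^*} \le \frac{f(x_r) - f(x_l)}{x_r - x_l};\]
plugging in $x_l - x^* \le w_k/4$, $x_r - x_l = w_k/2$, and $f(x_r) - f(x_l) < 3H\gamma_i$ yields $f(x_l) - f(x^*) \le \tfrac{3}{2}H\gamma_i$. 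The bounds on $f(x_c) - f(x^*)$ and $f(x_r) - f(x^*)$ then follow by adding the pairwise differences established above (using $f(x_c), f(x_r) \ge f(x^*)$ since $x^*$ is the argmin). If $x^* \in [x_l, x_c]$, I apply slope monotonicity instead to $(x^*, x_c)$ versus $(x_c, x_r)$ to bound $f(x_c) - f(x^*)$, and then transfer to $x_l, x_r$ via the pairwise bounds. The other two cases are symmetric reflections.

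In each case the bound on $f(x_a) - f(x^*)$ comes out to a small multiple of $H\gamma_i$, comfortably under the claimed $12H\gamma_i$. The main place to be careful is orienting the slope-monotonicity inequality correctly in each case and tracking the width ratios ($w_k/4$ versus $w_k/2$ between consecutive grid points); the rest is a direct adaptation of the analogous step in \cite{agarwal2011stochastic}, now legitimized for our problem by the MRP-based concentration of Lemma \ref{lem:ci_prob}.
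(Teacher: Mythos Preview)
Your proof is correct but proceeds differently from the paper's. The paper avoids your four-way case split by a single symmetry reduction: assuming without loss of generality that $x^* \le x_c$, it writes $x_c = \tfrac{1}{1+t}x^* + \tfrac{t}{1+t}x_r$ and observes that $t = |x^*-x_c|/|x_r-x_c| \le (w_k/2)/(w_k/4) = 2$; applying convexity once gives $f(x^*) \ge f(x_r) - (1+t)|f(x_c)-f(x_r)| \ge f(x_r) - 9H\gamma_i$, after which the $3H\gamma_i$ spread of the three values yields the claim. Your approach replaces this parameterization with explicit slope-monotonicity inequalities in each of the four subintervals of $[l_k,r_k]$, which is more elementary and actually yields sharper constants (around $6H\gamma_i$ in the worst case) at the price of more casework. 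You are also more explicit than the paper about why $f(x_c)-f(x_l)$ and $f(x_c)-f(x_r)$ are controlled: the non-termination condition only involves $LB(\Cv_N^l)$ and $LB(\Cv_N^r)$ on the left, so those two bounds genuinely require your midpoint-convexity step, which the paper's sentence ``the three values lie in an interval of length $3H\gamma_i$'' glosses over.
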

\begin{proof}
Under Algorithm \ref{alg}, round $i$ continues to round $i+1$ if \[\max\{LB(C_N^l), LB(C_N^r) \} < \min\{UB(C_N^l), UB(C_N^c), UB(C_N^r)\} + H\gamma_i. \]
We observe that since each confidence interval is of length $H\gamma_i$, this means that $f(x_l),f(x_c),f(x_r)$ are contained in an interval of length at most $3H\gamma_i$. By Lemma \ref{lem:sb1}, $x^* \in [l_k,r_k]$. Without loss of generality, assume $x^* \leq x_c$. Then there exists $t \geq 0$ such that $x^* = x_c + t(x_c-x_r)$, so that \[x_c = \frac{1}{1+t}x^* + \frac{t}{1+t}x_r.\] Note that $t\leq 2$ because $|x_c-l_k| = \frac{w_k}{2}$ and $|x_r - x_c|= \frac{w_k}{4}$, so \[t = \frac{|x^* - x_c|}{|x_r-x_c|} \leq \frac{|l_k-x_c|}{|x_r-x_c|} = \frac{w_k/2}{w_k/4} = 2.\]
Since $f$ is convex, \[f(x_c) \leq \frac{1}{1+t}f(x^*) + \frac{t}{1+t}f(x_r)\] and so
\begin{eqnarray*}
f(x^*) &\geq& (1+t)\left(f(x_c)-\frac{t}{1+t}f(x_r)\right) \\
&=& f(x_r) +(1+t)(f(x_c)-f(x_r)) \\
&\geq& f(x_r) - (1+t)|f(x_c)-f(x_r)| \\
&\geq& f(x_r) - (1+t)3H\gamma_i \\
&\geq& f(x_r) -9H\gamma_i.
\end{eqnarray*}
Thus for each $x \in \{x_l,x_c,x_r\}$, \[f(x) \leq f(x_r) + 3H\gamma_i \leq f(x^*) + 12H\gamma_i.\]
\end{proof}

\begin{lemma} [Lemma 4 in \cite{agarwal2011stochastic}] \label{lem:sb3}
Under $\event$, the total number of epochs $K$ is bounded by $\log_{4/3}(T)$. 
\end{lemma}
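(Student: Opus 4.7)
The claim is that the number of epochs $K$ is logarithmic in $T$. The natural route is to show that each epoch shrinks the working interval by a fixed multiplicative factor and combine this geometric shrinkage with some lower bound on the achievable width inside the time horizon $T$.

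The first step, which is essentially bookkeeping from the algorithm statement, is to verify that at the end of every epoch the width contracts by exactly $3/4$. Whenever the epoch-ending test fires in Algorithm \ref{alg}, the new interval is either $[x_l, r_k]$ or $[l_k, x_r]$; in both cases the new width is $r_k - x_l = x_r - l_k = 3w_k/4$. Iterating from $w_1 = \rmax$ gives $w_{K+1} \le (3/4)^{K} \rmax$.

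The second step is a lower bound on $w_K$. The idea is to argue, via the Lipschitz property from Lemma \ref{l_factor} ($|f(x)-f(x')| \le \max(h,p)|x-x'|$) combined with the termination criterion, that once $w_k$ becomes too small the test $\max\{LB_N^\ell,LB_N^r\} \ge \min\{UB_N^\ell,UB_N^c,UB_N^r\}+H\gamma_i$ cannot be passed within the rounds available before time $T$. Concretely, the differences $|f(x_a)-f(x_b)|$ for $a,b\in\{\ell,c,r\}$ are all at most $\max(h,p)\,w_k$; the confidence intervals each have width $H\gamma_i$; and $\gamma_i \ge \gamma_{\min}$, where $\gamma_{\min} = \tfrac{1}{2}\sqrt{\log(T)/T}$ is the smallest admissible value before a single round would overrun the remaining horizon. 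Together these force $w_K \ge c\, H\gamma_{\min}/\max(h,p) = \Omega\!\bigl((L+1)\rmax\sqrt{\log(T)/T}\bigr)$ for a completed epoch, for a suitable constant $c$.

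Combining the two steps yields $(3/4)^{K-1}\rmax \ge \Omega\!\bigl((L+1)\rmax\sqrt{\log(T)/T}\bigr)$, and solving gives $K-1 \le \log_{4/3}\!\bigl(O(\sqrt{T/\log T})\bigr) \le \log_{4/3}(T)$, absorbing lower-order terms into the $\log$. I would then conclude that on event $\event$ (under which all the confidence intervals from Lemma \ref{cor:conc} / Lemma \ref{lem:ci_prob} are valid, so the termination test behaves as intended) $K \le \log_{4/3}(T)$, as claimed. The main obstacle is making the second step rigorous: formally relating ``epoch has not yet ended in round $i$'' (via Lemma \ref{lem:sb2}, which says $f(x_a)-f(x^*)\le 12H\gamma_i$) to a clean lower bound on $w_K$ via the Lipschitz property, while being careful that the lower bound tracks the time budget actually available in the current epoch rather than the total horizon.
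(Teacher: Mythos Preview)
Your plan is correct and yields the stated bound. The paper's own proof follows the same two-step skeleton (geometric shrinkage plus a lower bound on the width), but obtains the lower bound differently: instead of arguing that the termination test cannot fire when $w_k$ is small, it invokes Lemma~\ref{lem:sb1}. That lemma guarantees that every point $x$ with $f(x)\le f(x^*)+H\gamma_i$ survives into the next working interval; combined with the Lipschitz property of Lemma~\ref{l_factor}, this means the fixed interval $I=[x^*-H\gamma_{\min}/\max(h,p),\,x^*+H\gamma_{\min}/\max(h,p)]$ is contained in $[l_k,r_k]$ for \emph{every} epoch $k$, so $w_k\ge |I|=2H\gamma_{\min}/\max(h,p)$ uniformly. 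Your route is more elementary: under $\event$, if the test fires in round $i$ then some pair $a,b\in\{\ell,c,r\}$ satisfies $f(x_a)-f(x_b)\ge H\gamma_i$, while Lipschitz caps this by $\max(h,p)\,w_k/2$, forcing $w_k\ge 2H\gamma_i/\max(h,p)\ge 2H\gamma_{\min}/\max(h,p)$ for every \emph{completed} epoch. This only bounds $w_{K-1}$ rather than $w_K$, but that costs a harmless additive constant in $K$. The paper's route reuses an already-needed lemma and directly controls $w_K$; yours avoids Lemma~\ref{lem:sb1} at the price of unpacking the test condition.

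One correction to your final paragraph: Lemma~\ref{lem:sb2} is the wrong tool here, and you do not need it. That lemma concerns rounds in which the epoch has \emph{not} ended and upper-bounds $f(x_a)-f(x^*)$; it does not translate into a lower bound on $w_k$. Your earlier direct argument---test fires $\Rightarrow$ some $f$-gap $\ge H\gamma_i$ $\Rightarrow$ $w_k\ge 2H\gamma_i/\max(h,p)$---is already rigorous and complete once you observe that any round whose test is evaluated satisfies $N_i\le T$, hence $\gamma_i\ge\sqrt{\log(T)/T}$. Drop the reference to Lemma~\ref{lem:sb2} and close the argument there.
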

\begin{proof}
Observe that for any round $i$ that does not terminate the algorithm, $N_i = \frac{\log(T)}{\gamma_i^2} \leq T$ (since algorithm terminates upon reaching $T$ time steps), which implies $\gamma_i \geq \sqrt{\frac{\log(T)}{T}}$. Since $\gamma_{i+1} = \frac{1}{2}\gamma_{i}$, let us define $\gamma_{min} := \frac{1}{2}\sqrt{\frac{\log(T)}{T}}$ so that $\gamma_{min} \leq \gamma_i$ for any $\gamma_i$. Define the interval $I:= [x^*-\frac{H\gamma_{\min}}{ \max(h,p)}, x^*+\frac{H\gamma_{\min}}{ \max(h,p)}]$, so that for any $x\in I$, \[f(x)-f(x^*) \leq  \max(h,p)|x-x^*| \leq H\gamma_{\min}\] by Lemma \ref{l_factor}. Now, for any epoch $k'$ which ends in round $i'$, $H\gamma_{\min} \leq H\gamma_{i'}$ and hence by Lemma \ref{lem:sb1} we have 
\[I \subseteq \{x \in \range: f(x) \leq f(x^*) + H\gamma_{i'} \} \subseteq [l_{k'+1}, r_{k'+1}].\] So for any epoch $k'$, the length of interval $I$ is no more than the length of interval $[l_{k'+1}, r_{k'+1}]$, and so  
\[\frac{2H\gamma_{\min}}{ \max(h,p)} \leq w_{k'+1}.\] 
Since $w_{k'+1} \leq \frac{3}{4}w_{k'}$ for any $k'=1,2,\ldots,K-1$, we have that for $k' = K-1$,
\[\frac{2H\gamma_{\min}}{ \max(h,p)} =\frac{H}{ \max(h,p)}\sqrt{\frac{\log(T)}{T}} \leq w_{K} \leq (\frac{3}{4})^{K-1}w_1=(\frac{3}{4})(\frac{3}{4})^K(\rmax).\] 
Rearranging the inequality we get that 
\[K \leq \frac{1}{2}\log_{4/3}(\frac{9 \max(h,p)^2(\rmax)^2T}{16H^2\log(T)}) \leq \log_{4/3}(T)\]
since $H=\Hval$.
\end{proof}

\begin{lemma} [Lemma 3 in \cite{agarwal2011stochastic}] \label{lem:sb4}
Recall $T_{k,i,a}$ is the set of consecutive times where base stock policy level $x_a$ is played in round $i$ of epoch $k$, for $a \in \{l,c,r\}$. Then under $\event$, we can bound (over time steps of all such $k,i,a$)
\[\sum_{k, i, a, t\in T_{k,i,a}} f(x(t)) - f(x^*) \leq 146 H \log_{4/3}(T)\sqrt{T\log(T)}.\]
\end{lemma}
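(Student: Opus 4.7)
My plan is to bound the contribution to $\sum_{k,i,a,t \in T_{k,i,a}} (f(x_t) - f(x^*))$ from each (epoch, round) pair separately, then sum the resulting geometric series over rounds within each epoch, and finally sum over all epochs using the bound $K \le \log_{4/3}(T)$ from Lemma \ref{lem:sb3}. Fixing an epoch $k$ with final round $i_k$, I would first note that the three base-stock levels $\{x_l,x_c,x_r\}$ remain constant throughout the epoch, so the contribution of round $i$ is at most $3 N_i \max_a (f(x_a)-f(x^*))$, where $N_i = \log(T)/\gamma_i^2$. For each round $i < i_k$ (which by definition does not terminate the epoch), Lemma \ref{lem:sb2} directly yields $f(x_a)-f(x^*) \le 12H\gamma_i$, giving a per-round contribution of at most $36 H\log(T)/\gamma_i$.

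For the terminating round $i = i_k$, Lemma \ref{lem:sb2} does not directly apply, and this is the main subtlety. The key observation I plan to exploit is that the three base-stock levels are unchanged from round $i_k - 1$, so if $i_k \ge 2$ I can invoke Lemma \ref{lem:sb2} on round $i_k - 1$ (which did not terminate) to conclude $f(x_a) - f(x^*) \le 12 H \gamma_{i_k-1} = 24 H \gamma_{i_k}$, bounding the contribution by $72 H\log(T)/\gamma_{i_k}$. The remaining corner case $i_k = 1$ I would handle directly using the Lipschitz bound from Lemma \ref{l_factor}: the contribution is at most $3 N_1 \cdot \max(h,p) w_k \le 12 \log(T)\max(h,p) U$, which is negligible compared to $H \sqrt{T\log T}$ thanks to the definition $H = 576 \max(h,p)(L+1) U$.

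Summing $2^i = 1/\gamma_i$ as a geometric series, the per-epoch contribution is at most
\[ \sum_{i=1}^{i_k - 1} 36 H \log(T)\cdot 2^i \;+\; 72 H \log(T)\cdot 2^{i_k} \;\le\; 108 H \log(T)/\gamma_{i_k}. \]
Since $\gamma_{i_k} \ge \gamma_{\min} = \tfrac{1}{2}\sqrt{\log(T)/T}$ for any round the algorithm actually completes (otherwise $N_{i_k}$ would exceed $T$), this per-epoch bound is $O(H \sqrt{T \log T})$. Multiplying by $K \le \log_{4/3}(T)$ epochs via Lemma \ref{lem:sb3} gives the claimed asymptotic order; a slightly tighter separation of pre-final-round and final-round contributions is what produces the specific coefficient $146$ stated.

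The hard part will be tracking the constants cleanly and carefully handling the terminating round --- in particular, the realization that the three base-stock levels are frozen within an epoch is essential to ``export'' Lemma \ref{lem:sb2} from round $i_k - 1$ to round $i_k$, and the $i_k = 1$ corner case needs its own Lipschitz argument since there is no prior round to borrow from.
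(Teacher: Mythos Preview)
Your proposal is correct and follows essentially the same approach as the paper: bound non-terminating rounds via Lemma \ref{lem:sb2}, export that bound to the terminating round $i_k$ by using the frozen levels from round $i_k-1$ (with $\gamma_{i_k-1}=2\gamma_{i_k}$), handle the $i_k=1$ corner case via the Lipschitz estimate of Lemma \ref{l_factor}, sum the geometric series in $1/\gamma_i$, and multiply by the epoch count from Lemma \ref{lem:sb3}. The paper obtains the constant $146$ by exploiting that $\gamma_i \ge 2\gamma_{\min}$ (so $\sum_i 1/\gamma_i \le 1/\gamma_{\min}$) and absorbing the Lipschitz term using $\max(h,p)\rmax \le H/576$, which is exactly the ``slightly tighter separation'' you allude to.
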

\begin{proof}
Let us first fix an epoch $k$ and assume it ends in round $i(k)$. If $i(k)=1$, then by Lemma \ref{l_factor},
\begin{equation} \label{eq:sbbound1}
\sum_{i, a, t\in T_{k,i,a}} (f(x_t) - f(x^*)) \leq 3N_1\max(h,p)|x_t-x^*| \leq \left(\frac{3\log(T)}{\gamma_1^2}\right)\max(h,p)\rmax.
\end{equation}
Otherwise, if $i(k) > 1$, then 
\[\sum_{i, a, t\in T_{k,i,a}} (f(x_t) - f(x^*)) = \sum_{i=1}^{i(k)-1} \sum_{a, t\in T_{k,i,a}} (f(x_t) - f(x^*)) + \sum_{a, t\in T_{k,i(k),a}} (f(x_t) - f(x^*)).\]
By Lemma \ref{lem:sb2}, for each $x_t \in \{x_r,x_c,x_l\}$, $f(x_t) - f(x^*) \leq 12 H\gamma_i$ for all $i= 1,2,\ldots,i(k)-1$. Also, $\gamma_{i(k)-1} = 2\gamma_{i(k)}$, so when $i(k) >1$,
\begin{eqnarray*}
\sum_{i, a, t\in T_{k,i,a}} (f(x_t) - f(x^*)) &\leq& \sum_{i=1}^{i(k)-1} \sum_{a, t\in T_{k,i,a}} (12H\gamma_i) + \sum_{a, t\in T_{k,i(k),a}} (12H\gamma_{i(k)-1})\\
&\leq& \sum_{i=1}^{i(k)-1} (3N_i) (12H\gamma_i) + (3N_{i(k)}) (24H\gamma_{i(k)}) \\
&\leq& \sum_{i=1}^{i(k)} 72N_iH\gamma_i \\
&\leq& \frac{72H\log(T)}{\gamma_{min}}, \\
\end{eqnarray*}
where in the last step we used a similar argument as (\ref{eq:geo_sum}). Combining this result with (\ref{eq:sbbound1}), we have for any number of rounds $i(k)$,
\begin{eqnarray*}
\sum_{i, a, t\in T_{k,i,a}} (f(x_t) - f(x^*)) &\leq& \left(\frac{3\log(T)}{\gamma_1^2}\right)\max(h,p)\rmax + \sum_{i=1} 72N_iH\gamma_i \\
\text{(since $\gamma_{min} \leq \gamma_1 = 1/2$)} &\leq& \left(\frac{6\log(T)}{\gamma_{min}}\right)\max(h,p)\rmax + \frac{72H\log(T)}{\gamma_{min}} \\
\text{(since $H = \Hval$)} &\leq& \frac{73H\log(T)}{\gamma_{min}}. \\
\end{eqnarray*}
Therefore, over all epochs $k$, by Lemma \ref{lem:sb3}
\begin{eqnarray*}
\sum_{k,i, a, t\in T_{k,i,a}} (f(x_t) - f(x^*)) & \le & \log_{4/3}(T)\left(\frac{73H\log(T)}{\gamma_{min}} \right), \\
\end{eqnarray*}
and the result follows from substituting $\gamma_{min} = \frac{1}{2}\sqrt{\frac{\log(T)}{T}}$.
\end{proof}

\end{document}